\documentclass[11pt,twoside]{article}
\usepackage{url}
\usepackage{graphicx}
\usepackage{makeidx}
\usepackage{multicol}
\usepackage{amssymb}
\usepackage{subfigure}
\usepackage{algorithm}
\usepackage{algorithmic}
\usepackage{jair}
\usepackage{theapa}

\input epsf 
\jairheading{28}{2007}{299-348}{07/06}{03/07}
\ShortHeadings{Mapping Calendar Expressions to Minimal Periodic Sets}
{Bettini, Mascetti \& Wang}
\firstpageno{299}

\graphicspath{{Figure/}}

\raggedbottom
\makeindex

\newtheorem{theo}{Theorem}

\newtheorem{prop}{Proposition}

\newtheorem{definition}{Definition}
\newtheorem{example}{Example}
\newenvironment{proof}{\noindent{\em Proof.}}{\hfill $\Box$\vspace{1ex}}

\newcommand{\up}[1]{\lceil #1\rceil}
\newcommand{\down}[1]{\lfloor #1\rfloor}

\newcommand{\groups}{\ \underline{\triangleleft}\ }
\newcommand{\finer}{\preceq}

\newcommand{\gpinto}{\ \bar{{\underline{\triangleleft}\ }}}

\newcommand{\Lset}[1]{{\cal L}_{#1}} 
\newcommand{\Lbar}[1]{\overline{{\cal L}}_{#1}} 
\newcommand{\Lcap}[1]{\hat{\cal L}_{#1}} 
\newcommand{\Ltil}[1]{\widetilde{\cal L}_{#1}}

\begin{document}
\title{Supporting Temporal Reasoning by Mapping \\ Calendar
Expressions to Minimal Periodic Sets}

\author{\name Claudio Bettini \email bettini@dico.unimi.it\\
       \name Sergio Mascetti \email mascetti@dico.unimi.it \\
       \addr Dipartimento di Informatica e Comunicazione, Universit\`a di Milano\\
       Via Comelico, 39, 20135, Milan, Italy
       \AND
       \name X.~Sean Wang \email Sean.Wang@uvm.edu \\
       \addr Department of Computer Science, University of Vermont\\
       33 Colchester Avenue, Burlington, VT, 05405 USA}

\date{}
\maketitle
                     
\begin{abstract}
In the recent years several research efforts have focused on the
concept of time granularity and its applications. A first stream of
research investigated the mathematical models behind the notion of granularity and
the algorithms to manage temporal data based on those models. A
second stream of research investigated symbolic formalisms providing
a set of algebraic operators to define granularities in a compact
and compositional way. However, only very limited manipulation
algorithms have been proposed to operate directly on the algebraic
representation
making it unsuitable to use the symbolic formalisms in applications
that need manipulation of granularities. 

This paper aims at filling the gap between 
the
results from these two streams of research, by providing an efficient
conversion from the algebraic representation to the equivalent
low-level representation based on the mathematical models.  In
addition, the conversion returns a minimal representation in terms of
period length.  Our results have a major practical impact: users can
more easily define arbitrary granularities in terms of algebraic
operators, and then access granularity reasoning
and other services operating efficiently on the equivalent, minimal low-level
representation.  As an example, we illustrate the application to
temporal constraint reasoning with multiple granularities.

{}From a technical point of view, we propose an hybrid algorithm that
interleaves the conversion of calendar subexpressions into periodical
sets with the minimization of the period length. The algorithm returns
set-based granularity representations having minimal period length,
which is the most relevant parameter for the performance of the
considered reasoning services.  Extensive experimental work supports
the techniques used in the algorithm, and shows the efficiency and
effectiveness of the algorithm.
\end{abstract}

\section{Introduction}
\label{ch:intro}

According to a 2006 research by Oxford University Press, the word
\emph{time} has been found to be the most common noun in the English
language, considering diverse sources on the 
Internet
including
newspapers, journals, fictions and weblogs. What is somehow surprising is
that among the 25 most common nouns we find time granularities
like \emph{day}, \emph{week}, \emph{month} and \emph{year}.  We are pretty sure that many other
time granularities like \emph{business day}, \emph{quarter}, \emph{semester}, etc.
would be found to be quite 
frequently used
in natural 
languages.
However, the way computer
applications deal with these concepts is still very naive and mostly
hidden in program code and/or based on limited and sometimes imprecise
calendar support.  Temporal representation and reasoning has been for
a long time an AI research topic aimed at providing a formal framework
for common sense reasoning, natural language understanding, planning,
diagnosis and many other complex tasks involving time data management.  
Despite the many relevant contributions, time granularity representation and reasoning support
has very often been ignored or over-simplified. 
In the very active area of temporal constraint satisfaction, most
proposals implicitly assumed that adding support for granularity was a
trivial extension.  Only quite recently it was recognized that this is
not the case and specific techniques were proposed \cite{BWJ:aij02}.
Even the intuitively simple task of deciding whether a specific
instant is part of a time granularity 
can be tricky when arbitrary user-defined
granularities like e.g., \emph{banking days}, or \emph{academic semesters} are
considered.

Granularities and periodic patterns in terms of granularities are
playing a role even in emerging application areas like
inter-organizational workflows 
and personal information management (PIM).  For example, inter-organizational workflows
need to model and monitor constraints like: \emph{Event2 should occur no later than two
business days after the occurrence of Event1}.
In the context of PIM, current calendar applications, even on mobile
devices, allow the user to specify quite involved periodical patterns
for the recurrence of events. For example, it is possible to schedule
an event every last Saturday
of 
every two months.  The complexity of the
supported patterns has been increasing in the last years, and the
current simple interfaces are showing their limits.  They are
essentially based on a combination of recurrences based on one or two
granularities taken from a fixed set (days, weeks, months, and years).
We foresee the possibility for significant extensions of these
applications by specifying recurrences over user-defined
granularities. For example, the user may define (or upload from a
granularity library) the granularity corresponding to the academic
semester of the school he is teaching at, and set the date of the
finals as the last Monday of each semester. A bank 
may
want to define its 
\emph{banking days} granularity and some of the bank
policies may 
then
be formalized as recurrences in terms of that
granularity. Automatically generated appointments from these policies
may appear on the 
devices
of bank employees involved in
specific procedures.  We also foresee the need to show a user
preferred view of the calendar. With current standard applications the
user has a choice between a business-day limited view and a complete
view, but why not enabling a view based on the users's
\emph{consulting-days}, for example?  
A new perspective in the use of
mobile devices may also result from 
considering the time span in which activities are supposed to be executed
(expressed in arbitrary granularities), and having software agents on
board to alert about constraints that may be violated, even based on
contextual information like the user location or traffic conditions.
This scenario highlights three main requirements: a) a sufficiently
expressive formal model for time 
granularity,
b) a convenient way to
define new time granularities, and c) efficient reasoning tools over
time granularities.

Consider a). In
the last decade significant efforts have been made to provide
formal models for the notion of time granularity and to devise algorithms
to manage temporal data based on those models.
In addition to \emph{logical} approaches
\cite{Montanari-Thesis,CombiJLC04}, a framework based on periodic-set
representations has been extensively studied \cite{BJW:book}, and more
recently an approach based on strings and automata was introduced
\cite{Wijsen:AAAI2000,BresolinMP04}.  We are mostly interested in the
last two approaches because they support the effective computation of
basic operations on time granularities. In both cases the
representation of granularities can be considered as a
\emph{low-level} one, with a rather involved specification in terms of
the instants of the time domain.

Consider requirement b) above.
Users may have a hard time in defining granularities in formalisms based on 
low-level representations, and to interpret the output of operations.
%
It is clearly unreasonable to ask users to specify granularities by
linear equations or other mathematical formalisms that operate
directly in terms of instants or of granules of a fixed time granularity. 
Hence, a second stream of research
investigated more \emph{high-level} symbolic formalisms providing a
set of algebraic operators to define granularities in a compact and
compositional way.  The efforts on this task started even before the
research on formal models for granularity \cite{Leban-et-al:86,Niezette:92} and
continued as a parallel stream of research \cite{BD:amai00,NWJ:amai02,Ter:TKDE03,TauZaman-SPE07}.

Finally, let us consider requirement c) above. Several inferencing
operations have been defined on low-level representations, including
equivalence, inclusion between granules in different granularities,
and even complex inferencing services like constraint propagation
\cite{BWJ:aij02}.  Even for simple operations no general method is
available operating directly on the high level representation.
Indeed, in some cases, the proposed methods cannot exploit the
structure of the expression and require the enumeration of granules,
which may be very inefficient. This is the case, for example, of the
granule conversion methods presented by Ning e at. \citeyear{NWJ:amai02}. Moreover,
we are not aware of any method to perform other operations, such as
equivalence or intersection of sets of granules, directly in terms of
the high level representation.

The major goal of this paper is to provide a unique framework to
satisfy the requirements a), b), and c) identified above, by
adding to the existing results a smart and efficient technique to
convert granularity specifications from the high-level algebraic
formalism to the low-level one, for which many more reasoning tools
are available.
%
%
In particular, in this paper we focus on the conversion from the
high-level formalism called \emph{Calendar Algebra} \cite{NWJ:amai02}
to the low-level formalism based on periodical sets \cite{BJW:book,BWJ:aij02}.
Among the several proposals for the high-level (algebraic) specification of granularities,
the choice of Calendar Algebra has two main motivations: first, it allows the user to
express a large class of granularities; For a
comparison of the expressiveness of Calendar Algebra with other
formalisms see \cite{BJW:book}. Second, it provides the richest set of algebraic operations
that are designed to reflect the intuitive ways in which users define new granularities.
A discussion on the actual usability of this tool and on how it could be 
enhanced by a graphical user interface can be found in Section~\ref{subsec:CalDef}.
%
The choice of the low-level formalism based on periodic-sets 
also
has two main motivations: first, an efficient implementation of all
the basic operations already exists and has been extensively
experimented \cite{BMP-LNAI05}; second, it is the only one currently
supporting the complex operations on granularities needed for
constraint satisfaction, as it will be illustrated in more detail in Section~\ref{sec:gstp}.


%
%

%
%

The technical contribution of this paper is a hybrid algorithm
that interleaves the conversion of calendar subexpressions into
periodical sets with a step for period minimization. 
A central phase
of our conversion procedure is to derive, for each algebraic subexpression,
the periodicity of the output set.
This periodicity is used to build the periodical representation of the subexpression
that can be recursively used as operand of other expressions.
Given a calendar algebra expression, the
algorithm returns set-based granularity representations having
minimal period length. 
%
The period length is the most relevant
parameter for the performance both of basic operations on
granularities and of more specialized ones like the operations used by
the constraint satisfaction service.
Extensive experimental work
reported in this paper
validates the techniques used in the
algorithm, by showing, among other things, that
(1)
even large calendar
expressions can be efficiently converted, and 
(2)
less
precise
conversion formulas may lead to unacceptable computation time.
This latter property shows the importance of carefully and accurately
designed conversion formulas. Indeed, conversion formulas may seem
trivial if the length of periodicity is not a concern. In designing our
conversion formulas, we made an effort to reduce the period length of the
resulting granularity representation, and thus render the whole conversion process
computationally efficient.
%

%

%

%

In the next section we define granularities; several interesting
relationships among them are highlighted and the periodical set
representation is formalized.  In Section~\ref{ch:CalAlg} we define
Calendar Algebra and present its operations. In
Section~\ref{ch:CalAlg2PSet} we describe the conversion process: after
the definition of the three steps necessary for the conversion, for
each algebraic operation we present the formulas to perform each step.
In Section~\ref{sec:minimality} we discuss the period minimality
issue, and we report experimental results based on a full
implementation of the conversion algorithm and of its extension
ensuring minimality. In Section~\ref{sec:applications} we further
motivate our work 
by
presenting
a complete application scenario. Section~\ref{sec:rel} reports 
the
related work, and Section~\ref{sec:conc} concludes the paper.

\section{Formal Notions of Time Granularities}
\label{ch:gran}

Time granularities include very common ones like hours, days, weeks, months and years,
as well as the evolution and specialization of these granularities
for specific contexts or applications.
Trading days, banking days, and academic semesters are just few examples
of specialization of granularities that have become quite common
when describing policies and constraints.

\subsection{Time Granularities}
\label{sec:TimeGranularities}
A comprehensive formal study of time granularities and their relationships can be found in \cite{BJW:book}. In this paper,
we only introduce notions that are essential to show our results.
In particular, we report here the notion of \emph{labeled granularity} which was proposed for the specification of a calendar algebra \cite{BJW:book,NWJ:amai02};
we will show later how any \emph{labeled granularity} can be reduced to a more standard notion of granularity, like the one used by Bettini et al. \citeyear{BWJ:aij02}.

Granularities are defined by grouping sets of instants into \emph{granules}.
For example, each granule of the granularity \texttt{day} specifies the set of instants included in a particular day.
A label is used to refer to a particular granule.
The whole set of time instants is called \emph{time domain}, and for the purpose of this paper the domain can be an arbitrary infinite set with a total order relationship, $\le$.  

\begin{definition}
\label{def:labeledGran}
A {\em labeled granularity} $G$ is a pair $(\Lset{G}, M)$, where
$\Lset{G}$ is a subset of the integers, and $M$ is a mapping from
$\Lset{G}$ to the subsets of the time domain such that for each pair
of integers $i$ and $j$ in $\Lset{G}$ with $i<j$, if
$M(i)\ne\emptyset$ and $M(j)\ne\emptyset$, then (1) each element in
$M(i)$ is less than every element of $M(j)$, and (2) for each integer
$k$ in $\Lset{G}$ with $i<k<j$, $M(k)\ne \emptyset$.
\end{definition}

The former condition guarantees the ``monotonicity'' of the
granularity; the latter is used to introduce the bounds (see
Section~\ref{sec:GranularitiesRelationships}).

We call $\Lset{G}$ the \emph{label set} and for each $i \in \Lset{G}$
we call $G(i)$ a \emph{granule}; if $G(i)\neq \emptyset$ we call it a
\emph{non-empty granule}.
When $\Lset{G}$ is exactly the integers, the granularity is called ``full-integer labeled''. When $\Lset{G}=\mathbb{Z}^+$ we have the same notion of granularity as used in several applications, e.g., \cite{BWJ:aij02}.
For example, following this labeling schema, if we assume to map \texttt{day}(1) to the
subset of the time domain corresponding to January 1, 2001, \texttt{day}(32) would be mapped to February 1, 2001, \texttt{b-day}(6) to January 8, 2001 (the sixth business day), and \texttt{month}(15) to March 2002.
The generalization to arbitrary label sets has been introduced mainly to facilitate conversion operations in the algebra, however our final goal is the conversion of a labeled granularity denoted by a calendar expression into a ``positive-integer labeled'' one denoted by a periodic formula.

\subsection{Granularity Relationships}
\label{sec:GranularitiesRelationships}
Some interesting relationships between granularities follows. The definitions are extended from the ones presented by Bettini et al. \citeyear{BJW:book} to cover the notion of labeled granularity.

\begin{definition}
\label{def:goups-into}
If $G$ and $H$ are labeled granularities, then $G$ is said to \emph{group into} $H$, denoted $G\groups H$, if for each non-empty granule $H(j)$, there exists a (possibly infinite) set $S$ of labels of $G$ such that $H(j)=\bigcup_{i\in S} G(i)$.
\end{definition}
Intuitively, $G \groups H$ means that each granule of $H$ is a union of
some granules of $G$.
For example, \texttt{day}$\groups$\texttt{week} since a week is composed of 7
days and \texttt{day}$\groups$\texttt{b-day} since each business day is a day.

\begin{definition}
\label{def:finer-than}
If $G$ and $H$ are labeled granularities, then $G$ is said to be
\emph{finer than} $H$, denoted $G\finer H$, if for each
granule $G(i)$, there exists a granule $H(j)$ such that $G(i)\subseteq H(j)$.
\end{definition}
For example \texttt{business-day} is finer than \texttt{day}, and also
finer than \texttt{week}.

We also say that $G$ \emph{partitions} $H$ if $G\groups H$ and
$G\finer H$. Intuitively $G$ partitions $H$ if $G\groups H$ and there
are no granules of $G$ other than those included in granules of $H$.
For example, both \texttt{day} and \texttt{b-day} group into
\texttt{b-week} (business week, i.e., the business day in a week), but
\texttt{day} does not partition \texttt{b-week}, while \texttt{b-day}
does.

\begin{definition}
  A labeled granularity $G_1$ is a \emph{label-aligned subgranularity}
  of a labeled granularity $G_2$ if the label set $\Lset{G_1}$ of
  $G_1$ is a subset of the label set $\Lset{G_2}$ of $G_2$ and for
  each $i$ in $\Lset{G_1}$ such that $G_1(i)\ne \emptyset$, we have
  $G_1(i)=G_2(i)$.
\end{definition}
Intuitively, $G_1$ has a subset of the granules of $G_2$ and those granules have 
the same label in the two granularities.

Granularities are said to be \textit{bounded} when $\Lset{G}$ has a
first or last element or when $G(i)=\emptyset$ for some $i\in
\Lset{G}$.
We assume the existence of an unbounded bottom granularity, denoted by
$\bot$ which is full-integer labeled and groups into every other
granularity in the system.

There are time domains such that, given any set of granularities,
it is always possible to find a bottom one; for example, it can be
easily proved that this property holds for each time domain that has
the same cardinality as the integers. On the other hand, the same
property does not hold for other time domains (e.g. the reals).
However, the assumption about the existence of the bottom granularity
is still reasonable since we address problems in which granularities
are defined starting from a bottom one. The definition of a
\emph{calendar} as a set of granularities that have the same bottom
granularity \cite{BJW:book} captures this idea.

\subsection{Granularity Conversions}
When dealing with granularities, we often need to determine the granule (if any) of a granularity $H$ that covers a given granule $z$ of another granularity $G$.
For example, we may wish to find the month (an interval of the absolute time) that includes a given week (another interval of the absolute time). 

This transformation is obtained with the \emph{up} operation. Formally, for each label $z \in \Lset{G}$, $\up{z}_G^H$ is undefined if
$\nexists z' \in \Lset{H}$ s.t. $G(z) \subseteq H(z')$
; otherwise, $\up{z}_G^H=z'$, where $z'$ is the unique index value such that $G(z) \subseteq H(z')$.
The uniqueness of $z'$ is guaranteed by the monotonicity
\footnote{Condition (1) of Definition~\ref{def:labeledGran}.}
of granularities.
As an example, $\up{z}_{\texttt{second}}^{\texttt{month}}$ gives the month that includes the second $z$.
Note that while $\up{z}_{\texttt{second}}^{\texttt{month}}$ is always defined, $\up{z}_{\texttt{week}}^{\texttt{month}}$ is undefined if week $z$ falls between two months. 
Note that if $G \finer H$, then the function $\up{z}_G^H$ is defined for each index value $z$.
For example, since $\texttt{day} \finer \texttt{week}$, $\up{z}_{\texttt{day}}^{\texttt{week}}$ is always defined, i.e.,
for each day we can find the week that contains it.
The notation $\up{z}^{H}$ is used when the source granularity can be left implicit (e.g., when we are dealing with a fixed set of granularities having a distinguished bottom granularity).
  
Another direction of the above transformation is the \emph{down}
operation: Let $G$ and $H$ be granularities such that $G \groups H$,
and $z$ an integer.  Define $\down{z}_G^H$ as the set $S$ of labels of
granules of $G$ such that $\bigcup_{j \in S} G(j) =
H(z)$.\footnote{This definition is different from the one given by Bettini et al
  \citeyear{BJW:book} since it also considers non contiguous granules of $G$.}
This function is useful for finding, e.g., all the days in a month.

\subsection{The Periodical Granules Representation}
\label{sec:ThePeriodicalGranulesRepresentation}
A central issue in temporal reasoning is the possibility of finitely representing infinite granularities. The definition of granularity provided above is general and expressive but it may be impossible to provide a finite representation of some of the granularities. Even labels (i.e., a subset of the integers) do not necessarily have a finite representation.

A solution has been first proposed by Bettini et al. \citeyear{BJW:book}. The idea is that most of the commonly used granularities present a periodical behavior; it means that there is a certain pattern that repeats periodically. This feature has been exploited to provide a method for finitely describing granularities. The formal definition is based on the \emph{periodically groups into} relationship.

\begin{definition}\label{def:pgroup}
A labeled granularity $G$ \emph{groups periodically into} a labeled granularity $H$ ($G \gpinto H$) if $G \groups H$ and there exist positive integers $N$ and $P$ such that 

(1) for each label $i$ of $H$, $i+N$ is a label of $H$ 
unless $i+N$ is greater than the greatest label of $H$,
and

(2) for each label $i$ of $H$, if $H(i)=\bigcup_{r=0}^{k}G(j_r)$ and
$H(i+N)$ is a non-empty granule of $H$ then $H(i+N)=\bigcup_{r=0}^k
G(j_r + P)$, and

(3) if $H(s)$ is the first non-empty granule in $H$ (if exists), then $H(s+N)$
is non-empty.
\end{definition}

The \textit{groups periodically into} relationship is a special case
of the group into characterized by a periodic repetition of the
``grouping pattern'' of granules of $G$ into granules of $H$. Its
definition may appear complicated but it is actually quite simple.
Since $G$ groups into $H$, any granule $H(i)$ is the union of some
granules of $G$; for instance assume it is the union of the granules
$G(a_1), G(a_2), \ldots ,G(a_k)$. Condition (1) ensures that the label
$i+N$ exists (if it not greater than the greatest label of $H$) while
condition (2) ensures that, if $H(i+N)$ is not empty, then it is the
union of $G(a_1 + P), G(a_2 + P), \ldots ,G(a_k + P)$.
We assume that $\forall{r}=0\ldots k$, $(j_r + P) \in \Lset{G}$;
if not, the conditions are considered not satisfied.
Condition (3) simply says that there is at least one of these repetitions.

We call each pair $P$ and $N$ in Definition~\ref{def:pgroup}, a
\emph{period length} and its associated \emph{period label distance}.
We also indicate with $R$ the number of granules of $H$ corresponding
to each groups of $P$ consecutive granules of $\bot$. More formally
$R$ is equal to the number of labels of $H$ greater or equal than $i$
and smaller than $i + N$ where $i$ is an arbitrary label of $H$.  Note
that $R$ is not affected by the value of $i$.

The period length and the period label distance are not unique; more
precisely, we indicate with $P_H^G$ the period length of $H$ in terms
of $G$ and with $N_H^G$ the period label distance of $H$ in terms of
$G$; the form $P_H$ and $N_H$ is used when $G=\bot$.
Note that the period length is an integer value. For simplicity we
also indicate with one period of a granularity $H$ a set of $R$
consecutive granules of $H$.

In general, the \emph{periodically groups into} relationship guarantees that granularity $H$ 
can be finitely described (in terms of granules of $G$).

\begin{definition}
\label{def:finite-representation}
If $G \gpinto H$, then $H$ can be finitely described by providing:
(i) a value for $P$ and $N$;
(ii) the set $\Lset{}^P$ of labels of $H$ in one period of $H$;
(iii) for each $a\in \Lset{}^P$, the finite set $S_a$ of labels of
$G$, such that $H(a) = \bigcup_{i \in S_a}G(i)$;
(iv) the labels of first and last non-empty granules in $H$,
     if their values are not infinite. 
\end{definition}

In this representation, the granules that have labels in $\Lset{}^P$
are the only ones that need to be explicitly represented; we call
these granules the \textit{explicit granules}.
     
     If a granularity $H$ can be represented as a periodic set of
     granules of a granularity $G$, then there exists an infinite
     number of pairs $(P_H^G, N_H^G)$ for which the
     periodically groups into relation is satisfied. If the relation
     is satisfied for a pair $(P, N)$, then it can be proved that it
     can also be satisfied for each pair $(\alpha P, \alpha N)$ with
     $\alpha \in \mathbb{N}^+$.

\begin{definition}\label{def:minimal}
  A periodic representation of a granularity $H$ in terms of $G$ 
  is called \emph{minimal}
  if the period length $P$ used in the representation has
  the smallest value among the period lengths appearing in all the pairs $(P_H^G, N_H^G)$ 
  for which $H$ periodically groups into $G$.
\end{definition}


If $H$ is fully characterized in terms of $G$, it is possible to
derive the composition, in terms of $G$, of any granule of $H$.
Indeed, if $\Lset{}^P$ is the set of labels of $H$ with values in
$\{b, \ldots, b+N_H^G-1\}$, and we assume $H$ to be unbounded, the
description of an arbitrary granule $H(j)$ can be obtained by the
following formula.
Given $j' = [(j-1)\bmod N_{H}^{G}] +1$ and \\[2mm]
\( k=\left \{ \begin{array}{ll} 
    \left(\left\lfloor \frac{b-1}{N_{H}^{G}}\right\rfloor\right) \cdot N_{H}^{G} + j' & \mbox{if } \left(\left\lfloor \frac{b-1}{N_{H}^{G}}\right\rfloor\right) \cdot N_{H}^{G} + j' \geq b\\
    \\
    \left(\left\lfloor \frac{b-1}{N_{H}^{G}}\right\rfloor +1 \right) \cdot N_{H}^{G} +
    j' &\mbox{otherwise}
      \end{array}
      \right.
\)
\\[2mm]
we have
\[
   H(j) = \bigcup_{i \in S_{k}} G\left(P_H^G \cdot \left\lfloor  \frac{j-1}{N_{H}^{G}} \right\rfloor +i - P_H^G \cdot \left\lfloor  \frac{k-1}{N_{H}^{G}} \right\rfloor \right).
\]

\begin{example}
Figure~\ref{fig:pginto} shows granularities $\texttt{day}$ and $\texttt{week\_parts}$ i.e., the granularity that, for each week, contains a granule for the working days and a granule for the weekend.
For the sake of simplicity, we denote $\texttt{day}$ and $\texttt{week\_parts}$ with $D$ and $W$ respectively.
Since $D \gpinto W$, $W$ is fully characterized in terms of $D$.
Among different possible representations, in this example we decide to represent $W$ in terms of $D$ by $P_W^D = 7$, $N_W^D=2$, $\Lset{W}^P=\{3, 4\}$, $S_3 = \{8, 9, 10, 11, 12\}$ and $S_4=\{13, 14\}$.
The composition of each granule of $W$ can then be easily computed; For example the composition of $W(6)$ is given by the formula presented above with $j' = 2$ and $k = 4$. Hence $W(6) = D(7 \cdot 2 + 13 - 7 \cdot 1) \cup D(7 \cdot 2 + 14 - 7 \cdot 1) = D(20) \cup D(21)$.
\end{example}

\begin{figure}[ht!]
        \centering
                \includegraphics[width=.8\columnwidth]{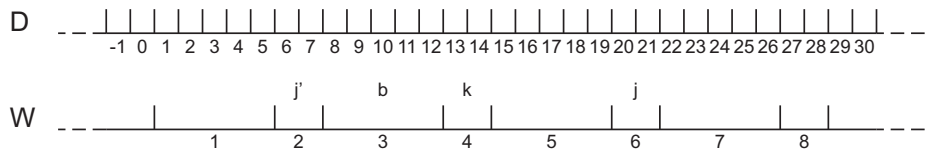}
        \caption{{\em Periodically groups into \/} example}
        \label{fig:pginto}
\end{figure}

\section{Calendar Algebra}
\label{ch:CalAlg}

Several high-level symbolic formalisms have been proposed to represent
granularities \cite{Leban-et-al:86,Niezette:92}.

%
In this work we consider the formalism proposed by Ning et al.
\citeyear{NWJ:amai02} called \textit{Calendar Algebra}.
In this approach a set of
algebraic operations is defined; each operation generates a new
granularity by manipulating other granularities that have already been
generated. The relationships between the operands and the resulting
granularities are thus encoded in the operations.  All granularities
that are generated directly or indirectly from the bottom granularity
form a \textit{calendar}, and these granularities are related to each
other through the operations that define them.  In practice, the
choices for the bottom granularity include \texttt{day},
\texttt{hour}, {\tt second}, {\tt microsecond} and other
granularities, depending on the accuracy required in each application
context.


In the following we illustrate the calendar algebra operations presented by Ning et al. \citeyear{NWJ:amai02} together with some restrictions introduced by Bettini et al. \citeyear{BMW:time04}.

\subsection{The Grouping-Oriented Operations}\label{sec:TheGroupingOrientedOperations}
The calendar algebra consists of the following two kinds of
operations: the {\em grouping-oriented operations} and the {\em
  granule-oriented operations}.
  The grouping-oriented operations
group certain granules of a granularity together to form new granules
in a new granularity.

\subsubsection{The Grouping Operation}
Let $G$ be a full-integer labeled granularity, and $m$ a positive
integer.  The grouping operation $\textsl{Group}_{m}(G)$ generates a
new granularity $G'$ by partitioning the granules of $G$ into
$m$-granule groups and making each group a granule of the resulting
granularity. More precisely, $G' = \textsl{Group}_{m}(G)$ is the
granularity such that for each integer $i$,
\[G'(i) = \bigcup_{j=(i-1)\cdot m+1}^{i\cdot m}G(j).\]
For example, given granularity \texttt{day}, granularity \texttt{week}
can be generated by the calendar algebra expression
$\texttt{week} = \textsl{Group}_{7}(\texttt{day})$
if we assume that $\texttt{day}(1)$ corresponds to Monday, i.e., the
first day of a week.

\subsubsection{The Altering-tick Operation}
\label{sub:altering-tick}
Let $G_{1}$, $G_{2}$ be full-integer labeled granularities, and $l$,
$k$, $m$ integers, where $G_{2}$ partitions $G_{1}$, and $1 \leq l
\leq m$.  The altering-tick operation $Alter_{l,k}^{m}(G_{2}, G_{1})$
generates a new granularity by periodically expanding or shrinking
granules of $G_{1}$ in terms of granules of $G_{2}$.  Since $G_{2}$
partitions $G_{1}$, each granule of $G_{1}$ consists of some
contiguous granules of $G_{2}$.  The granules of $G_{1}$ can be
partitioned into $m$-granule groups such that $G_{1}(1)$ to $G_{1}(m)$
are in one group, $G_{1}(m+1)$ to $G_{1}(2m)$ are in the following
group, and so on.  The goal of the altering-tick operation is to
modify the granules of $G_{1}$ so that the $l$-th granule of every
$m$-granule group will have $\vert k \vert$ additional (or fewer
when $k < 0$) granules of $G_{2}$.  For example, if $G_1$ represents
30-day groups (i.e., $G_1=\textsl{Group}_{30}(\texttt{day})$) and we
want to add a day to every $3$-rd month (i.e., to make March to have
31 days), we may perform $\textsl{Alter}_{3,1}^{12}(\texttt{day},
G_{1})$.


The altering-tick operation can be formally described as follows. For each integer $i$ such that $G_1(i)\ne\emptyset$, let $b_{i}$ and $t_{i}$ be the integers such that $G_{1}(i) = \cup_{j=b_{i}}^{t_{i}}G_{2}(j)$ (the integers $b_{i}$ and $t_{i}$ exist because $G_{2}$ partitions $G_{1}$).
Then $G'= \textsl{Alter}_{l,k}^{m}(G_{2}, G_{1})$ is the granularity such that for each integer $i$, let $G'(i)=\emptyset$ if $G_1(i)=\emptyset$, and otherwise let
\[G'(i) = \bigcup_{j=b_{i}'}^{t_{i}'} G_{2}(j),\] where 
\[b_{i}'=\left \{ \begin{array}{ll}
      b_{i} + (h-1) \cdot k, & \mbox{if } i = (h-1)\cdot m + l,\\
      b_{i} + h \cdot k,     &\mbox{otherwise},
      \end{array}
      \right.
\]
\[t_{i}' = t_{i} + h\cdot k,\]
and 
\[h = \left\lfloor \frac{i-l}{m} \right\rfloor + 1.\]

\begin{example}
Figure~\ref{fig:AlterExample} shows an example of the \texttt{Alter} operation.
Granularity $G_1$ is defined by $G_1=\textsl{Group}_{5}(G_2)$
and granularity $G'$ is defined by $G' = \textsl{Alter}_{2, -1}^{2} (G_2, G_1)$,
which means shrinking the second one of every two granules of $G_1$ by one granule of $G_2$.
\end{example}

\begin{figure}[ht]
        \centering
                \includegraphics[width=.8\columnwidth]{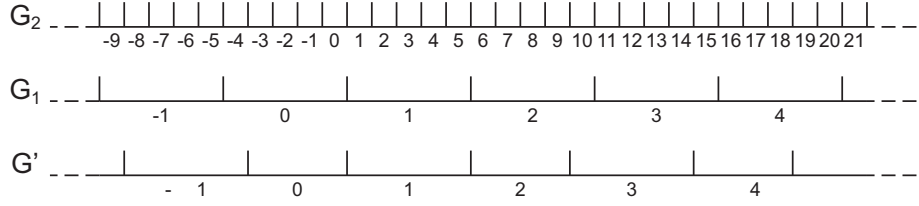}
        \caption{\textsl{Altering-tick} operation example}
        \label{fig:AlterExample}
\end{figure}


The original definition of altering-tick given by Ning et al. \citeyear{NWJ:amai02} as reported above, has the following problems when an arbitrary negative value for $k$ is used: (1) It allows the definition of a $G'$ that is not a full-integer labeled granularity and (2) It allows the definition of a $G'$ that does not even satisfy the definition of granularity.
In order to avoid this undesired behavior, we impose the following restriction:
\[
k > -(mindist(G1,2,G2)-1)
\]
where $mindist()$ is formally defined by Bettini et al. \citeyear{BJW:book}.

Intuitively, $mindist(G1,2,G2)$ represents the minimum distance (in terms of granules of $G2$) between two consecutive granules of $G1$.

\subsubsection{The Shift Operation}
Let $G$ be a full-integer labeled granularity, and $m$ an integer.
The shifting operation $\textsl{Shift}_{m}(G)$ generates a new granularity $G'$ by shifting the labels of $G$ by $m$ positions.
More formally, $G' = \textsl{Shift}_{m}(G)$ is the granularity such that for each integer
$i$, $G'(i) = G(i-m)$.
Note that $G'$ is also full-integer labeled.

\subsubsection{The Combining Operation}
Let $G_{1}$ and $G_{2}$ be granularities with label sets $\Lset{G_1}$ and $\Lset{G_2}$ respectively.
The combining operation $\textsl{Combine} (G_{1}, G_{2})$ generates a new granularity $G'$ by combining all the granules of $G_{2}$ that are included in one granule of $G_{1}$ into one granule of $G'$.
More formally, for each $i \in {\cal L}_1$, let $s(i)=\emptyset$ if $G_{1}(i)=\emptyset$,
and otherwise let
$s(i)=\{j\in \Lset{G_2} | \emptyset\ne G_2(j)\subseteq G_1(i)\}$.
Then $G' = \textsl{Combine} (G_{1}, G_{2})$ is the granularity with the label set 
$\Lset{G'}=\{i\in \Lset{G_1}|s(i)\ne\emptyset\}$
such that for each $i$ in $\Lset{G'}$,
$G'(i) = \bigcup_{j\in s(i)} G_{2}(j)$.

As an example, given granularities \texttt{b-day} and \texttt{month},
the granularity for business months can be generated by
$\texttt{b-month} = \textsl{Combine} (\texttt{month}, \texttt{b-day}).$

\subsubsection{The Anchored Grouping Operation}
Let $G_{1}$ and $G_{2}$ be granularities with label sets $\Lset{G_1}$ and $\Lset{G_2}$ respectively, where $G_{2}$ is a label-aligned subgranularity of $G_{1}$, and $G_{1}$
is a full-integer labeled granularity. 
The anchored grouping operation $\textsl{Anchored-group}(G_{1}, G_{2})$ generates a new granularity $G'$ by combining all the granules of $G_{1}$ that are between two granules of
$G_{2}$ into one granule of $G'$.
More formally, $G' = \textsl{Anchored-group}(G_{1}, G_{2})$ is the granularity with the label set $\Lset{G'} = \Lset{G_2}$ such that for each $i \in \Lset{G'}$, 
$G'(i) = \bigcup_{j=i}^{i'-1}G_{1}(j)$
where $i'$ is the next label of $G_{2}$ after $i$.


For example, each academic year at a certain university begins on the last Monday in August, and ends on the day before the beginning of the next academic year. Then, the granularity corresponding to the academic years can be generated by
$AcademicYear = \textsl{Anchored-group}(\texttt{day}, \texttt{lastMondayOfAugust})$.

\subsection{The Granule-Oriented Operations}
Differently from the grouping-oriented operations,
the granule-oriented operations do not
modify the granules of a granularity,
but rather enable the selection of the
granules that should remain in the new granularity.

\label{sec:TheGranuleOrientedOperations}

\subsubsection{The Subset Operation}

Let $G$ be a granularity with label set $\Lset{G}$, and $m, n$ integers such that $m \leq n$.
The subset operation $G' = \textsl{Subset}^{n}_{m}(G)$ generates a new granularity $G'$ by taking all the granules of $G$ whose labels are between $m$ and $n$.
More formally, $G' = \textsl{Subset}^{n}_{m}(G)$ is the granularity with the label set $\Lset{G'} = \{ i\in \Lset{G} \mid m \leq i \leq n\}$, and for each $i \in \Lset{G'}$, 
$G'(i) = G(i)$.
For example, given granularity \texttt{year}, all the years in the 20th century 
can be generated by
$\texttt{20CenturyYear} = \textsl{Subset}_{1900}^{1999}(\texttt{year})$.
Note that $G'$ is a label-aligned subgranularity of $G$,
and $G'$ is not a full-integer labeled granularity even if $G$ is. We also allow the extensions of setting $m=-\infty$ or $n=\infty$ with semantics properly extended.

\subsubsection{The Selecting Operations}
The selecting operations are all binary operations. They generate new granularities by selecting granules from the first operand in terms of their relationship with the granules of the second operand.
The result is always a label-aligned subgranularity of the first operand granularity. 

There are three selecting operations: \textsl{select-down, select-up} and \textsl{select-by-intersect}. To facilitate the description of these operations, the $\Delta_{k}^{l}(S)$ notation is used. Intuitively, if $S$ is a set of integers, $\Delta_{k}^{l}(S)$ selects $l$ elements starting from the $k$-th one (for a formal description of the $\Delta$ operator see \cite{NWJ:amai02}).

%

\vskip 6pt
\noindent {\em Select-down operation.}
\index{calendar operation!selecting!select-down}
For each granule $G_{2}(i)$, there exits a set of granules of $G_{1}$ that is contained in $G_{2}(i)$.
The operation $\textsl{Select-down}_{k}^{l} (G_{1}, G_{2})$, where $k\neq 0$ and $l > 0$ are 
integers, selects granules of $G_{1}$ by using $\Delta_{k}^{l}(\cdot)$ on each set of granules (actually their labels) of $G_1$ that are contained in one granule of $G_2$.
More formally, $G' = \textsl{Select-down}_{k}^{l}(G_{1}, G_{2})$ is the granularity with the
label set
\[\Lset{G'} = \cup_{i \in \Lset{G_2}} \Delta_{k}^{l}( \{j \in \Lset{G_1} \mid \emptyset\ne G_{1}(j) \subseteq G_{2}(i) \}),\]
and for each $i \in \Lset{G'}$,
$G'(i) = G_{1}(i)$.
For example, Thanksgiving days are the fourth Thursdays of all Novembers; if \texttt{Thursday} and \texttt{November} are given, it can be generated by $\texttt{Thanksgiving} = \textsl{Select-down}_{4}^{1} (\texttt{Thursday}, \texttt{November})$.

\vskip 6pt
\noindent{\em Select-up operation.}
The select-up operation $\textsl{Select-up}(G_{1}, G_{2})$ generates a new granularity $G'$ by selecting the granules of $G_{1}$ that contain one or more granules of $G_{2}$.
More formally, $G' = \textsl{Select-up} (G_{1}, G_{2})$ is the granularity with the label set
\[\Lset{G'}= \{ i \in \Lset{G_1} \vert \exists j \in \Lset{G_2} (\emptyset\ne G_{2}(j) \subseteq G_{1}(i)),\}\] 
and for each $i \in \Lset{G'}$, 
$G'(i) = G_{1}(i)$.
For example, given granularities \texttt{Thanksgiving} and \texttt{week}, 
the weeks that contain Thanksgiving days can be defined by
$\texttt{ThanxWeek} = \textsl{Select-up}(\texttt{week}, \texttt{Thanksgiving})$.

\vskip 6pt
\noindent {\em Select-by-intersect operation.}
For each granule $G_{2}(i)$, there may exist a set of granules of $G_{1}$, each intersecting $G_{2}(i)$.
The $\textsl{Select-by-intersect}_{k}^{l} (G_{1}, G_{2})$ operation, where $k\neq 0$
and $l > 0$ are integers, selects granules of $G_{1}$ by applying $\Delta_{k}^{l}(\cdot)$
operator to all such sets, generating a new granularity $G'$.
More formally, $G' = \textsl{Select-by-intersect}_{k}^{l}(G_{1}, G_{2})$ is the granularity
with the label set 
\[\Lset{G'} = \cup_{i \in \Lset{G_2}} \Delta_{k}^{l}(\{j \in \Lset{G_1} \mid G_{1}(j) \cap 
G_{2}(i) \neq \emptyset\}),\] 
and for each $i \in \Lset{G'}$,
$G'(i) = G_{1}(i)$.
For example, given granularities \texttt{week} and \texttt{month}, the granularity consisting of the first week of each month (among all the weeks intersecting the month) can be generated by 
$\texttt{FirstWeekOfMonth} = \textsl{Select-by-intersect}_{1}^{1} (\texttt{week}, \texttt{month})$.

\subsubsection{The Set Operations}

In order to have the set operations as a part of the calendar algebra
and to make certain computations easier, we restrict the operand
granularities participating in the set operations so that the result
of the operation is always a valid granularity: the set operations can
be defined on $G_{1}$ and $G_{2}$ only if there exists a granularity
$H$ such that $G_{1}$ and $G_{2}$ are both label-aligned
subgranularities of $H$.  In the following, we describe the union,
intersection, and difference operations of $G_{1}$ and $G_{2}$,
assuming that they satisfy the requirement.

\vskip 6pt
\noindent{\em Union.}
\index{calendar operation!set operation!union} The union operation
$G_{1}\cup G_{2}$ generates a new granularity $G'$ by collecting all
the granules from both $G_{1}$ and $G_{2}$. More formally, $G'=
G_{1}\cup G_{2}$ is the granularity with the label set
$\Lset{G'}=\Lset{G_1} \cup \Lset{G_2}$, and for each $i \in
\Lset{G'}$,
\[G'(i) = \left \{\begin{array}{ll}
           G_{1}(i),    &i \in {\cal L}_{1},\\
           G_{2}(i),    &i \in {\cal L}_{2} - {\cal L}_{1}.\\
           \end{array}
           \right.
\]
For example, given granularities \texttt{Sunday} and
\texttt{Saturday}, the granularity of the weekend days can be
generated by $\texttt{WeekendDay} = \texttt{Sunday} \cup
\texttt{Saturday}$.

\vskip 6pt\noindent{\em Intersection.}  \index{calendar operation!set
  operation!intersection} The intersection operation $G_{1}\cap G_{2}$
generates a new granularity $G'$ by taking the common granules from
both $G_{1}$ and $G_{2}$.  More formally, $G'= G_{1}\cap G_{2}$ is the
granularity with the label set $\Lset{G'}=\Lset{G_1} \cap \Lset{G_2}$,
and for each $i \in \Lset{G'}$, $G'(i) = G_{1}(i)$ (or equivalently
$G_{2}(i)$).

\vskip 6pt\noindent{\em Difference.}
\index{calendar operation!set operation!difference}
The difference operation $G_{1} \setminus G_{2}$ generates a new granularity $G'$ by excluding the granules of $G_{2}$ from those of $G_{1}$.
More formally, $G'= G_{1} \setminus G_{2}$ is the granularity with the label set $\Lset{G'}=\Lset{G_1} \setminus \Lset{G_2}$, and for each $i \in \Lset{G'}$,
$G'(i) = G_{1}(i)$. 

\section{From Calendar Algebra to Periodical Set}
\label{ch:CalAlg2PSet}


In this section we first describe the overall conversion process and then we report the formulas specific for the conversion of each calendar algebra operation. Finally, we present a procedure for relabeling the resulting granularity, a sketch complexity analysis and some considerations about the period length minimality.

\subsection{The Conversion Process}
\label{sec:TheConversionProcess}
Our final goal is to provide a correct and effective way to convert calendar expressions into periodical representations. Under appropriate limitations, for each calendar algebra operation, if the periodical descriptions of the operand granularities are known, it is possible to compute the periodical characterization of the resulting granularity.

This result allows us to calculate, for any calendar, the periodical description of each granularity in terms of the bottom granularity. In fact, by definition, the bottom granularity is fully characterized; hence it is possible to compute the periodical representation of all the granularities that are obtained from operations applied to the bottom granularity. Recursively, the periodical description of all the granularities can be obtained.

The calendar algebra presented in the previous section can represent all the granularities that are periodical with finite exceptions (i.e., any granularity $G$ such that bottom groups periodically with finite exceptions into $G$). Since with the periodical representations defined in Section~\ref{ch:gran} it is not possible to express the finite exceptions, we need to restrict the calendar algebra so that it cannot represent them. This implies allowing the \textsl{Subset} operation to be only used as the last step of deriving a granularity.
Note that in the calendar algebra presented by Ning et al. \citeyear{NWJ:amai02} there was an extension to the altering-tick operation to allow the usage of $\infty$ as the $m$ parameter (i.e., $G'=\textsl{Alter}_{l,k}^{\infty}(G_2, G_1)$); the resulting granularity has a single exception hence is not periodic. This extension is disallowed here in order to generate periodical granularities only (without finite exceptions).

The conversion process can be divided into three steps: in the first one the period length and period label distance are computed; in the second we derive the set $\Lset{}^P$ of labels in one period, and in the last one the composition of the explicit granules is computed. For each operation we identify the correct formulas and algorithms for the three steps.

The \textbf{first step} consists in computing the period length and the period label distance of the resulting granularity. Those values are calculated as a function of the parameters (e.g. the ``grouping factor'' $m$,  in the \textsl{Group} operation) and the operand granularities (actually their period lengths and period label distances).
%


The \textbf{second step} in the conversion process is the identification of the label set of the resulting granularity. In Section~\ref{sec:ThePeriodicalGranulesRepresentation} we pointed out that in order to fully characterize a granularity it is sufficient to identify the labels in any period of the granularity. In spite of this theoretical result, to perform the computations required by each operation we need the explicit granules of the operand granularities to be ``aligned''. There are two possible approaches: the first one consist in computing the explicit granules in any period and then recalculate the needed granules in the correct position in order to eventually align them. The second one consists in aligning all the periods containing the explicit granules with a fixed granule in the bottom granularity. After considering both possibilities, for performance reasons, we decided to adopt the second approach. We decided to use $\bot(1)$ as the ``alignment point'' for all the granularities. A formal definition of the used formalism follows.

Let $G$ be a granularity and $i$ be the smallest positive integer such that $\up{i}^G$ is defined.
We call $l_G = \up{i}^G$ and $\Lbar{G}$ the set of labels of $G$ contained in $l_G \ldots l_G+N_G -1$.
Note that this definition of $\Lbar{G}$ is an instance of the definition of $\Lset{}^P$ given in Section~\ref{sec:ThePeriodicalGranulesRepresentation}.
The definition of $\Lbar{G}$ provided here is useful for representing $G$ and actually the final goal of this step is to compute $\Lbar{G}$; however $\Lbar{G}$ is not suitable for performing the computations. The problem is that if $G(l_G)$ starts before $\bot(1)$ (i.e., $min(\down{l_G}^G)<1$) then the granule $G(l_G+N_G)$ begins at $P_{G}$ or before $P_G$, and hence $G(l_G + N_{G})$ is necessary for the computations; however $l_G+N_G \notin \Lbar{G}$.

To solve the problem we introduce the symbol $\Lcap{G}$ to represent the set of all labels of granules of $G$ that cover one in $\bot(1) \ldots \bot(P_G)$. It is easily seen that if $G(l_G)$ does not cover $\bot(0)$, then $\Lcap{G} = \Lbar{G}$, otherwise $\Lcap{G} = \Lbar{G} \cup \{l_G + N_G\}$. Therefore the conversion between $\Lbar{}$ and $\Lcap{}$ and vice versa is immediate.

The notion of $\Lcap{}$ is still not enough to perform the computations. The problem is that when a granularity $G$ is used as an operand in an operation, the period length of the resulting granularity $G'$ is generally bigger than the period length of $G$. Therefore it is necessary to extend the notion of $\Lcap{G}$ to the period length $P_{G'}$ of $G'$ using $P_{G'}$ in spite of $P_G$ in the definition of $\Lcap{}$. The symbol used for this notion is $\Lcap{G}^{P_{G'}}$.

The idea is that when $G$ is used as the operand in an operation that generates $G'$, $\Lcap{G}^{P_{G'}}$ is computed from $\Lbar{G}$. This set is then used by the formula that we provide below to compute $\Lbar{G'}$.

The computation of $\Lbar{G'}$ is performed as follows: if $G'$ is defined by an operation that returns a full-integer labeled granularity, then it is sufficient to compute the value of $l_G'$. Indeed it is easily seen that $\Lbar{G'}=\{i \in \mathbb{Z} | l_G' \leq i \leq l_G' + N_{G'} -1 \}$. If $G'$ is defined by any other algebraic operation, we provide the formulas to compute $\Lcap{G'}$; from $\Lcap{G'}$ we easily derive $\Lbar{G'}$. 

\begin{example}
 
Figure~\ref{fig:LbarLset} shows granularities $\bot$,
  $G$ and $H$; it is clear that $P_G = P_H = 4$ and $N_G=N_H=3$.
  Moreover, $l_G = l_H = 6$ and therefore $\Lbar{G} = \Lbar{H} =
  \{6,7\}$. Since $0 \notin \down{6}^G$ then $\Lcap{G} = \Lbar{G}$. On
  the other hand, since $0 \in \down{6}^H$, then $\Lcap{H} = \Lbar{H}
  \cup \{6 + 3\}$.
 
 Suppose that a granularity $G'$ has period
  length $P_{G'} = 8$; then $\Lcap{G}^{P_G'} = \{6,7,9,10\}$ and
  $\Lcap{H}^{P_{G'}} = \{6,7,9,10,12\}$.
\end{example}

\begin{figure}[ht]
        \centering
                \includegraphics[width=.5\columnwidth]{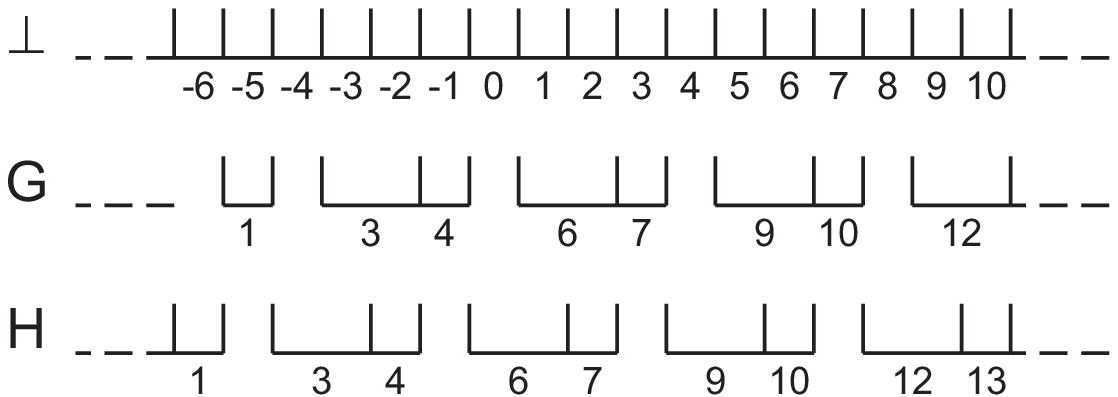}
        \caption{$\Lbar{}$, $l$, $\Lcap{}$ and $\Lcap{}^{P_{G'}}$ examples}
        \label{fig:LbarLset}
\end{figure}

The \textbf{third} (and last) \textbf{step} of the conversion process is the computation of the composition of the explicit granules. Once $\Lbar{G'}$ has been computed, it is sufficient to apply, for each label of $\Lbar{G'}$ the formulas presented in Chapter~\ref{ch:CalAlg}.

In Sections~\ref{sub:group} to \ref{sub:setOp} we show, for each
calendar algebra operation, how to compute the first and second
conversion steps.

\subsection{Computability Issues}
\label{sec:computability}
In some of the formulas presented below it is necessary to compute the set $S$ of labels of a granularity $G$ such that $\forall i \in S \; G(i) \subseteq H(j)$ where $H$ is a granularity and $j$ is a specific label of $H$. Since $\Lset{G}$ contains an infinite number of labels, it is not possible to check, $\forall i \in \Lset{G}$ if $G(i) \subseteq H(j)$.
However it is easily seen that $\forall i \in S \; \exists k$ s.t. $G(\up{k}^G) \subseteq H(j)$.
Therefore $\forall i \in S \; \exists k$ s.t. $G(\up{k}^G)$ is defined and  $k \in \down{j}^H$.

Therefore we compute the set $S$ by considering all the labels $i$ of $\Lset{G}$ s.t. $\exists n \in \down{j}^H$ s.t. $\up{n}^{G} = i$ and $G(i) \subseteq H(j)$. Since the set $\down{j}^H$ is finite\footnote{With the calendar algebra it is not possible to define granularities having granules that maps to an infinite set of time instants.}, the computation can be performed in a finite time.
The consideration is analogous if $S$ is the set such that $\forall i \in S \; G(i) \supseteq H(j)$ or $\forall i \in S \; (G(i) \cap H(j) \neq \emptyset)$.

\subsection{The Group Operation}
\label{sub:group}
\begin{prop}
\label{prop:GroupOperation}
If $G'=Group_{m}(G)$, then:
\begin{enumerate}
        \item $P_{G'}=\frac{P_{G} \cdot m}{GCD(m, N_G)}$ and $N_{G'} = \frac{N_{G}}{GCD(m, N_G)}$;
        \item $l_{G'}=\left( \left\lfloor \frac{l_{G}-1}{m}\right\rfloor + 1 \right)$;
        \item $\forall i \in \Lbar{G'} \; G'(i) = \bigcup_{j=(i-1) \cdot m +1} ^ {i \cdot m} G(j)$.
\end{enumerate}
 
\end{prop}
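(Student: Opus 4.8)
The plan is to prove the three items in turn: item~3 is essentially the definition of $\textsl{Group}_m$, item~2 is a short direct computation about $l_{G'}$, and item~1 — the period length and period label distance — is where the real argument lies. For item~3, note that for $i\in\Lbar{G'}$ the stated identity is literally the defining equation of $G'=\textsl{Group}_m(G)$ restricted to the explicit labels; the only thing worth remarking is that each $G(j)$ on the right can be recovered from the assumed finite description of $G$ by the composition formula of Section~\ref{sec:ThePeriodicalGranulesRepresentation} after passing to $\Lcap{G}^{P_{G'}}$, because the labels $j$ needed range over $N_{G'}\cdot m = (m/GCD(m,N_G))\cdot N_G$ consecutive granules of $G$, i.e.\ over $m/GCD(m,N_G)$ periods of $G$, which is exactly the $\bot$-span $P_{G'}=(m/GCD(m,N_G))\cdot P_G$ that $\Lcap{G}^{P_{G'}}$ is built to cover. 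So item~3 reduces to bookkeeping once items~1 and~2 are in place.

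For item~1 the key observation is that $\textsl{Group}_m$ stacks a second periodicity on top of $\bot\gpinto G$: straight from the definition, $G$ groups periodically into $G'$ with period length $m$ and period label distance~$1$, since $G'(i+1)$ is the block of $m$ consecutive $G$-granules forming $G'(i)$ shifted forward by $m$. Composing this with the hypothesis that $\bot\gpinto G$ with period $(P_G,N_G)$ gives the claim: a block of $n$ consecutive labels of $G'$ spans $nm$ granules of $G$, and for the $G$-to-$\bot$ pattern to realign $nm$ must be a whole number of $G$-periods, i.e.\ $N_G\mid nm$; the least such $n$ is $N_G/GCD(m,N_G)=N_{G'}$, and the matching span in $\bot$ is $(nm/N_G)\cdot P_G=(m/GCD(m,N_G))\cdot P_G=P_{G'}$. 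It then remains to verify the three conditions of Definition~\ref{def:pgroup} for $\bot\gpinto G'$ with these values: condition~(1) is automatic since $G'$ is full-integer labeled; for condition~(2) one expands $G'(i)$ first through $G$ and then through $\bot$, observes that $G'(i+N_{G'})$ re-indexes the $G$-granules of $G'(i)$ by $N_{G'}\cdot m = (m/GCD(m,N_G))\cdot N_G$ positions, and applies the periodicity of $\bot\gpinto G$ iteratively $m/GCD(m,N_G)$ times to conclude that every underlying $\bot$-label is shifted by exactly $(m/GCD(m,N_G))\cdot P_G=P_{G'}$; condition~(3) follows from the same iteration applied to the first non-empty granule, using that $G$ (being full-integer labeled in a calendar built from the unbounded $\bot$) is itself unbounded, hence so is $G'$. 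The precondition $\bot\groups G'$ comes for free from transitivity of $\groups$ applied to $\bot\groups G$ and $G\groups G'$.

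For item~2, argue directly about $l_{G'}=\up{i}^{G'}$, where $i$ is the least positive integer for which $\up{i}^{G'}$ is defined. Since $G'(j)$ is the union of $G((j-1)m+1),\ldots,G(jm)$, and $\bot\groups G$, a granule of $\bot$ is contained in a granule of $G'$ exactly when it is contained in a granule of $G$; hence the least positive $\bot$-label covered by $G'$ is the same $i$ appearing in the definition of $l_G$. Moreover $\bot(i)\subseteq G(l_G)\subseteq G'(j)$ for the unique $j$ with $(j-1)m<l_G\le jm$, that is $j=\up{l_G/m}=\down{(l_G-1)/m}+1$ (the identity $\lceil a/m\rceil=\lfloor(a-1)/m\rfloor+1$ holding for all integers $a$ and positive integers $m$), whence $l_{G'}=j$, the stated formula.

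The step I expect to be the main obstacle is making condition~(2) of Definition~\ref{def:pgroup} fully rigorous without assuming niceness: the ``apply the periodicity of $\bot\gpinto G$ a prescribed number of times'' step has to be carried out carefully across possibly empty granules of $G$, which is precisely where conditions~(1) and~(3) of \emph{groups periodically into} earn their keep (and why first observing that $G$ must be unbounded in this setting streamlines things). A secondary point, should item~1 also be meant to assert that $P_{G'}$ is \emph{minimal}, is to argue that any valid period of $G'$ over $\bot$ must contain a whole number both of $G$-periods and of $\textsl{Group}$-blocks, forcing its length to be a multiple of the stated $P_{G'}$ when $(P_G,N_G)$ is itself minimal --- but this appears to be deferred to Section~\ref{sec:minimality}.
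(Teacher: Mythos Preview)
Your proposal is correct and follows essentially the same approach as the paper. The one structural difference worth noting is that for item~1 the paper packages the ``compose the two periodicities'' step as a standalone transitivity lemma (Theorem~\ref{theo:trans}): it shows $G\gpinto G'$ with $P_{G'}^G = km$ for the specific $k=N_G/GCD(m,N_G)$ (so that $P_{G'}^G$ is a multiple of $N_G$), then invokes Theorem~\ref{theo:trans} to conclude $\bot\gpinto G'$ with the stated $(P_{G'},N_{G'})$ --- whereas you inline that argument by verifying the three conditions of Definition~\ref{def:pgroup} directly. Both routes are the same computation; the paper's factoring through Theorem~\ref{theo:trans} pays off because the same transitivity step recurs in the proofs for \textsl{Alter}, \textsl{Combine}, \textsl{Anchored-group}, and the selecting operations. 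For item~2 the paper proves the two inequalities $\lfloor(l_G-1)/m\rfloor\cdot m+1\le l_G$ and $(\lfloor(l_G-1)/m\rfloor+1)\cdot m\ge l_G$ directly rather than via your ceiling identity, but this is the same content.
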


\begin{example}
  Figure~\ref{fig:GroupOperation} shows an example of the group operation: $G' = \textsl{Group}_3(G)$. Since $P_G=1$ and $N_G=1$, then $P_{G'}=3$ and $N_G=1$. Moreover, since $\Lbar{G}=\{-7\}$, then $l_G=-7$ and therefore $l_{G'}=-2$ and $\Lbar{G'}=\{-2\}$. Finally $G'(-2) = G(-8) \cup G(-7) \cup G(-6)$ i.e., $G'(-2) = \bot(0) \cup \bot(1) \cup \bot(2)$.
\end{example}

\begin{figure}[ht]
        \centering
                \includegraphics[width=.8\columnwidth]{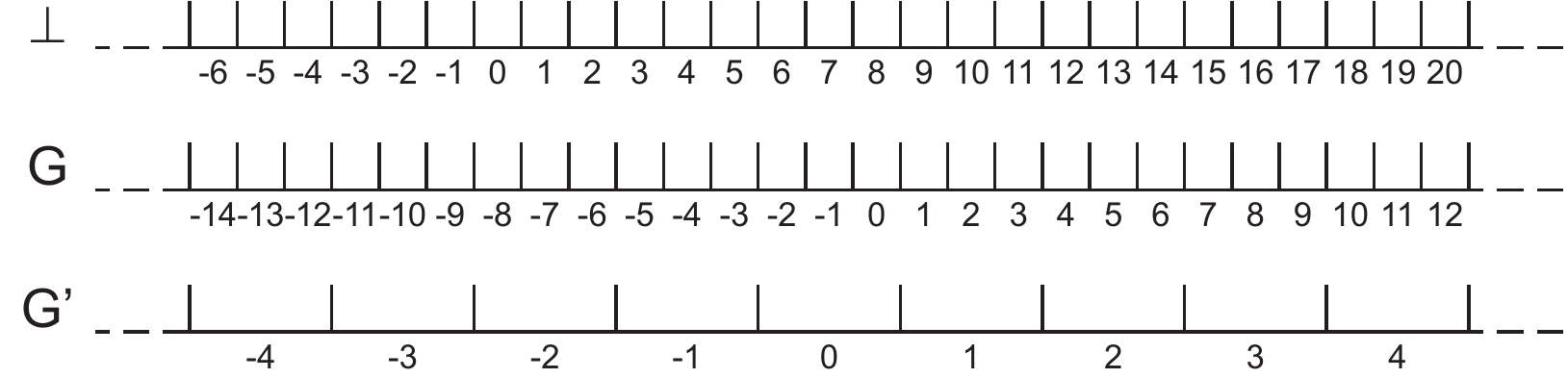}
        \caption{\textsl{Group} operation example}
        \label{fig:GroupOperation}
\end{figure}

\subsection{The Altering-tick Operation}

\begin{prop}
\label{pr:alterOperation}
If $G'=\textsl{Alter}_{l,k}^{m}(G_2, G_1)$ then:
\begin{enumerate}
        \item
\[
N_{G'} = lcm \left( N_{G_1}, m, \frac{P_{G_2} \cdot N_{G_1}}{GCD(P_{G_2} \cdot N_{G_1}, P_{G_1})}, \frac{N_{G_2} \cdot m}{GCD(N_{G_2} \cdot m, |k|)} \right)
\]
\noindent and
\[
P_{G'} = \left( \frac{N_{G'} \cdot P_{G_1} \cdot N_{G_2}}{N_{G_1} \cdot P_{G_2}} + \frac{N_{G'} \cdot k}{m}\right) \cdot \frac{P_{G_2}}{N_{G_2}}
\]
        \item $l_{G'}=\up{l_{G_2}}^{G'}_{G_2}$; 
        \item $\forall i \in \Lbar{G'} \; G'(i) = \bigcup_{j=b_i'} ^ {t_i'} G(j)$ where $b_i'$ and $t_i'$ are defined in Section~\ref{sub:altering-tick}.
\end{enumerate}

\end{prop}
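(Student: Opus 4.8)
I would prove the statement by reducing everything to the already‑known periodic description of $G_2$, exploiting that $G'$ is obtained from $G_2$ by the closed form $G'(i)=\bigcup_{j=b_i'}^{t_i'}G_2(j)$. First note that, under the restriction $k>-(mindist(G_1,2,G_2)-1)$, each range $[b_i',t_i']$ is non‑empty and consecutive ranges tile $\Lset{G_2}$, so $G'$ is a full‑integer labeled (hence unbounded) granularity, $\bot\groups G'$ trivially, and every granule of $G_2$ is contained in exactly one granule of $G'$; therefore $\up{z}^{G'}_{G_2}$ is defined for every $z$. This already gives part~2: $l_{G_2}$ is the label of the granule of $G_2$ covering $\bot(1)$, and the granule of $G'$ covering $\bot(1)$ is precisely the one covering $G_2(l_{G_2})$, i.e.\ $\up{l_{G_2}}^{G'}_{G_2}$ (a short check shows it lies in $\Lbar{G'}$). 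Part~3 is then immediate: once $\Lbar{G'}$ is known, applying the defining equation of \textsl{Alter} label by label, with the $\Lcap{G_2}^{P_{G'}}$ description of $G_2$ supplying the $\bot$‑content, yields the stated composition.

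The heart of the proof is part~1, namely that the stated $P_{G'},N_{G'}$ satisfy Definition~\ref{def:pgroup} for $\bot\gpinto G'$. Conditions (1) and (3) are trivial since $G'$ is full‑integer labeled and unbounded, so the work is condition (2): $G'(i)=\bigcup_r\bot(j_r)$ must imply $G'(i+N_{G'})=\bigcup_r\bot(j_r+P_{G'})$. Expanding both sides through $G_2$ and using that a shift of $G_2$‑labels by a multiple $\delta$ of $N_{G_2}$ shifts the underlying $\bot$‑content by exactly $\delta\cdot P_{G_2}/N_{G_2}$, condition (2) becomes: $b'_{i+N_{G'}}=b'_i+\delta$ and $t'_{i+N_{G'}}=t'_i+\delta$ for a fixed $\delta$ that is a multiple of $N_{G_2}$ with $P_{G'}=\delta\cdot P_{G_2}/N_{G_2}$. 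To establish this I would first prove a quasi‑periodicity lemma for the maps $i\mapsto b_i$ and $i\mapsto t_i$ (the $G_2$‑bounds of $G_1(i)$): since $G_2$ partitions $G_1$ and both are periodic w.r.t.\ $\bot$ with $(P_{G_1},N_{G_1})$ and $(P_{G_2},N_{G_2})$, one gets $b_{i+\nu_1}=b_i+\pi_1$ and $t_{i+\nu_1}=t_i+\pi_1$ with $\nu_1=\frac{P_{G_2}N_{G_1}}{GCD(P_{G_2}N_{G_1},P_{G_1})}$ and $\pi_1=\frac{P_{G_1}N_{G_2}}{GCD(P_{G_2}N_{G_1},P_{G_1})}$ — this is the minimal pair for the periodic grouping of $G_2$ into $G_1$ after accounting for the misalignment of the two $\bot$‑periods.

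Next, for any $N$ that is a multiple of $m$, the group index $h=\down{(i-l)/m}+1$ increases by exactly $N/m$ and the predicate ``$i$ is the $l$‑th label of its group'' is invariant, so a direct case split gives $b'_{i+N}-b'_i=(b_{i+N}-b_i)+\tfrac{N}{m}k$ and $t'_{i+N}-t'_i=(t_{i+N}-t_i)+\tfrac{N}{m}k$. Taking $N=N_{G'}$, which by construction is a multiple of $m$ and of $\nu_1$ (and of $N_{G_1}$), both differences collapse to the same constant $\delta=\frac{N_{G'}}{\nu_1}\pi_1+\frac{N_{G'}}{m}k=\frac{N_{G'}P_{G_1}N_{G_2}}{N_{G_1}P_{G_2}}+\frac{N_{G'}k}{m}$, which is exactly the inner factor appearing in the formula for $P_{G'}$; and the fourth argument $\frac{N_{G_2}m}{GCD(N_{G_2}m,|k|)}$ of the $lcm$ defining $N_{G'}$ is precisely what is needed for $\delta$ to be a multiple of $N_{G_2}$, so that $P_{G'}=\delta\cdot P_{G_2}/N_{G_2}$ is an integer and condition (2) goes through.

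The main obstacle is the quasi‑periodicity lemma together with the number‑theoretic bookkeeping: one must argue carefully that the interplay of the $\bot$‑periods of $G_1$ and $G_2$ yields the $GCD$‑reduced label distance $\nu_1$ (rather than the coarser one based on $lcm(P_{G_1},P_{G_2})$), and that after adding the $k$‑induced correction the total $G_2$‑label shift $\delta$ is divisible by $N_{G_2}$ exactly when the stated $lcm$ divides $N$ — this is what pins down the four arguments of the $lcm$. A secondary but genuine source of case analysis is the sign of $k$: one has to recheck, via the restriction on $k$, that shrinking never makes a range $[b'_i,t'_i]$ empty or non‑contiguous, so that the expansions through $G_2$ performed above are legitimate.
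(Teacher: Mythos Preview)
Your proposal is correct and follows essentially the same route as the paper: both compute $b'_{i+N_{G'}}-b'_i=t'_{i+N_{G'}}-t'_i=\frac{N_{G'}P_{G_1}N_{G_2}}{N_{G_1}P_{G_2}}+\frac{N_{G'}k}{m}$ by first reducing to $b_{i+N_{G'}}-b_i$ via the case split on the $l$-th position (using $m\mid N_{G'}$), then obtaining that increment by passing through $\bot$ using the periodicity of $G_1$ and $G_2$, and finally checking that the result is a multiple of $N_{G_2}$. The only organizational difference is that the paper packages the last step as an application of its transitivity Theorem~\ref{theo:trans} (show $G_2\gpinto G'$, then conclude $\bot\gpinto G'$), whereas you inline that step; your ``quasi-periodicity lemma'' for $b_i,t_i$ is exactly the paper's computation of $b_{i+\Delta}-b_i$ stated in advance, and your Part~2 argument coincides with the paper's once you replace ``the granule covering $\bot(1)$'' by ``the granule covering $\bot(i)$ for the least positive $i$ with $\up{i}^{G_2}$ defined'' (the paper uses that $G_2$ partitions $G'$ to get this).
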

Referring to step 2., note that when computing $l_{G'}$ the explicit characterization of the granules of $G'$ is still unknown. To perform the operation $\up{l_{G_2}}^{G'}_{G_2}$ we need to know at least the explicit granules of one of its periods. We choose to compute the granules labeled by $1 \ldots N_{G'}$. When $l_{G'}$ is derived, the granules labeled by $l_{G'} \ldots l_{G'} + N_{G'} - 1$ will be computed so that the explicit granules are aligned to $\bot(1)$ as required.

\begin{example}
  Figure~\ref{fig:AlterOperation} shows an example of the altering-tick operation: $G' = \textsl{Alter}_{2,1}^{3}(G_2, G_1)$. Since $P_{G_1}=4$, $N_{G_1}=1$, $P_{G_2}=4$ and $N_{G_2}=2$, then $N_{G'}=6$ and $P_{G'}=28$. Moreover, since $\Lbar{G_2}=\{-10, -9\}$, then $l_{G_2}=-10$ and therefore $l_{G'}=\up{-10}^{G'}_{G_2}=-4$ and hence $\Lbar{G_2}=\{-4, -3, \ldots, 0, 1\}$. Finally $G'(-4) = G_1(-11) \cup G_1(-10) \cup G_1(-9) = \bot(-1) \cup \bot(0) \cup \bot(1) \cup \bot(3) \cup \bot(4)$; analogously we derive $G'(-3)$, $G'(-2)$, $G'(-1)$, $G'(0)$ and $G'(1)$.
\end{example}

\begin{figure}[ht]
        \centering
                \includegraphics[width=\columnwidth]{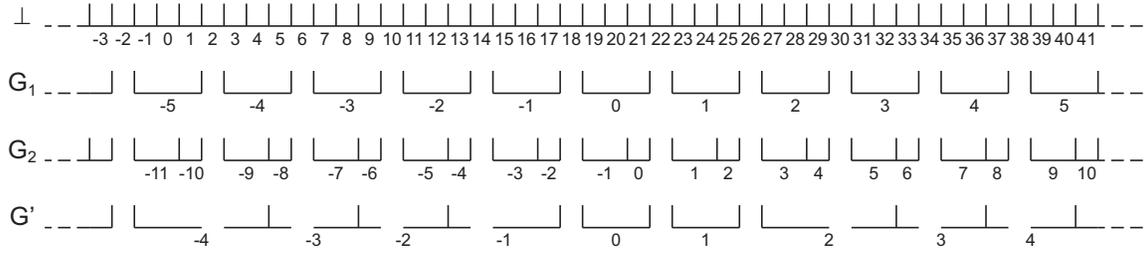}
        \caption{\textsl{Alter} operation example}
        \label{fig:AlterOperation}
\end{figure}

\subsection{The Shift Operation}
\begin{prop}
\label{pr:shiftOperation}
If $G'=\textsl{Shift}_m(G)$, then:
\begin{enumerate}
        \item $P_{G'} = P_{G_1}$ and $N_{G'} = N_{G_1}$;
        \item $l_{G'}=l_G + m$;
        \item $\forall i \in \Lbar{G'} \; G'(i) = G(i - m)$.
\end{enumerate}
\end{prop}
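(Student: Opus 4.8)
The plan is to verify the three claims in order, each reducing to the corresponding property of $G$ through the relabeling identity $G'(i) = G(i-m)$ that defines the shift operation.

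For claim~1 I would show that $\bot \gpinto G'$ holds with period length $P_G$ and period label distance $N_G$ by checking the three conditions of Definition~\ref{def:pgroup}. Since $G'$ is full-integer labeled, condition~(1) is immediate. For condition~(2), fix a label $i$ of $G'$ and write $G'(i) = \bigcup_r \bot(j_r)$; because $G'(i) = G(i-m)$ this is exactly a decomposition of $G(i-m)$, so if $G'(i+N_G) = G(i-m+N_G)$ is non-empty, the hypothesis $\bot \gpinto G$ with $(P_G,N_G)$ gives $G(i-m+N_G) = \bigcup_r \bot(j_r + P_G)$, that is, $G'(i+N_G) = \bigcup_r \bot(j_r + P_G)$. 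Condition~(3) follows because $G$ has a non-empty granule $G(s)$ with $G(s+N_G)$ non-empty, and then $G'(s+m)$ and $G'(s+m+N_G)$ are the corresponding non-empty granules of $G'$. Finally $\bot \groups G'$ is inherited from $\bot \groups G$, since relabeling does not alter the family of granule sets. I would also remark that $\textsl{Shift}_{-m}$ is the inverse operation, so the set of valid $(P,N)$ pairs of $G'$ coincides with that of $G$; in particular the equalities $P_{G'}=P_G$ and $N_{G'}=N_G$ hold for the minimal choice as well.

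For claim~2 the key observation is the identity $\up{i}^{G'} = \up{i}^G + m$, valid exactly when $\up{i}^G$ is defined. Indeed $\up{i}^{G'} = z'$ means $\bot(i) \subseteq G'(z') = G(z'-m)$, which by definition of $\up{\cdot}$ for $G$ says $\up{i}^G = z'-m$; in particular $\up{i}^{G'}$ is defined if and only if $\up{i}^G$ is. Hence the smallest positive integer $i$ for which $\up{i}^{G'}$ is defined is the same one for which $\up{i}^G$ is defined, and applying the identity to that $i$ yields $l_{G'} = l_G + m$. Claim~3 is then immediate: by definition $\Lbar{G'}$ consists of the labels of $G'$ in the range $l_{G'} \ldots l_{G'}+N_{G'}-1$, and for every integer $i$ we have $G'(i) = G(i-m)$ straight from the definition of the shift operation.

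The argument is essentially routine; the only point needing a little care is the bookkeeping in claim~2 --- confirming that definedness of $\up{i}^{G'}$ transfers to $\up{i}^{G}$ and back, so that the index realizing $l_{G'}$ is genuinely the same as the one realizing $l_G$ --- together with noting that such a positive index exists at all, which holds because $\bot$ is unbounded and $G$ has non-empty granules, so both $l_G$ and $l_{G'}$ are well defined.
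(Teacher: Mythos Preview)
Your proof is correct and follows the same underlying idea as the paper --- everything reduces to the defining identity $G'(i)=G(i-m)$. The paper in fact only proves Part~2, and does so in a single line (computing $G'(l_G+m)=G(l_G)$ and leaving the rest implicit); your version spells out the $\up{i}^{G'}=\up{i}^G+m$ identity and the check of Definition~\ref{def:pgroup} for Part~1, which the paper omits entirely, so your argument is more complete but not different in spirit.
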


\begin{example}
The shifting operation can easily model time differences. Suppose granularity
\texttt{USEast-Hour} stands for the hours of US Eastern Time. Since the hours of the US Pacific Time are 3 hours later than those of US Eastern Time, the hours of US Pacific Time can be generated by \texttt{USPacific-Hour}$ = \textsl{Shift}_{-3}$(\texttt{USEast-Hour}).
\end{example}

\begin{figure}[ht]
        \centering
                \includegraphics[width=.55\columnwidth]{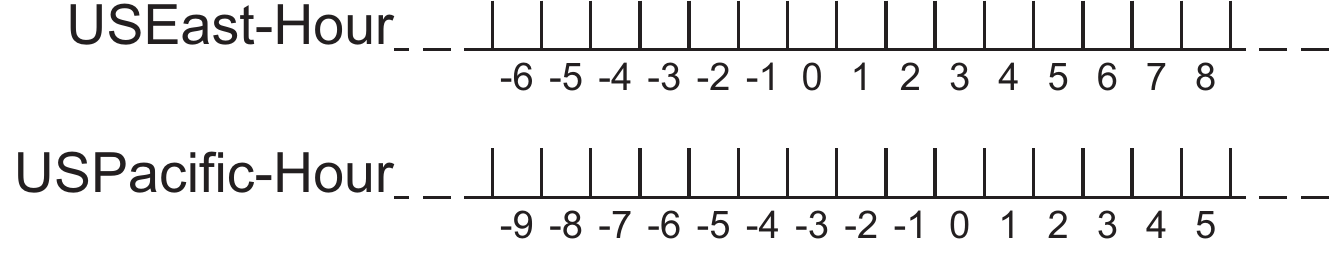}
        \caption{\textsl{Shift} operation example}
        \label{fig:ShiftOperation}
\end{figure}

\subsection{The Combining Operation}
\begin{prop}
\label{pr:combineOperation}
Given $G'=\textsl{Combining}(G_1, G_2)$, then:
\begin{enumerate}
        \item $P_{G'} = lcm (P_{G_1}, P_{G_2})$ and $N_{G'} = \frac{lcm (P_{G_1}, P_{G_2})N_{G_1}}{P_{G_1}}$;
        \item  $\forall i \in \Lcap{G_1}^{P_{G'}}$ let be $\widetilde{s}(i) = \{j \in \Lcap{G_2}^{P_{G'}} | \emptyset \neq G_2(j) \subseteq G_1(i) \}$; then $\Lcap{G'} = \{ i \in \Lcap{G_1}^{P_{G'}} | \widetilde{s}(i) \neq \emptyset\}$;
                \item $\forall i \in \Lbar{G'} \; G'(i) = \bigcup_{j \in s(i)}G_2(j)$.
\end{enumerate}
\end{prop}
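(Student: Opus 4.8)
The plan is to establish the three claims in order, proving first that the stated $P_{G'}$ and $N_{G'}$ actually witness $\bot \gpinto G'$, then that the $\Lcap{}$-based computation correctly identifies the explicit labels, and finally that the composition formula is just the specialization of the general \textsl{Combine} definition to the explicit labels. For part~1, I would start from the observation that since $G' = \textsl{Combine}(G_1,G_2)$, every non-empty granule of $G'$ is a union of granules of $G_2$ (indeed $G'(i) = \bigcup_{j \in s(i)} G_2(j)$), and moreover $\Lset{G'} \subseteq \Lset{G_1}$ with $G'(i)$ covering exactly the same span of $\bot$ as $G_1(i)$ does (because $s(i)$ collects precisely the $G_2$-granules inside $G_1(i)$, and $G_2$'s granules fill $G_1(i)$ — here one uses that the set operations' precondition forces $G_1,G_2$ to be label-aligned subgranularities of a common $H$, so their bottom-level granules are compatible). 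Hence a period of $G'$ at the bottom level must be a common multiple of $P_{G_1}$ and $P_{G_2}$, and the minimal candidate is $P_{G'} = lcm(P_{G_1},P_{G_2})$; the label distance $N_{G'}$ is then the number of non-empty $G'$-granules — equivalently non-empty $G_1$-granules whose $s(i) \ne \emptyset$ — spanning that many bottom granules, which by proportionality with $G_1$ (that has $N_{G_1}$ labels per $P_{G_1}$) gives $N_{G'} = lcm(P_{G_1},P_{G_2}) N_{G_1}/P_{G_1}$. I would verify conditions (1)–(3) of Definition~\ref{def:pgroup} directly: periodicity of the labels and of the grouping pattern both follow from the periodicity of $G_1$ and $G_2$ with period $lcm(P_{G_1},P_{G_2})$, and condition (3) from unboundedness.

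For part~2, the goal is to show $\Lcap{G'}$ — the labels of $G'$-granules covering some $\bot(1),\dots,\bot(P_{G'})$ — equals $\{i \in \Lcap{G_1}^{P_{G'}} \mid \widetilde{s}(i) \ne \emptyset\}$. The inclusion relies on two facts: first, a $G'$-granule covering a bottom granule in $\bot(1)\ldots\bot(P_{G'})$ has label $i$ with $G_1(i)$ also covering that bottom granule (same span), so $i \in \Lcap{G_1}^{P_{G'}}$; second, $G'(i) \ne \emptyset$ iff $s(i) \ne \emptyset$, and I must argue that restricting the search for $G_2$-granules inside $G_1(i)$ to the finite set $\Lcap{G_2}^{P_{G'}}$ loses nothing — i.e.\ $\widetilde{s}(i) \ne \emptyset \iff s(i) \ne \emptyset$. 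This last equivalence is exactly the situation handled in Section~\ref{sec:computability}: any $G_2$-granule contained in $G_1(i)$, for $i$ with $G_1(i)$ intersecting $\bot(1)\ldots\bot(P_{G'})$, is itself contained in that bottom range, hence its label lies in $\Lcap{G_2}^{P_{G'}}$ after the alignment to $\bot(1)$. I would also note the extension of $\Lcap{G_1}$ to period $P_{G'}$ is precisely why we need $\Lcap{}^{P_{G'}}$ rather than $\Lcap{}$: since $P_{G'} = lcm(P_{G_1},P_{G_2})$ can exceed $P_{G_1}$, several copies of $G_1$'s pattern sit inside one period of $G'$. Part~3 is then immediate: for $i \in \Lbar{G'}$ we have $G'(i) = \bigcup_{j \in s(i)} G_2(j)$ by definition of \textsl{Combine}, and $s(i)$ is obtained from $\widetilde{s}(i)$ (computed in part~2) by translating labels back to their aligned positions, exactly as $\Lbar{}$ is recovered from $\Lcap{}$.

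The main obstacle I expect is part~1, specifically proving that $lcm(P_{G_1},P_{G_2})$ is genuinely a valid period length — i.e.\ that the grouping pattern of $\bot$ into $G'$ repeats after exactly this many bottom granules and $N_{G'}$ labels. The subtlety is that $G'$'s label set $\Lset{G'} = \{i \in \Lset{G_1} \mid s(i) \ne \emptyset\}$ is not all of $\Lset{G_1}$; some $G_1$-granules may contain no full $G_2$-granule and are dropped, so one must check that the pattern of which labels survive is itself periodic with the claimed $N_{G'}$. This follows because whether $s(i) \ne \emptyset$ depends only on the relative position of $G_1(i)$ against the $G_2$-grid, which repeats with period $lcm(P_{G_1},P_{G_2})$ at the bottom level — but making this airtight requires carefully tracking the label arithmetic through both granularities. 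A secondary delicate point is confirming that the formula gives an \emph{integer} $N_{G'}$ and that $P_{G'}/N_{G'}$ equals the bottom-span of one $G'$-granule on average; both reduce to $P_{G_1} \mid lcm(P_{G_1},P_{G_2}) N_{G_1}/N_{G_1}$-type divisibility facts that I would dispatch with a short gcd/lcm computation rather than belabor.
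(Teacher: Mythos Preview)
Your plan contains a concrete error that undermines Part~1. You claim that ``$G'(i)$ cover[s] exactly the same span of $\bot$ as $G_1(i)$ does'' and justify this by invoking ``the set operations' precondition [that] forces $G_1,G_2$ to be label-aligned subgranularities of a common $H$.'' But that precondition applies only to the \emph{set} operations (union, intersection, difference), not to \textsl{Combine}. The \textsl{Combine} operation has no such restriction; in the paper's own example $\texttt{b-month}=\textsl{Combine}(\texttt{month},\texttt{b-day})$, business days do not fill months, and months and business days are certainly not label-aligned subgranularities of anything. In general $G'(i)\subseteq G_1(i)$ with the inclusion possibly strict, so the ``same span'' claim is false and cannot be used to drive the periodicity argument. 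Your heuristic that ``a period of $G'$ \dots\ must be a common multiple of $P_{G_1}$ and $P_{G_2}$'' is also aimed at the wrong target: the proposition asserts that $lcm(P_{G_1},P_{G_2})$ \emph{is} a valid period, not that it is minimal or necessary.

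The paper's route for Part~1 is different and avoids these issues. Rather than verifying $\bot\gpinto G'$ directly, it shows $G_2\gpinto G'$ with period $P_{G'}^{G_2}=\Delta'=\frac{lcm(P_{G_1},P_{G_2})N_{G_2}}{P_{G_2}}$ and label distance $\Delta=\frac{lcm(P_{G_1},P_{G_2})N_{G_1}}{P_{G_1}}$, then applies Theorem~\ref{theo:trans} (transitivity of $\gpinto$) to conclude $\bot\gpinto G'$ with the stated $P_{G'},N_{G'}$. The heart of the argument is showing that $s(i+\Delta)=\{\,j+\Delta':j\in s(i)\,\}$, which is done by a straightforward two-sided inclusion using only the periodicity of $G_1$ and $G_2$ in terms of $\bot$ (no span-matching needed). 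Your direct-verification strategy could be made to work, but it would essentially have to reprove this same shift identity for $s(\cdot)$, and factoring through $G_2$ via Theorem~\ref{theo:trans} packages it more cleanly. For Part~2 your outline is closer to the paper's double-inclusion proof, though note again that $G'(i)\subseteq G_1(i)$ gives only one direction of the ``same covered bottom granule'' claim; the paper handles the reverse direction carefully by chasing an $h$ with $0<h\le P_{G'}$ through the $\up{\cdot}$ operations.
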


\begin{example}
  Figure~\ref{fig:CombineOperation} shows an example of the combining operation: $G' = \textsl{Combine}(G_1, G_2)$. Since $P_{G_1}=6$, $N_{G_1}=2$, $P_{G_2}=4$ and $N_{G_2}=2$, then $P_{G'}=12$ and $N_{G'}=4$. Moreover, since $\Lbar{G_1}=\{1\}$ and $0 \in \down{1}^{G_1}$, then $\Lcap{G_1} = \{1,3\}$ and hence $\Lcap{G_1}^{P_{G'}} = \{1,3,5\}$. Since $\tilde{s}(i) \neq \emptyset$ for $i \in \{1,3,5\}$, then $\Lcap{G'}=\{1,3,5\}$; moreover, since $0 \in \down{1}^{G'}$, then $\Lbar{G'} = \{1,3\}$.
Finally $s(1) = \{-1,0\}$ and $s(3) = \{2, 3\}$; consequently, $G'(1) = G_2(-1) \cup G_2(0)$ i.e., $G'(1) = \bot(-1) \cup \bot(0) \cup \bot(1)$ and $G'(3) = G_2(2) \cup G_2(3)$ i.e., $G'(3) = \bot(4) \cup \bot(5) \cup \bot(7)$.
\end{example}

\begin{figure}[ht]
        \centering
                \includegraphics[width=.7\columnwidth]{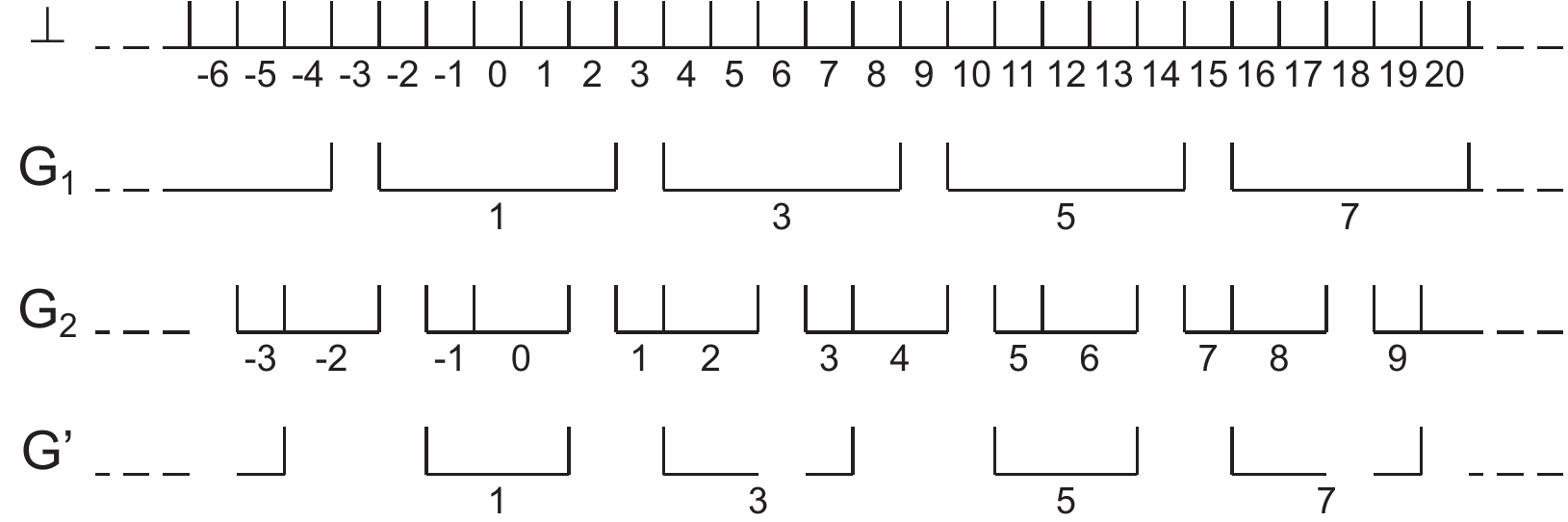}
        \caption{\textsl{Combine} operation example}
        \label{fig:CombineOperation}
\end{figure}

\subsection{The Anchored Grouping Operation}
\begin{prop}
\label{pr:anchorOperation}
Given $G'=\textsl{Anchored-group}(G_1, G_2)$, then:
\begin{enumerate}
        \item $P_{G'}=lcm(P_{G_1}, P_{G_2})$ and $N_{G'}= \frac{lcm(P_{G_1}, P_{G_2})\cdot N_{G_2}}{P_{G_2}}$;
        \item 
        \[\Lcap{G'}=\left \{ \begin{array}{ll}
      \Lcap{G_2}^{P_{G'}}, & \mbox{if } l_{G_2}=l_{G_1},\\
      \ \{l'_{G_2}\} \cup \Lcap{G_2}^{P_{G'}},     &\mbox{otherwise},
      \end{array}
      \right.
\]
        where $l'_{G_2}$ is the greatest among the labels of $\Lset{G_2}$ that are smaller than $l_{G_2}$.
        
        \item $\forall i \in \Lbar{G'} \; G'(i) = \bigcup_{j=i}^{i'-1}G_1(j)$ where $i'$ is the next label of $G_2$ after $i$.
\end{enumerate}
\end{prop}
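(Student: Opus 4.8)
The plan is to establish the three parts of Proposition~\ref{pr:anchorOperation} in turn, after first isolating an algebraic identity that makes the statement of Part~1 well posed. Since $G_2$ is a label-aligned subgranularity of $G_1$, each non-empty $G_2(i)$ equals $G_1(i)$, so the period-shift property (condition~(2) of Definition~\ref{def:pgroup}) applied to $\bot\gpinto G_2$ at such an $i$ shows that $G_1(i+N_{G_2})$ carries the $\bot$-content of $G_1(i)$ advanced by $P_{G_2}$; iterating this relation together with the corresponding one for $\bot\gpinto G_1$ up to the common label-shift $lcm(N_{G_1},N_{G_2})$ and equating the resulting content advances forces $P_{G_1}N_{G_2}=P_{G_2}N_{G_1}$. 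Hence $N_{G'}=\frac{lcm(P_{G_1},P_{G_2})\,N_{G_2}}{P_{G_2}}=\frac{lcm(P_{G_1},P_{G_2})\,N_{G_1}}{P_{G_1}}$, so $N_{G'}$ is at once a multiple of $N_{G_2}$ with $\frac{N_{G'}}{N_{G_2}}=\frac{P_{G'}}{P_{G_2}}$ and a multiple of $N_{G_1}$ with $\frac{N_{G'}}{N_{G_1}}P_{G_1}=P_{G'}$.

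For Part~1 I would check the three conditions of Definition~\ref{def:pgroup} for $\bot\gpinto G'$ with $P=P_{G'}=lcm(P_{G_1},P_{G_2})$ and $N=N_{G'}$. That $\bot\groups G'$ is clear, since every $G'(i)=\bigcup_{j=i}^{i'-1}G_1(j)$ ($i'$ the successor of $i$ in $\Lset{G_2}$) is a union of $\bot$-granules. Condition~(1): $\Lset{G'}=\Lset{G_2}$ is closed under adding $N_{G_2}$, hence under adding the multiple $N_{G'}$. Condition~(2): the label set $\Lset{G_2}$ is $N_{G_2}$-periodic, hence $N_{G'}$-periodic, so the successor of $i+N_{G'}$ is $i'+N_{G'}$ and $G'(i+N_{G'})=\bigcup_{j=i}^{i'-1}G_1(j+N_{G'})$; since $N_{G'}=\frac{P_{G'}}{P_{G_1}}N_{G_1}$, iterating condition~(2) for $\bot\gpinto G_1$ advances the $\bot$-content of each $G_1(j+N_{G'})$ by exactly $P_{G'}$ relative to $G_1(j)$, and the union inherits the shift. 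Condition~(3) follows from the analogous property of $G_2$, the ``unless beyond the greatest label'' clauses handling the bounded case.

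For Part~2 I would expand $\Lcap{G'}$ as $\{\,i\in\Lset{G'}\mid\bot(n)\subseteq G'(i)\text{ for some }1\le n\le P_{G'}\,\}$. Since $G'(i)\supseteq G_2(i)=G_1(i)$ for every $i\in\Lset{G'}=\Lset{G_2}$, the inclusion $\Lcap{G_2}^{P_{G'}}\subseteq\Lcap{G'}$ is immediate, so the only issue is the one extra label, namely that of the $G'$-granule straddling $\bot(1)$ from the left. Using that $G_1$ is full-integer labeled and monotone, $G_1(l_{G_1})$ is the granule containing $\bot(1)$ and every $G_1(j)$ with $j<l_{G_1}$ lies strictly to its left. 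If $l_{G_2}=l_{G_1}$, then $G_2(l_{G_2})$ already contains $\bot(1)$, whereas every $G'(i)$ with $G_2$-label $i<l_{G_2}$ is a union of $G_1(j)$'s over $j<l_{G_1}$ (its successor $i'\le l_{G_2}=l_{G_1}$) and so misses $\bot(1),\ldots,\bot(P_{G'})$; hence $\Lcap{G'}=\Lcap{G_2}^{P_{G'}}$. If $l_{G_2}\neq l_{G_1}$, then $l_{G_1}\notin\Lset{G_2}$ forces $l'_{G_2}\le l_{G_1}\le l_{G_2}-1$, so $G'(l'_{G_2})=\bigcup_{j=l'_{G_2}}^{l_{G_2}-1}G_1(j)\supseteq G_1(l_{G_1})\ni\bot(1)$ must be added, while $G'$-granules labeled below $l'_{G_2}$ still end before $\bot(1)$; hence $\Lcap{G'}=\{l'_{G_2}\}\cup\Lcap{G_2}^{P_{G'}}$. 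One also checks, using monotonicity, that no label to the right is spuriously gained by passing from $G_2(i)$ to the larger $G'(i)$.

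Part~3 is then immediate: once $\Lbar{G'}$ has been recovered from $\Lcap{G'}$ via the elementary correspondence recalled in Section~\ref{sec:TheConversionProcess}, one reads $G'(i)$ for $i\in\Lbar{G'}$ straight off the definition of \textsl{Anchored-group}, i.e.\ $G'(i)=\bigcup_{j=i}^{i'-1}G_1(j)$ with $i'$ the next $\Lset{G_2}$-label after $i$ (effective because $\Lset{G_2}$ is finitely described and $N_{G_2}$-periodic). I expect Part~2 to be the real obstacle: pinning down the single boundary granule requires careful bookkeeping of where $G_1$-granules begin and end relative to $\bot(1)$, the case split on $l_{G_2}$ versus $l_{G_1}$, and some attention to degenerate configurations (empty $G_1$-granules, bounded operands). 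Part~1 is routine once the identity $P_{G_1}N_{G_2}=P_{G_2}N_{G_1}$ is available, and Part~3 merely restates the operation.
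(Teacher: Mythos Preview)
Your proposal is correct and, for Parts~2 and~3, proceeds along the same lines as the paper (double inclusion with a case split on $l_{G_2}=l_{G_1}$ versus $l_{G_2}\neq l_{G_1}$; Part~3 read off from the definition). The paper's write-up of Part~2 is more exhaustive in the right-to-left inclusion, separately disposing of labels $k>\max(\Lcap{G_2}^{P_{G'}})$ and $k<l'_{G_2}$, which you compress into ``no label to the right is spuriously gained'' and ``$G'$-granules labeled below $l'_{G_2}$ still end before $\bot(1)$''; these are exactly the cases to check, so your sketch is on target.

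For Part~1 you take a genuinely different route. The paper does \emph{not} verify $\bot\gpinto G'$ directly; instead it shows $G_1\gpinto G'$ with period label distance $\Delta=\frac{lcm(P_{G_1},P_{G_2})N_{G_2}}{P_{G_2}}$ (using only that $\Lset{G_2}$ is $N_{G_2}$-periodic so that the successor of $i+\Delta$ is $i'+\Delta$), observes via Proposition~\ref{prop:SetOperations1} that $\Delta$ is a multiple of $N_{G_1}$, and then invokes Theorem~\ref{theo:trans} (transitivity of $\gpinto$) to conclude $\bot\gpinto G'$ with $P_{G'}=\frac{\Delta}{N_{G_1}}P_{G_1}=lcm(P_{G_1},P_{G_2})$. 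Your argument bypasses Theorem~\ref{theo:trans} entirely and checks the three clauses of Definition~\ref{def:pgroup} for $\bot\gpinto G'$ by hand, combining the label-periodicity of $\Lset{G_2}$ with the $\bot$-shift of $G_1$-granules under $N_{G'}$. Both approaches need the identity $P_{G_1}N_{G_2}=P_{G_2}N_{G_1}$; the paper quotes it as Proposition~\ref{prop:SetOperations1}, while you rederive it. Your direct route is self-contained and arguably more elementary; the paper's route is more modular, reusing the transitivity machinery that it applies uniformly to the other binary operations.
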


\begin{example}
  Figure~\ref{fig:AnchorOperation} shows an example of the anchored grouping operation: the \texttt{USweek} (i.e., a week starting with a \texttt{Sunday}) is defined by the operation
  \textsl{Anchored-group}(\texttt{day}, \texttt{Sunday}). Since $P_{\texttt{day}} = 1$ and $P_{\texttt{Sunday}} = 7$, then the period length of $\texttt{USweek}$ is 7.
  Moreover since $l_{\texttt{day}} = 11$, $l_{\texttt{Sunday}}=14$ and $\Lcap{\texttt{Sunday}}^{P_{\texttt{USweek}}}=\{14\}$, then $\Lcap{\texttt{USweek}}=\{7\} \cup \{14\}$.
  Clearly, since $0 \in \down{7}^{\texttt{USweek}}$ then $\Lbar{\texttt{USweek}} = \{7\}$.
Finally, $\texttt{USweek}(7)=\bigcup_{j=7}^{13} \texttt{day}(j)=\bigcup_{k=-3}^{3}\bot(k)$.
\end{example}

\begin{figure}[ht]
        \centering
                \includegraphics[width=.9\columnwidth]{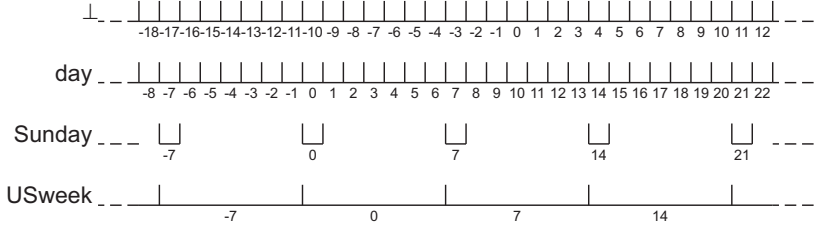}
        \caption{\textsl{Anchored Grouping} operation example}
        \label{fig:AnchorOperation}
\end{figure}

\subsection{The Subset Operation}
The \textsl{Subset} operation only modifies the operand granularity by introducing the bounds. The period length, the period label distance, $\Lbar{}$ and the composition of the explicit granules are not affected.

\subsection{The Selecting Operations}

\subsubsection{The Select-down Operation}
\begin{prop}
\label{pr:selectDownOperation}
Given $G'=\textsl{Select-down}_k^l(G_1, G_2)$, then:
\begin{enumerate}
        \item $P_{G'}=lcm (P_{G_1}, P_{G_2})$ and $N_{G'}=\frac{lcm(P_{G_1}, P_{G_2}) \cdot N_{G_1}}{P_{G_1}}$;
        
        \item  $\forall i \in \Lset{G_2}$ let
\[A(i) =  \Delta_{k}^{l} \left( \left\{ j \in \Lset{G_1} | \emptyset \neq G_1(j) \subseteq G_2(i) \right\} \right).
\]
Then
\[
\Lcap{G'}=\bigcup_{i \in \Lcap{G_2}^{P_{G'}}} \left\{a \in A(i) | a \in \Lcap{G_1}^{P_{G'}}\right\};
\]

        \item $\forall i \in \Lbar{G'} \; G'(i) = G_1(i)$.
\end{enumerate}
\end{prop}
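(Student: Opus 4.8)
The plan is to treat the three items in order. Item~3 is essentially free: by definition $\textsl{Select-down}_k^l$ only discards labels of $G_1$ and never alters a surviving granule, so $G'$ is a label-aligned subgranularity of $G_1$, whence $G'(i)=G_1(i)$ for every $i\in\Lbar{G'}$. All the work is in items~1 and~2, and, as for the \textsl{Combine} and \textsl{Anchored-group} cases, both rest on one structural fact that I would establish first: over any window of $P_{G'}=lcm(P_{G_1},P_{G_2})$ consecutive granules of $\bot$, the ``incidence pattern'' recording which granules of $G_1$ lie inside which granule of $G_2$ coincides, up to a translation of labels, with the pattern over the next such window. This holds because $P_{G'}$ is a common multiple of the periods of $G_1$ and $G_2$, so that advancing the label of $G_1$ by $N_{G_1}\cdot P_{G'}/P_{G_1}$ shifts the composition of the corresponding granule (in terms of $\bot$) by exactly $P_{G'}$, and likewise advancing the label of $G_2$ by $N_{G_2}\cdot P_{G'}/P_{G_2}$ shifts its granule by $P_{G'}$.

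For item~1, set $P=lcm(P_{G_1},P_{G_2})$ and $N=N_{G_1}\cdot P/P_{G_1}$, an integer since $P_{G_1}\mid P$, and prove $\bot\gpinto G'$ for this pair $(P,N)$. From the structural fact, $G_1(j)\subseteq G_2(i)$ holds iff $G_1(j+N)\subseteq G_2(i+N_2')$ with $N_2'=N_{G_2}\cdot P/P_{G_2}$, so the label set $\{j\in\Lset{G_1}\mid\emptyset\ne G_1(j)\subseteq G_2(i)\}$ appearing in the definition of the operation is carried to the corresponding set for $i+N_2'$ by the translation $j\mapsto j+N$; corresponding sets therefore have equal cardinality and the same order, so the position-based selector $\Delta_k^l$ commutes with the translation and $A(i+N_2')=\{a+N\mid a\in A(i)\}$. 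Taking the union over one period of $G_2$ shows that $\Lset{G'}=\bigcup_i A(i)$ is invariant under $j\mapsto j+N$, and since $N$ is a multiple of $N_{G_1}$ and $G'(i)=G_1(i)$, the granule $G'(i+N)$ is $G'(i)$ shifted by $P$. Checking the three clauses of Definition~\ref{def:pgroup} for $(P,N)$ is then routine bookkeeping, giving $P_{G'}=P$ and $N_{G'}=N$, exactly the claimed formulas. No minimality is asserted at this stage; the minimal period is obtained later by the machinery of Section~\ref{sec:minimality}.

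For item~2, start from the meaning of $\Lcap{G'}$: the set of labels $a$ of $G'$ such that $G'(a)$ covers some granule among $\bot(1),\dots,\bot(P_{G'})$. Since $G'(a)=G_1(a)$, this is equivalent to $a\in\Lset{G'}$ together with $a\in\Lcap{G_1}^{P_{G'}}$. If $a\in\Lset{G'}$ then $a\in A(i)$ for some $i\in\Lset{G_2}$, so $\emptyset\ne G_1(a)\subseteq G_2(i)$; if moreover $a\in\Lcap{G_1}^{P_{G'}}$, i.e.\ $G_1(a)$ covers some $\bot(n)$ with $1\le n\le P_{G'}$, then $G_2(i)\supseteq G_1(a)$ covers the same $\bot(n)$, hence $i\in\Lcap{G_2}^{P_{G'}}$. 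The converse inclusion being immediate, this yields $\Lcap{G'}=\bigcup_{i\in\Lcap{G_2}^{P_{G'}}}\{a\in A(i)\mid a\in\Lcap{G_1}^{P_{G'}}\}$. I would also note that each $A(i)$ with $i\in\Lcap{G_2}^{P_{G'}}$ is finite, because $G_2(i)$ is a finite union of $\bot$-granules and thus contains only finitely many granules of $G_1$ (cf.\ Section~\ref{sec:computability}), so the right-hand side is finite and effectively computable; $\Lbar{G'}$ is then recovered from $\Lcap{G'}$ by the immediate translation described in Section~\ref{sec:TheConversionProcess}.

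The hard part is the structural fact behind item~1: making precise that the incidence pattern of $G_1$ inside $G_2$ is genuinely $P$-periodic with respect to $\bot$ — in particular that a granule of $G_2$ straddling the boundary between two period windows of $\bot$ does not spoil periodicity, and that bounded or internally empty label sets cause no trouble, the latter being guaranteed here by the standing restriction that \textsl{Subset} is applied only as the last step, so every operand is an unbounded periodic granularity. Once that fact is in place, the commutation of $\Delta_k^l$ with the translation (which needs only that corresponding $G_1$-label sets have equal size and order) and the verification of the clauses of Definition~\ref{def:pgroup} reduce to mechanical checking.
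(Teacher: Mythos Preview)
Your proposal is correct and follows essentially the same route as the paper. For Part~1 the paper likewise establishes that the translation $j\mapsto j+\Gamma$ (your $N$) carries $S(i)=\{j\in\Lset{G_1}\mid\emptyset\ne G_1(j)\subseteq G_2(i)\}$ onto $S(i')$ with $i'=i+N_{G_2}\cdot P/P_{G_2}$, so that $\Delta_k^l$ commutes with the shift and $\Lset{G'}$ is $\Gamma$-periodic; the only packaging difference is that the paper then concludes $G_1\gpinto G'$ and invokes Theorem~\ref{theo:trans} to pass to $\bot\gpinto G'$, whereas you inline that transitivity step and verify Definition~\ref{def:pgroup} for $(P,N)$ directly. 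For Part~2 your direct two-inclusion argument (using $G'(a)=G_1(a)$ and $G_1(a)\subseteq G_2(i)$ to force $i\in\Lcap{G_2}^{P_{G'}}$) is exactly the paper's proof, which merely phrases each inclusion by contradiction.
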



\begin{example}
  Figure~\ref{fig:SelectDownOperation} shows an example of the \textsl{Select-down} operation in which granularity $G'$ is defined as: $G' = \textsl{Select-down}_2^1(G_1, G_2)$. Since $P_{G_1}=4$, $N_{G_1}=2$ and $P_{G_2}=6$ then $P_{G'}=12$ and $N_{G'}=6$.
  Moreover, since $\Lbar{G_2}=\{-3\}$ and $0 \in \down{-3}^{G_2}$, then $\Lcap{G_2}=\{-3, -2\}$ and $\Lcap{G_2}^{P_{G'}}=\{-3, -2, -1\}$. Intuitively, $A(-3)=\{-5\}$, $A(-2)=\{-2\}$ and $A(-1)=\{1\}$.
  Hence $\Lcap{G'}=\{-5, -2, 1\}$ and therefore, since $0 \in \down{-5}^{G'}$, $\Lbar{G'}=\{-5, -2\}$.
  Finally $G'(-5) = G_1(-5) = \bot(0) \cup \bot(1)$ and $G'(-2) = G_1(-2) = \bot(6)$.
\end{example}

\begin{figure}[ht]
        \centering
                \includegraphics[width=.8\columnwidth]{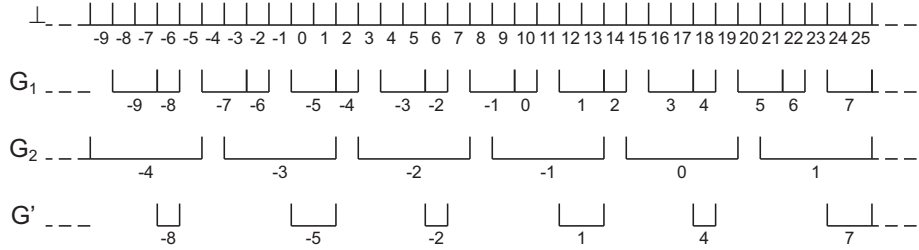}
        \caption{\textsl{Select-down} operation example}
        \label{fig:SelectDownOperation}
\end{figure}

\subsubsection{The Select-up Operation}
\begin{prop}
\label{pr:selectUpOperation}
Given $G'=\textsl{Select-up}(G_1, G_2)$, then:
\begin{enumerate}
        \item $P_{G'}=lcm (P_{G_1}, P_{G_2})$ and $N_{G'}=\frac{lcm(P_{G_1}, P_{G_2}) \cdot N_{G_1}}{P_{G_1}}$;
        
        \item  \[
\Lcap{G'}=\{i \in \Lcap{G_1}^{P_{G'}} | \exists j \in \Lset{G_2} \, s.t. \, \emptyset \neq G_2(j) \subseteq G_1(i) \};
\]

        \item $\forall i \in \Lbar{G'} \; G'(i) = G_1(i)$.
\end{enumerate}
\end{prop}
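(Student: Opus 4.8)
The plan is to verify the three claims of Proposition~\ref{pr:selectUpOperation} in turn, following exactly the structure used for the preceding operations (\textsl{Combine}, \textsl{Select-down}), since \textsl{Select-up} is formally simpler: it is a selecting operation, so $G'$ is a label-aligned subgranularity of $G_1$, no granule is modified, and part 3 ($G'(i)=G_1(i)$ for $i\in\Lbar{G'}$) is immediate from the definition of \textsl{Select-up} together with the fact established in Section~\ref{sec:ThePeriodicalGranulesRepresentation} that $\Lbar{G'}$ lists labels in one period. So the real content is parts 1 and 2.

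For part 1, I would first argue that $\bot$ groups periodically into $G'$, so that the periodical representation is well-defined; this follows because $\Lset{G'}\subseteq\Lset{G_1}$ and the ``selection condition'' ($\exists j\in\Lset{G_2}$ with $\emptyset\neq G_2(j)\subseteq G_1(i)$) is itself periodic with a period that divides $\mathrm{lcm}(P_{G_1},P_{G_2})$: if $G_1(i)$ contains a non-empty granule of $G_2$, then shifting $i$ by $N_{G_1}\cdot\frac{\mathrm{lcm}(P_{G_1},P_{G_2})}{P_{G_1}}$ labels of $G_1$ corresponds to shifting by $\mathrm{lcm}(P_{G_1},P_{G_2})$ bottom granules, which by periodicity of both $G_1$ and $G_2$ maps the contained $G_2$-granule to another non-empty $G_2$-granule contained in the shifted $G_1$-granule, and vice versa. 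Hence the grouping pattern of $\bot$ into $G'$ repeats after $P_{G'}=\mathrm{lcm}(P_{G_1},P_{G_2})$ bottom granules, and in that span the number of $G'$-labels equals the number of selected $G_1$-labels, whose label distance is therefore $N_{G'}=\frac{\mathrm{lcm}(P_{G_1},P_{G_2})\cdot N_{G_1}}{P_{G_1}}$ — i.e., the same $(P,N)$ pair already derived for \textsl{Select-down} and the other selecting operations, which is consistent since $\Lset{G_1}$ is refined to one of its subsets without rescaling.

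For part 2, I would invoke the alignment machinery of Section~\ref{sec:TheConversionProcess}: $\Lcap{G'}$ must list exactly those labels of granules of $G'$ covering one of $\bot(1),\ldots,\bot(P_{G'})$. Since $G'(i)=G_1(i)$ whenever $i\in\Lset{G'}$, such labels are precisely the elements of $\Lcap{G_1}^{P_{G'}}$ that additionally satisfy the selection condition; and by the computability discussion of Section~\ref{sec:computability}, testing ``$\exists j\in\Lset{G_2}$ with $\emptyset\neq G_2(j)\subseteq G_1(i)$'' can be done in finite time by examining $\down{i}^{G_1}$-related labels of $G_2$. This yields the stated formula $\Lcap{G'}=\{i\in\Lcap{G_1}^{P_{G'}}\mid\exists j\in\Lset{G_2}\ \emptyset\neq G_2(j)\subseteq G_1(i)\}$, and $\Lbar{G'}$ is then recovered from $\Lcap{G'}$ by the immediate $\Lbar{}$–$\Lcap{}$ conversion.

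The main obstacle is the first-step correctness argument: one must be careful that the selection condition is genuinely periodic with period dividing $\mathrm{lcm}(P_{G_1},P_{G_2})$, i.e., that a $G_2$-granule contained in $G_1(i)$ is carried, under the simultaneous period-shift of $G_1$ and $G_2$, to a $G_2$-granule still contained in the shifted $G_1$-granule and still non-empty — using Definition~\ref{def:pgroup} for both $G_1$ and $G_2$ and the hypothesis that $G_1,G_2$ are both label-aligned subgranularities of a common $H$ (so their labels live on a common scale and containment is preserved under the shift). Once this periodicity of the selection predicate is nailed down, parts 1 and 2 follow by the same bookkeeping already used for \textsl{Select-down}, and part 3 is trivial.
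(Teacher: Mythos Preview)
Your overall approach is correct and tracks the paper's proof closely: for part~1 you argue that the selection predicate is periodic with period $\mathrm{lcm}(P_{G_1},P_{G_2})$ in $\bot$, and for part~2 you identify $\Lcap{G'}$ with those $i\in\Lcap{G_1}^{P_{G'}}$ satisfying the selection condition, which the paper also does (via a double-inclusion argument by contradiction). One minor structural difference: the paper proves $G_1\gpinto G'$ with $P_{G'}^{G_1}=\Gamma$ and then invokes Theorem~\ref{theo:trans} to conclude $\bot\gpinto G'$, whereas you argue periodicity of $G'$ in $\bot$ directly; both work.

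There is, however, one genuine slip. You write that containment preservation under the shift relies on ``the hypothesis that $G_1,G_2$ are both label-aligned subgranularities of a common $H$.'' No such hypothesis exists for \textsl{Select-up}; that condition is required only for the set operations (Section~\ref{sub:setOp}). Fortunately your argument does not actually need it: the fact that a non-empty $G_2(j)\subseteq G_1(i)$ shifts to a non-empty $G_2(j+\tfrac{\mathrm{lcm}(P_{G_1},P_{G_2})N_{G_2}}{P_{G_2}})\subseteq G_1(i+\Gamma)$ follows purely from $\bot\gpinto G_1$ and $\bot\gpinto G_2$, exactly as in the paper's proof (equations~(\ref{selectup1})--(\ref{selectup3})). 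Drop the label-aligned clause and the proof stands.
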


\begin{example}
  Figure~\ref{fig:SelectUpOperation} shows an example of the \textsl{Select-up} operation: $G' = \textsl{Select-up}(G_1, G_2)$. Since $P_{G_1}=6$, $N_{G_1}=3$ and $P_{G_2}=4$ then $P_{G'}=12$ and $N_{G'}=6$.
  Moreover, since $\Lbar{G_1}=\{-3, -2, -1\}$ and $0 \in \down{-3}^{G_2}$, then $\Lcap{G_1}=\{-3, -2, -1, 0\}$ and $\Lcap{G_1}^{P_G'}=\{-3, -2, -1, 0, 1, 2, 3\}$. Since $G_1(-3) \supseteq G_2(-6)$, $G_1(-1) \supseteq G_2(-4)$ and $G_1(3) \supseteq G_2(0)$ then $\Lcap{G'}=\{-3, -1, 3\}$ and, since $0 \in \down{-3}^{G'}$, then $\Lbar{G'}=\{-3, 1\}$
  Finally $G'(-3) = G_1(-3) = \bot(0) \cup \bot(1)$ and $G'(-1) = G_1(-1) = \bot(4)$.
\end{example}

\begin{figure}[ht]
        \centering
                \includegraphics[width=0.7\columnwidth]{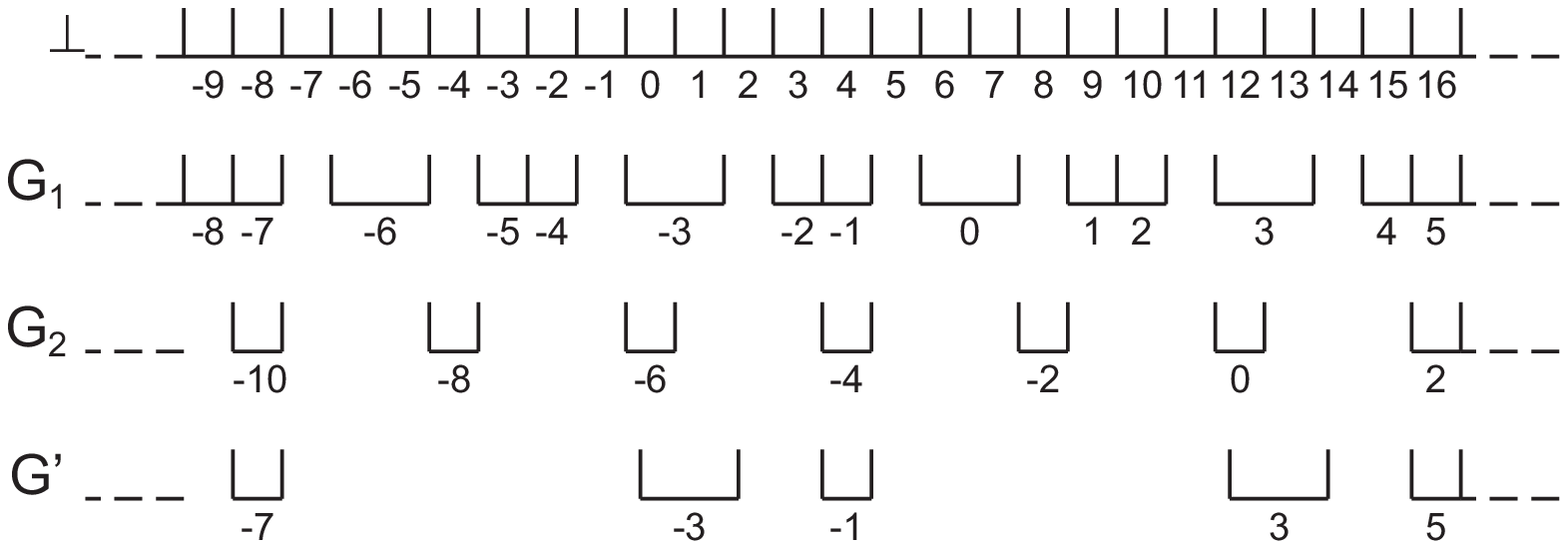}
        \caption{\textsl{Select-up} operation example}
        \label{fig:SelectUpOperation}
\end{figure}

\subsubsection{The Select-by-intersect Operation}
\begin{prop}
\label{pr:selectByIntersectOperation}
Given $G'=\textsl{Select-by-intersect}_k^l(G_1, G_2)$, then:
\begin{enumerate}
        \item $P_{G'}=lcm (P_{G_1}, P_{G_2})$ and $N_{G'}=\frac{lcm(P_{G_1}, P_{G_2})N_{G_1}}{P_{G_1}}$;
        
        \item  then $\forall i \in \Lset{G_2}$ let 
        \[A(i) =  \Delta_{k}^{l} \left( \left\{ j \in \Lset{G_1} | G_1(j) \cap G_2(i) \neq \emptyset \right\} \right).\]
then
\[
\Lcap{G'}=\bigcup_{i \in \Lcap{G_2}^{P_{G'}}} \left\{a \in A(i) | a \in \Lcap{G_1}^{P_{G'}}\right\}.
\]

        \item $\forall i \in \Lbar{G'} \; G'(i) = G_1(i)$.
\end{enumerate}
\end{prop}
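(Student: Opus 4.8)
The plan is to follow the same three-step template already used for the other two selecting operations (Propositions~\ref{pr:selectUpOperation} and~\ref{pr:selectDownOperation}), since \textsl{Select-by-intersect} differs from \textsl{Select-down} only in that the set of $G_1$-labels fed to $\Delta_k^l$ is $\{j\in\Lset{G_1}\mid G_1(j)\cap G_2(i)\neq\emptyset\}$ instead of $\{j\in\Lset{G_1}\mid\emptyset\neq G_1(j)\subseteq G_2(i)\}$. Before anything else I would record two facts that make $A(i)$ well defined: this set is finite, by the argument of Section~\ref{sec:computability} in the ``$\cap\neq\emptyset$'' variant mentioned there, and, by the monotonicity of granularities (condition (1) of Definition~\ref{def:labeledGran}), it is a block of consecutive labels of $G_1$, so $\Delta_k^l$ applies to it. Part 3 of the claim is then immediate: by the definition of the operation $G'$ is a \emph{label-aligned subgranularity} of $G_1$, hence $G'(i)=G_1(i)$ for every $i\in\Lset{G'}$, in particular for every $i\in\Lbar{G'}$.

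For part 1 I would verify that $P:=lcm(P_{G_1},P_{G_2})$ together with $N:=P\,N_{G_1}/P_{G_1}$ (an integer, as $P_{G_1}\mid P$) satisfies Definition~\ref{def:pgroup} for $\bot\gpinto G'$; the stated $N_{G'}$ is exactly this $N$ and $P_{G'}$ is this $P$. The crux is that advancing by $P$ granules of $\bot$ advances the labels of $G_1$ by exactly $N$ and the labels of $G_2$ by exactly $\rho:=P\,N_{G_2}/P_{G_2}$ simultaneously — obtained by iterating the periodicity of $\bot\gpinto G_1$ and of $\bot\gpinto G_2$, using that $P$ is a common multiple of $P_{G_1}$ and $P_{G_2}$. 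Consequently the relation ``$G_1(j)\cap G_2(i)\neq\emptyset$'' is invariant under the simultaneous shift $(j,i)\mapsto(j+N,i+\rho)$, so $\{j\mid G_1(j)\cap G_2(i+\rho)\neq\emptyset\}=N+\{j\mid G_1(j)\cap G_2(i)\neq\emptyset\}$, and since $\Delta_k^l$ commutes with translation, $\Lset{G'}$ is stable under the shift by $N$ induced by shifting the operand labels by $\rho$, which is condition (1) of Definition~\ref{def:pgroup}. Conditions (2) and (3) follow at once because the granules of $G'$ are inherited from $G_1$, whose periodicity with $(P_{G_1},N_{G_1})$, hence with $(P,N)$, is assumed known. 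This establishes that $(P,N)$ is a valid period; minimality of $P$ is a separate matter, treated in Section~\ref{sec:minimality}. That $N_{G'}=N$ is the right period label distance, and not merely a multiple of it, is because $G'$ reuses the labels of $G_1$, so over $P$ consecutive bottom granules its surviving labels advance by the same amount as those of $G_1$.

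For part 2 I would prove the two inclusions of $\Lcap{G'}=\bigcup_{i\in\Lcap{G_2}^{P_{G'}}}\{a\in A(i)\mid a\in\Lcap{G_1}^{P_{G'}}\}$. The inclusion ``$\supseteq$'' is direct: if $i\in\Lcap{G_2}^{P_{G'}}\subseteq\Lset{G_2}$ and $a\in A(i)$ then $a\in\Lset{G'}$ by the definition of the label set of \textsl{Select-by-intersect}, and $a\in\Lcap{G_1}^{P_{G'}}$ says that $G'(a)=G_1(a)$ covers some $\bot(y)$ with $1\le y\le P_{G'}$, i.e.\ $a\in\Lcap{G'}$. For ``$\subseteq$'', let $a\in\Lcap{G'}$; then $a\in\Lcap{G_1}^{P_{G'}}$ for the same reason, and $a\in\Lset{G'}$ yields some $i_0\in\Lset{G_2}$ with $a\in A(i_0)$, so what is left is to produce an $i\in\Lcap{G_2}^{P_{G'}}$, not necessarily $i_0$, with $a\in A(i)$.

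This last step is the one I expect to be the real obstacle, exactly as in the \textsl{Select-down} proof. From $a\in A(i_0)$ we get $G_1(a)\cap G_2(i_0)\neq\emptyset$, so $G_1(a)$ and $G_2(i_0)$ share a granule $\bot(z)$; since each granule of $G_1$ (resp.\ $G_2$) lies inside a window of fewer than $P_{G_1}\le P_{G'}$ (resp.\ $P_{G_2}\le P_{G'}$) consecutive bottom granules, and $G_1(a)$ also covers some $\bot(y)$ with $1\le y\le P_{G'}$, the index $z$ lies within one period $P_{G'}$ of the window $\bot(1)\ldots\bot(P_{G'})$. Using the periodicity of $G_2$ from part 1, one translates $i_0$ by a suitable multiple of $\rho$ so that the translated $G_2$-granule still meets $G_1(a)$ and now covers a granule of $\bot$ inside $\bot(1)\ldots\bot(P_{G'})$ — hence lies in $\Lcap{G_2}^{P_{G'}}$ — and argues that for at least one such translate $a$ itself, rather than a shifted copy $a\pm N$, is selected by $\Delta_k^l$. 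The reason no relevant label is lost in this bookkeeping is the deliberately generous definition of $\Lcap{}$: unlike $\Lbar{}$, it retains \emph{every} granule meeting $\bot(1)\ldots\bot(P_{G'})$, including the ones straddling either end of the period, so the match for $a$ can always be found inside the canonical window; a short case analysis on whether $\bot(z)$ falls inside or outside the window finishes the argument. Combining parts 1–3 with the granule-reconstruction formula of Section~\ref{sec:ThePeriodicalGranulesRepresentation} then gives the complete periodic representation of $G'$.
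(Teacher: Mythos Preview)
Your Part~1 and Part~3 are fine. For Part~1 the paper does not argue $\bot\gpinto G'$ directly as you do; it instead uses the common template for all three selecting operations (Section~\ref{sec:commonSelectingOperation}): one shows $\lambda\in\Lset{G'}\Rightarrow\lambda+\Gamma\in\Lset{G'}$ for $\Gamma=\frac{\mathrm{lcm}(P_{G_1},P_{G_2})N_{G_1}}{P_{G_1}}$, deduces $G_1\gpinto G'$ with $P_{G'}^{G_1}=\Gamma$, and then applies Theorem~\ref{theo:trans}. Your direct route is equivalent in content. For Part~2 the paper simply writes ``the proof is analogous to the ones of Proposition~\ref{pr:selectDownOperation}'', so the intended argument is exactly the two-inclusion proof you set up.

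The gap is in your ``$\subseteq$'' direction. In the \textsl{Select-down} proof the witness $i$ for $a\in A(i)$ satisfies $G_1(a)\subseteq G_2(i)$, so the very same $h\in\{1,\dots,P_{G'}\}$ with $\bot(h)\subseteq G_1(a)$ also lies in $G_2(i)$, and $i\in\Lcap{G_2}^{P_{G'}}$ follows immediately. You are right that this step does not carry over verbatim when containment is weakened to nonempty intersection. However, your proposed repair cannot work: translating $i_0$ by a nonzero multiple of $\rho$ shifts $G_2(i_0)$ by that multiple of $P_{G'}$ bottom granules, so the translated granule intersects $G_1(a\pm N)$ rather than $G_1(a)$; your premise ``the translated $G_2$-granule still meets $G_1(a)$'' is false. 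Nor is this merely cosmetic: take $P_{G_1}=4$, $G_1(j)=\bot(4j-5)\cup\bot(4j-3)$, $P_{G_2}=3$, $G_2(i)=\bot(3i-4)$, and $k=l=1$. Then $P_{G'}=12$, $a=1\in\Lcap{G'}$ (since $G'(1)=G_1(1)\supseteq\bot(1)$), yet the \emph{only} $i$ with $1\in A(i)$ is $i=1$, and $G_2(1)=\bot(-1)\notin\Lcap{G_2}^{P_{G'}}$. So no translate of $i_0$ puts $a=1$ in the right-hand side, and the equality you are trying to prove actually fails here. The paper's ``analogous'' is therefore more delicate than it looks, and your case analysis does not close the gap.
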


\begin{example}
  Figure~\ref{fig:SelectInterOperation} shows an example of the \textsl{Select-by-intersect} operation in which $G' = \textsl{Select-by-intersect}_2^1(G_1, G_2)$. Since $P_{G_1}=4$, $N_{G_1}=2$ and $P_{G_2}=6$ then $P_{G'}=12$ and $N_{G'}=6$.
  Moreover, since $\Lbar{G_2}=\{-3\}$ and $0 \in \down{-3}^{G_2}$, then $\Lcap{G_2}=\{-3, -2\}$ and $\Lcap{G_2}^{P_{G'}}=\{-3, -2, -1\}$. Intuitively, $A(-3)=\{-6\}$, $A(-2)=\{-2\}$ and $A(-1)=\{0\}$.
  Hence $\Lcap{G'}=\{-2, 0\}$ and therefore, since $0 \notin \down{-5}^{G'}$, then $\Lbar{G'}=\{ -2, 0\}$.
  Finally $G'(-2) = G_1(-2) = \bot(6)$ and $G'(0) = G_1(0) = \bot(10)$.
\end{example}

\begin{figure}[ht]
        \centering
                \includegraphics[width=.9\columnwidth]{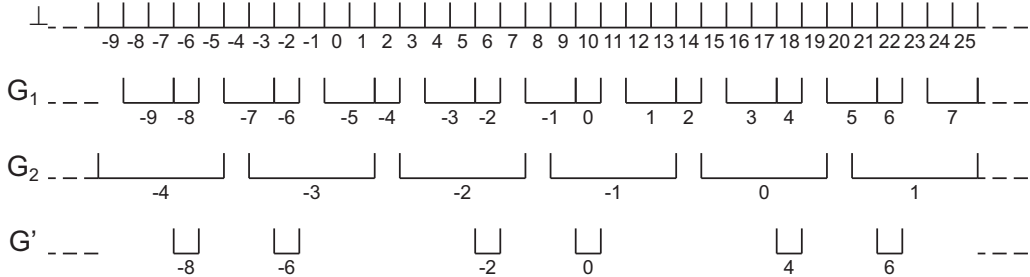}
        \caption{\textsl{Select-by-intersect} operation example}
        \label{fig:SelectInterOperation}
\end{figure}

\subsection{The Set Operations}
\label{sub:setOp}
Since a set operation is valid if the granularities used as argument are
both labeled aligned granularity of another granularity, the
following property is used.

\begin{prop}
\label{prop:SetOperations1}
If $G$ is a labeled aligned subgranularity of $H$, then
$\frac{N_G}{P_G}=\frac{N_H}{P_H}$.
\end{prop}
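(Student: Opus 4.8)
The plan is to extract, from Definition~\ref{def:pgroup} instantiated with the bottom granularity $\bot$ in the role of the lower granularity, the single quantitative fact that makes the proposition work: for any granularity $K$ with $\bot\gpinto K$ and any non-empty $K(i)$, write $K(i)=\bigcup_{j\in\down{i}^K}\bot(j)$ and set $f_K(i)=\min\down{i}^K$ (the ``starting $\bot$-position'' of the granule; this is well defined by monotonicity). Condition~(2) of Definition~\ref{def:pgroup} says that whenever $K(i)$ and $K(i+N_K)$ are non-empty, $\down{i+N_K}^K=\{\,j+P_K : j\in\down{i}^K\,\}$, hence $f_K(i+N_K)=f_K(i)+P_K$: advancing a label by one period label distance shifts the granule forward by exactly one period length.

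Next I would use the hypothesis. Because $G$ is a label-aligned subgranularity of $H$, every non-empty granule $G(i)$ equals $H(i)$, so $\down{i}^G=\down{i}^H$ and $f_G(i)=f_H(i)$ on the set of labels where $G$ is non-empty, and that set of labels is contained in $\Lset{H}$. I would also remark that in the setting where this proposition is applied — $G$ and $H$ are operands, or the common supergranularity, of a set operation — both granularities are unbounded, since bounds can only be introduced by the \textsl{Subset} operation, which is constrained to be the last operation applied; consequently all granules of $G$ and of $H$ are non-empty, $\Lset{G}\subseteq\Lset{H}$ outright, and clauses~(1) and~(2) of Definition~\ref{def:pgroup} apply at every label without side conditions.

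The heart of the proof is then a ``two routes'' computation. Fix any $i_0\in\Lset{G}$. Route one: step from $i_0$ to $i_0+N_G N_H$ by repeatedly adding $N_G$; each intermediate label lies in $\Lset{G}\subseteq\Lset{H}$ (clause~(1) for $\bot\gpinto G$, using unboundedness), each granule is non-empty, and the shift identity for $G$ moves the starting position by $P_G$, so after $N_H$ steps $f_H(i_0+N_G N_H)=f_H(i_0)+N_H P_G$. Route two: step from $i_0$ to the very same label $i_0+N_H N_G$ by repeatedly adding $N_H$; the same reasoning with the shift identity for $H$ gives $f_H(i_0+N_H N_G)=f_H(i_0)+N_G P_H$. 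Equating the two values yields $N_H P_G=N_G P_H$, i.e. $\frac{N_G}{P_G}=\frac{N_H}{P_H}$, and together with Proposition~\ref{prop:SetOperations1} this is exactly what the statement asserts. The only point requiring care is the bookkeeping that keeps every granule in both chains non-empty and every intermediate label inside the relevant label set so that Definition~\ref{def:pgroup}(1)--(2) genuinely apply at each step; reducing to the unbounded case, as justified above, disposes of this obstacle, after which only elementary arithmetic remains.
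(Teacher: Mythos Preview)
Your argument is correct and rests on the same core observation as the paper's: periodicity of $\bot\gpinto K$ means that shifting a label by $N_K$ shifts the $\bot$-composition by $P_K$, and label-alignment makes $G$-granules and $H$-granules coincide on $\Lset{G}$, so one can compare the two shifts at a common label. The packaging differs. The paper fixes the common $\bot$-period $P_c=\mathrm{lcm}(P_G,P_H)$, lets $N'_G,N'_H$ be the associated label distances, and argues that $G(i+N'_G)$ and $H(i+N'_H)$ have identical $\bot$-content, whence $i+N'_G=i+N'_H$ by monotonicity of $H$; the ratio identity then drops out of $N'_G=\beta N_G$, $N'_H=\alpha N_H$ with $\alpha P_H=\beta P_G=P_c$. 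You instead fix the common label step $N_GN_H$ and track the scalar $f_K(i)=\min\down{i}^K$ along two chains. Your route is a bit more concrete (no implicit appeal to ``same content implies same label'') at the price of iterating the basic shift $N_H$ and $N_G$ times; the paper's is a one-shot comparison via the lcm. Both need the unboundedness caveat you spell out. One writing slip: your closing phrase ``together with Proposition~\ref{prop:SetOperations1}'' is a circular self-reference, since that is precisely the statement you are proving; simply delete it.
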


\begin{prop}
\label{pr:setOperation}
Given $G'=G_1 \cup G_2$, $G''=G_1 \cap G_2$ and $G'''=G_1 \setminus G_2$, then:
\begin{enumerate}
        \item $P_{G'}=P_{G''}=P_{G'''}=lcm(P_{G_1}, P_{G_2})$ and \\ $N_{G'}=N_{G''}=N_{G'''}=\frac{lcm(P_{G_1}, P_{G_2})N_{G_1}}{P_{G_1}} = \frac{lcm(P_{G_1}, P_{G_2})N_{G_2}}{P_{G_2}}$;
        
        \item
         $\Lcap{G'}= \Lcap{G_1}^{P_{G'}} \cup \Lcap{G_2}^{P_{G'}}$; 
         $\Lcap{G''}= \Lcap{G_1}^{P_{G''}} \cap \Lcap{G_2}^{P_{G''}}$;
         $\Lcap{G'''}= \Lcap{G_1}^{P_{G'''}} \setminus \Lcap{G_2}^{P_{G'''}}$;

        \item
                $\forall i \in \Lbar{G'} \; G'(i)=\left \{ \begin{array}{ll}
      G_1(i), & i \in \Lset{G_1}\\
      G_2(i), & \mbox{otherwise},
      \end{array}
      \right.$
      
      $\forall i \in \Lbar{G''} \; G''(i)=G_1(i)$ and $\forall i \in \Lbar{G'''} \; G'''(i)=G_1(i)$
\end{enumerate}
\end{prop}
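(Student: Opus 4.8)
The plan is to reduce the whole statement to two elementary facts and then verify the three conversion steps in turn. The two workhorses are: \emph{(i)} by Proposition~\ref{prop:SetOperations1} we have $N_{G_1}/P_{G_1}=N_{G_2}/P_{G_2}$, and since (as noted in Section~\ref{sec:ThePeriodicalGranulesRepresentation}) a valid period pair $(P,N)$ can always be rescaled to $(\alpha P,\alpha N)$ for every $\alpha\in\mathbb{N}^+$, both $G_1$ and $G_2$ admit the \emph{common} period pair $(P^\ast,N^\ast)$ with $P^\ast=lcm(P_{G_1},P_{G_2})$ and $N^\ast=P^\ast N_{G_1}/P_{G_1}=P^\ast N_{G_2}/P_{G_2}$ (both expressions are integers and equal); \emph{(ii)} because $G_1$ and $G_2$ are both label-aligned subgranularities of the same $H$, for every $i\in\Lset{G_1}\cap\Lset{G_2}$ with $G_1(i)\ne\emptyset$ we have $G_1(i)=H(i)=G_2(i)$, so the algebra's definitions of $\cup$, $\cap$, $\setminus$ are unambiguous and, in each case, $G'(i)$ is literally one of $G_1(i)$, $G_2(i)$.

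For step~1, I would show that $(P^\ast,N^\ast)$ is a valid pair for each of $G'=G_1\cup G_2$, $G''=G_1\cap G_2$, $G'''=G_1\setminus G_2$, i.e. that each satisfies Definition~\ref{def:pgroup} relative to $\bot$. First $\bot\groups G'$ (and likewise $G'',G'''$), since every granule of the result is a granule of $G_1$ or of $G_2$ and $\bot$ groups into both. For the three conditions of Definition~\ref{def:pgroup}: condition~(1), that adding $N^\ast$ maps labels of the result to labels of the result away from the top boundary, holds because by~\emph{(i)} it does so for $\Lset{G_1}$ and for $\Lset{G_2}$ separately and the label set of the result is the union, intersection, or difference of these; condition~(2), that the $\bot$-composition at $i$ is the $P^\ast$-shift of the one at $i-N^\ast$, is inherited from $G_1,G_2$ through~\emph{(ii)}; condition~(3) follows from the corresponding property of the operands. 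This gives $P_{G'}=P_{G''}=P_{G'''}=P^\ast$ and the period label distances equal to $N^\ast$, which is exactly the two equivalent formulas claimed.

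For step~2, I unwind the definition: $i\in\Lcap{G'}$ iff $i\in\Lset{G'}$ and $G'(i)$ covers some granule among $\bot(1),\ldots,\bot(P^\ast)$. For the union, split on whether $i\in\Lset{G_1}$ (then $G'(i)=G_1(i)$, so the condition is $i\in\Lcap{G_1}^{P_{G'}}$) or $i\in\Lset{G_2}\setminus\Lset{G_1}$ (then $i\in\Lcap{G_2}^{P_{G'}}$), overlaps being harmless by~\emph{(ii)}; this gives $\Lcap{G'}=\Lcap{G_1}^{P_{G'}}\cup\Lcap{G_2}^{P_{G'}}$. The intersection case is immediate from the definition of $G_1\cap G_2$. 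The difference case is the one requiring care: I must check that $\Lcap{G_1}^{P_{G'''}}\setminus\Lcap{G_2}^{P_{G'''}}$ picks out exactly the $i\in\Lset{G_1}\setminus\Lset{G_2}$ whose granule covers the window. The only thing that could go wrong is an $i\in\Lset{G_1}\cap\Lset{G_2}$ with $G_1(i)$ covering the window but $G_2(i)$ not — but~\emph{(ii)} forces $G_1(i)=G_2(i)$, ruling this out, so subtracting $\Lcap{G_2}^{P_{G'''}}$ inside $\Lcap{G_1}^{P_{G'''}}$ is the same as discarding the labels lying in $\Lset{G_2}$. Finally each $\Lbar{}$ is obtained from the corresponding $\Lcap{}$ by the immediate $\Lcap{}\leftrightarrow\Lbar{}$ conversion of Section~\ref{sec:TheConversionProcess}.

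Step~3 is then immediate: for $i\in\Lbar{G'}$ the value $G'(i)$ is, by the algebra's definitions together with~\emph{(ii)}, one of $G_1(i)$, $G_2(i)$, whose explicit $\bot$-compositions have already been computed for the operands and re-aligned to $\bot(1)$ via the $\Lcap{}^{P_{G'}}$ machinery; one simply copies them, which is what the stated formula does. The main obstacle is not the arithmetic of step~1 but the difference operation in step~2: the whole argument that no label is ``lost for the wrong reason'' when subtracting $\Lcap{G_2}^{P_{G'''}}$ hinges on the hypothesis that $G_1$ and $G_2$ share a common super-granularity $H$ — exactly what yields, through Proposition~\ref{prop:SetOperations1}, the common period pair, and, through label-alignment, the identity $G_1(i)=G_2(i)$ on the overlap. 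A secondary nuisance is tracking conditions~(1) and~(3) of Definition~\ref{def:pgroup} at the first and last non-empty granules through the three set operations.
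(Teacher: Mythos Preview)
Your proposal is correct and follows essentially the same route as the paper: use Proposition~\ref{prop:SetOperations1} to obtain a common period pair $(P^\ast,N^\ast)$ for $G_1$ and $G_2$, then verify Definition~\ref{def:pgroup} for each set operation by case analysis on whether a label lies in $\Lset{G_1}$ or $\Lset{G_2}$, and finally check Part~2 by unwinding $\Lcap{}$. The paper proves Part~2 explicitly only for the union (by double inclusion via contradiction), while you also spell out the one nontrivial point for the difference --- that $G_1(i)=G_2(i)$ on the overlap rules out labels being ``lost for the wrong reason'' --- which is a welcome addition but not a different method.
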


\begin{example}
  Figure~\ref{fig:SetOperation} shows an example of the set operations. Note that both $G_1$ and $G_2$ are labeled aligned subgranularities of $H$. Then $G' = G_1 \cup G_2$, $G'' = G_1 \cap G_2$ and $G''' = G_1 \setminus G_2$. Since $P_{G_1}=P_{G_2}=6$ and $N_{G_1}=N_{G_2}=6$ then $P_{G'}=P_{G''}=P_{G'''}=6$ and $N_{G'}=N_{G''}=N_{G'''}=2$.
  Moreover, since $\Lcap{G_1}=\{1,2\}$ and $\Lcap{G_2}=\{2,3\}$, then $\Lcap{G'}=\{1,2,3\}$, $\Lcap{G''}=\{2\}$ and $\Lcap{G'''}=\{1\}$.
  Finally $G'(1) = G_1(1)$, $G'(2)=G_1(2)$ and $G'(3) = G_2(3)$; $G''(2) = G_1(2)$ and $G'''(1)=G_1(1)$.
\end{example}

\begin{figure}[ht]
        \centering
                \includegraphics[width=.9\columnwidth]{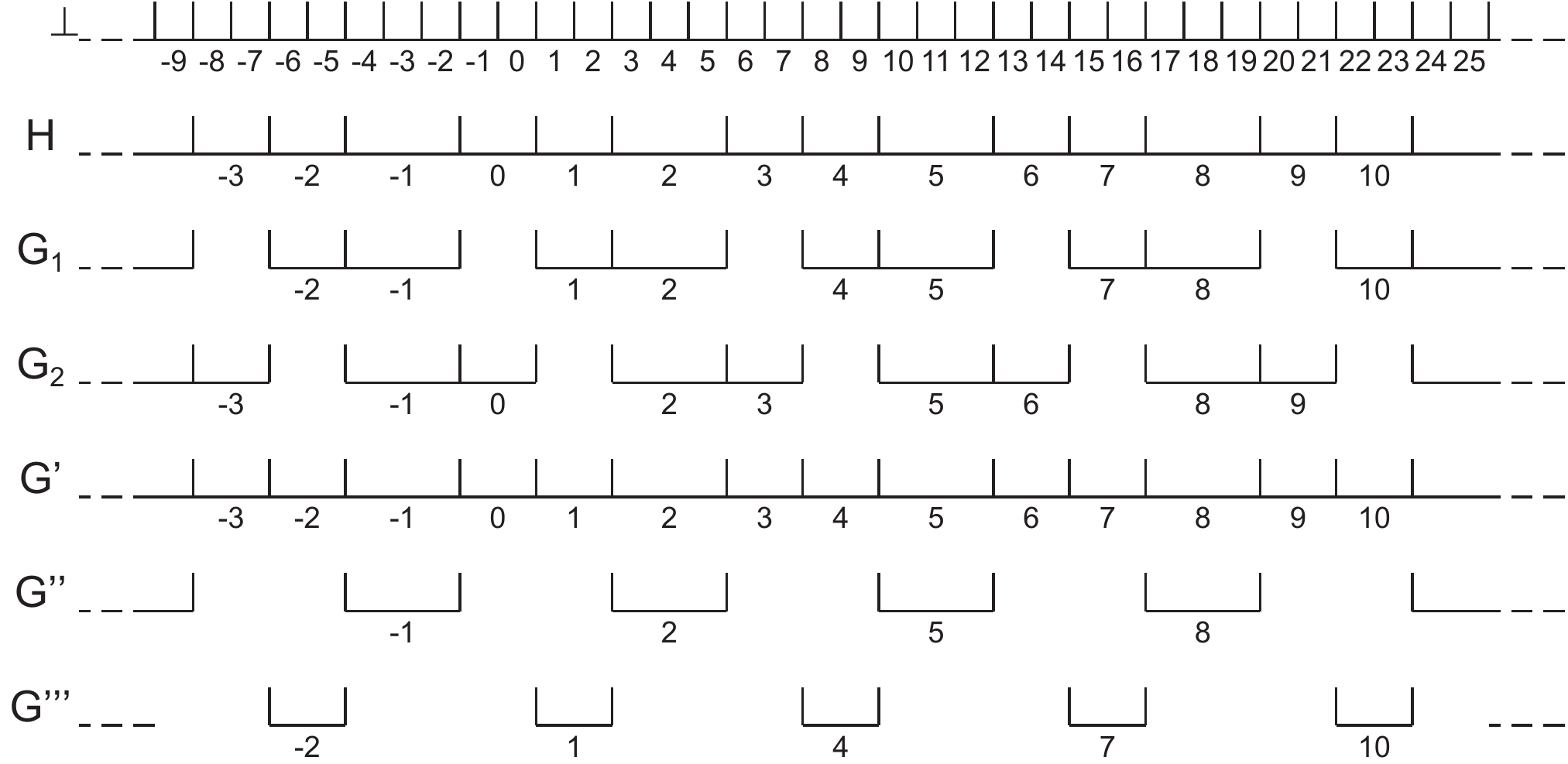}
        \caption{\textsl{Set} operations example}
        \label{fig:SetOperation}
\end{figure}

\subsection{Relabeling}
\label{sec:relabeling}
Granularity processing algorithms are much simpler if restricted to
operate on full-integer labeled granularities. Moreover, a further
simplification is obtained by using only the positive integers as the
set of labels (i.e., $\Lset{}=\mathbb{Z}^{+}$).

In this section we show how to relabel a granularity $G$ to obtain a
full-integer labeled granularity $G'$. A granularity $G''$ such that
$\Lset{G''} = \mathbb{Z}^{+}$ can be obtained by using $G'' =
\textsl{Subset}_1^{\infty}(G')$

Note that with the relabeling process some information is lost: for
example, if $G$ is a labeled aligned subgranularity of $H$ and $G \ne
H$, then, after the relabeling, $G$ is not a labeled aligned
subgranularity of $H$. The lost information is semantically meaningful
in the calendar algebra, and therefore the relabeling must be
performed only when the granularity will not be used as an operator in
an algebraic operation.

Let $G$ be a labeled granularity, $i$ and $j$ integers with $i \in
\Lset{G}$ s.t. $G(i) \neq \emptyset$. The relabeling operation
$\textsl{Relabel}_i^j(G)$ generates a full-integer labeled granularity
$G'$ by relabeling $G(i)$ as $G'(j)$ and relabel the next (and
previous) granule of $G$ by the next (and previous, respectively)
integer.  More formally, for each integer $k$, if $k=j$, then let
$G'(k) = G(i)$, and otherwise let $G'(k) = G(i')$ where $G(i')$ is the
$|j-k|$-th granule of $G$ after (before, respectively) $G(i)$. If the
required $|j-k|$-th granule of $G$ does not exist, then let
$G'(k)=\emptyset$. Note the $G'$ is always a full-integer labeled
granularity.

The relabeling procedure can be implemented in the periodic representation we adopted by computing the value of $l_{G'}$. It is easily seen that once $l_{G'}$ is known, the full characterization of $G'$ can be obtained with: $P_{G'}=P_{G}$; $N_{G'}=R_{G'}=R_{G}$ and $\Lbar{G'} = \{l_{G'}, l_{G'} + 1, \ldots, l_{G'} + N_{G'} - 2, l_{G'} + N_{G'} - 1\}$. It is clear that the explicit representation of the granules is not modified.

To compute $l_{G'}$ consider the label $i' = i - \left\lfloor \frac{i-l_G}{N_G}\right\rfloor \cdot N_G$;
$i'$ represents the label of $\Lbar{G}$ such that $i - i'$ is a multiple of $N_G$. Therefore it is clear that the label $j' \in \Lbar{G'}$ s.t. $G'(j') = G(i')$ can be computed by $j' = j - \left\lfloor \frac{i-l_G}{N_G}\right\rfloor \cdot N_{G'}$.
Finally $l_{G'}$ is obtained with $l_{G'}=j' - |\delta|$ where $\delta$ is the distance, in terms of number of granules of $G$, from $G(l_G)$ to $G(i')$.

\begin{example}
Figure~\ref{fig:relabeling} shows an example of the \textsl{Relabel} operation: $G' = \textsl{Relabel}_{33}^{4}(G)$. Since $P_G=4$ and $R_G=2$ then $P_{G'}=4$ and $N_{G'}=2$.
Moreover, $i'=33 - \left\lfloor \frac{33-6}{5} \right\rfloor \cdot 5 = 8$ and $j' = 4 - \left\lfloor \frac{33-6}{5}\right\rfloor \cdot 2 = -6$.
Since $l_G = 6$ and $i'=8$ then $G(i')$ is the next granule of $G$ after $G(l_G)$. Then $\delta=1$ and  hence $l_{G'}=-6 - 1 = -7$. It follows that $\Lbar{G'}=\{-7, -6\}$. Finally $G'(-7) = G(6)$ and $G'(-6)=G(8)$.
\end{example}

\begin{figure}[ht]
        \centering
                \includegraphics[width=.8\columnwidth]{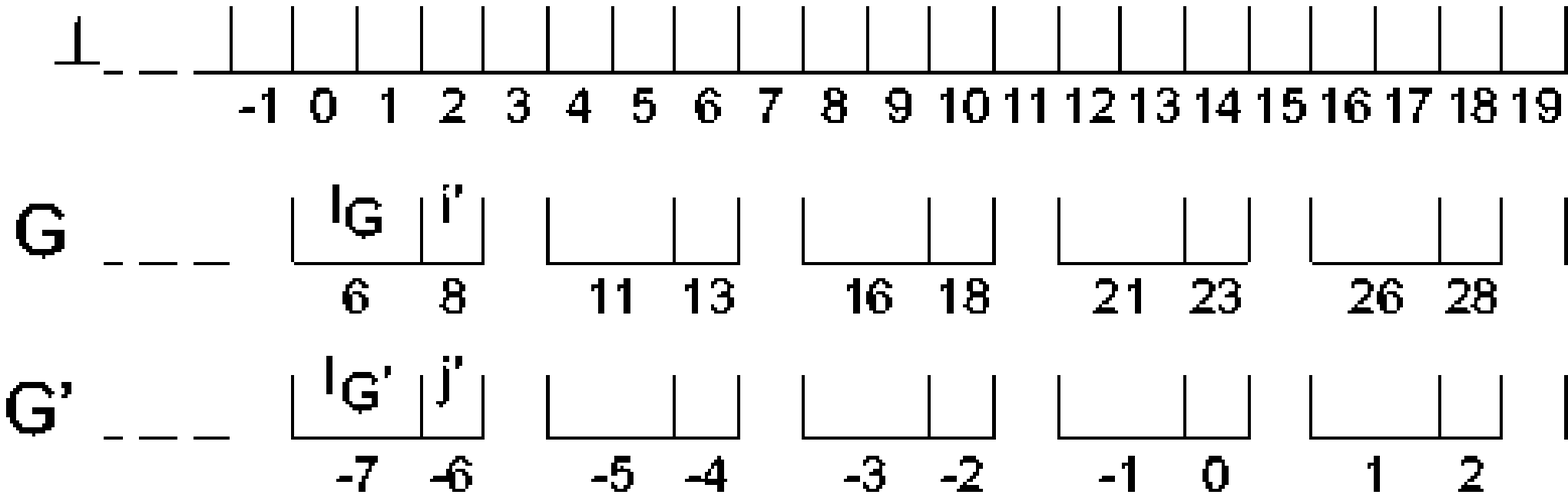}
        \caption{\textsl{Relabeling} example}
        \label{fig:relabeling}
\end{figure}

The GSTP constraint solver imposes that the first non-empty granule of any granularity ($\bot$ included) is labeled with $1$. Therefore, when using the relabeling operation for producing granularities for GSTP, the parameter $j$ must be set to $1$. The parameter $i$ has to be equal to the smallest label among those that identify granules of $G$ covering granules of $\bot$ that are all labeled with positive values. By definition of $l_G$, $i=l_G$ if $min(\down{l_G}^G) > 0$; otherwise $i$ is the next label of $G$ after $l_G$.

\subsection{Complexity Issues}
\label{sec:Complexity}
For each operation the time necessary to perform the three conversion
steps, depends on the operation parameters (e.g. the ``grouping
factor'' $m$, in the \textsl{Group} operation) and on the operand
granularities (in particular the period length, the period label
distance and the number of granules in one period).

A central issue is that if an operand granularity is not the bottom
granularity, then its period is a function of the periods of the
granularities that are the operands in the operation that defines
it. For most of the algebraic operations, in the worst case the period
of the resulting granularity is the product of the periods of the
operands granularity.

For all operations, the \textbf{first step} in the conversion process
can be performed in a constant or logarithmic time. Indeed the
formulas necessary to derive the period length and the period label
distance involve (i) standard arithmetic operations, (ii) the
computation of the Greatest Common Divisor and (iii) the computation
of the least common multiple. Part (i) can be computed in a constant
time while (ii) and (iii) can be computed in a logarithmic time using
Euclid's algorithm.

For some operations, the \textbf{second step} can be performed in
constant time (e.g. \textsl{Group}, \textsl{Shift} or
\textsl{Anchored-group}) or in linear time (e.g. set operations). For
the other operations it is necessary to compute the set $S$ of labels
of a granularity $G$ such that $\forall i \in S \; G(i) \subseteq
H(j)$ where $H$ is a granularity and $j \in \Lset{H}$ (analogously if
$S$ is the set such that $\forall i \in S \; G(i) \supseteq H(j)$ or
$\forall i \in S \; (G(i) \cap H(j) \neq \emptyset)$). This
computation needs to be performed once for each granule $i \in
P_H^{P_{G'}}$.  The idea of the algorithm for solving the problem has
been presented in Section~\ref{sec:computability}.  Several
optimizations can be applied to that algorithm, but in the worst case
(when $H$ covers the entire time domain) it is necessary to perform a
number of $\up{\cdot}^G$ operations linear in the period length of the
resulting granularity.
If an optimized data structure is used to represent the granularities,
the $\up{\cdot}^G$ operation can be performed in constant time
\footnote{If a non-optimized data structure is used, $\up{\cdot}^G$
  requires logarithmic time.}, then the time necessary to perform the
second step is linear in the period length of the resulting
granularity ($O(P_{G'})$).

The \textbf{last step} in the conversion process is performed in
linear time with respect to the number of granules in a period of
$G'$.
%

The complexity analysis of the conversion of a general algebraic
expression needs to consider the composition of the operations and
hence their complexity. Finally, relabeling, can be done in linear
time.

A more detailed complexity analysis is out of the scope of this work.

\section{Minimal Representation and Experimental Results}
\label{sec:minimality}
In this section we address the problem of guaranteeing that the
converted representation is minimal in terms 
of
the period length.
As we will show in Example~\ref{ex:non-minimal} the conversion formulas proposed
in this paper do not guarantee a minimal representation of the result
and it is not clear if conversion formulas ensuring minimality exist.
Our approach is to apply a minimization step in the conversion.

The practical applicability of the minimization step depends on the
period length of the representation that is to be minimized.
Indeed, in our tests we noted that the minimization step is efficient
if the conversion formulas proposed in Section \ref{ch:CalAlg2PSet} are adopted,
while it is impractical when the conversion procedure returns a period
that is orders of magnitude higher than the minimal one
as would be the case if conversion formulas were constructed in a
naive way.

%


\subsection{Period Length Minimization}
\label{sub:minimization}
As stated in Section~\ref{ch:gran}, each granularity can have
different periodical representations and, for a given granularity, it
is possible to identify a set of representations that are
\emph{minimal} i.e. adopting the smallest period length.

Unfortunately, the conversions do not always return a minimal representation, as shown by Example~\ref{ex:non-minimal}.

\begin{example}\label{ex:non-minimal}
  
  Consider a calendar that has \texttt{day} as the bottom granularity.
  We can define $\texttt{week}$ as
  $\texttt{week}=\textsl{Group}_{7}(\texttt{day})$; by applying the
  formulas for the \textsl{Group} operation we obtain
  $P_{\texttt{week}}=7$ and $N_{\texttt{week}}=1$.
  
  We can now apply the \textsl{Altering-tick} operation to add one day
  to every first week every two weeks. Let this granularity be $G_1 =
  \textsl{Alter}_{1, 1}^{2}(\texttt{day}, \texttt{week})$; applying
  the formulas for the \textsl{Altering-tick} operation we obtain
  $P_{G_1}=15$ and $N_{G_1}=2$.
  
  We can again apply the \textsl{Altering-tick} operation to create a
  granularity $G_2$ by removing one day from every first granule of
  $G_1$ every two granules of $G_1$: $G_2 = \textsl{Alter}_{1, -1}^{2}(\texttt{day}, G_1)$. Intuitively, by applying this operation
  we should get back to the granularity \texttt{week}, however using
  the formulas for the \textsl{Altering-tick} operation we obtain
  $P_{G_2}=14$ and $N_{G_2}=2$; Hence $G_2$ is not minimal.
\end{example}

In order to qualitatively evaluate how close to the minimal
representations the results of our conversions are, we performed a set
of tests using an algorithm \cite{time05} for minimality checking. In our
experimental results the conversions of algebraic expressions
defining granularities in real-world calendars, including many
user-defined non-standard ones, always returned exactly minimal
representations. Non-minimal ones could only be obtained by
artificial examples like the one presented in
Example~\ref{ex:non-minimal}.

Although a non-minimal result is unlikely in practical calendars, the
minimality of the granularity representation is known to greatly
affect the performance of the algorithms for granularity processing,
e.g., granularity constraint processing \cite{BWJ:aij02}, calendar
calculations \cite{TauZaman-SPE07}, workflow temporal support
\cite{CombiER03}.  Hence, we considered an extension of the
conversion algorithm by adding a minimization step exploiting
the technique illustrated by Bettini et al. \citeyear{time05}
to derive a minimal representation.

The choice of using only the conversion algorithm or the extended one with minimizations, should probably be driven by performance considerations.
In Section~\ref{sub:ExperimentalResults} we report the results of our experiments
showing that generally it is advantageous to apply the minimization step.
In our implementation, presented in Section~\ref{sub:implementation}, it is possible to specify if the minimization step should be performed.

\subsection{Implementation of the \emph{CalendarConverter} Web Service}
\label{sub:implementation}

The conversion formulas presented in Section~\ref{ch:CalAlg2PSet} have
been implemented into the \emph{CalendarConverter} web service that
converts Calendar Algebra representations into the equivalent
periodical ones.  More precisely, given a calendar in which
granularities are expressed by Calendar Algebra operations, the
service converts each operation into an equivalent periodical
representation.
%
%

The service first rewrites each calendar algebra expression in order
to express it only in terms of the bottom granularity. For example, if
the bottom granularity is $\texttt{hour}$, the expression
$\texttt{Monday}=\textsl{Select-down}^1_1(\texttt{day},
\texttt{week})$ is changed to
\[\texttt{Monday}=\textsl{Select-down}^1_1(\textsl{Group}_{24}(\texttt{hour}), \textsl{Group}_7(\textsl{Group}_{24}(\texttt{hour})))\]
Then, Procedure~\ref{alg:convert} is run for each granularity's expression.
The idea is that the periodical representation of each subexpression
is recursively computed starting from the expressions having the
bottom granularity as operand.
Once each operand of a given
operation has been converted to periodical representation, 
the corresponding formula presented in Section~\ref{ch:CalAlg2PSet} is
applied. We call this step the \emph{ConvertOperation} procedure.
%
%

A trivial optimization of Procedure~\ref{alg:convert} consists in
caching the results of the conversions of each 
subexpression
so that it
is computed only once, even if the 
subexpression
appears several times (like $\textsl{Group}_{24}(\texttt{hour})$ in the above \texttt{Monday} definition).



\begin{algorithm}[ht]
\floatname{algorithm}{Procedure}
\caption{ConvertExpression}
\label{alg:convert}
\begin{itemize}
      \item \textbf{Input}: a calendar algebra expression $ex$; a boolean value $minimize$ that is set to \textbf{true} if the minimization step is to be executed;
      \item \textbf{Output}: the periodical representation of $ex$;
      \item \textbf{Method}:
\end{itemize}
\begin{algorithmic}[1]
\IF{($ex$ is the bottom granularity)}
        \STATE \textbf{return} the periodical representation of the bottom granularity
\ENDIF
\STATE $operands:=\emptyset$
\FOR{(each operand $op$ of $ex$)}
      \STATE add ConvertExpression($op$, $minimize$) to $operands$;
\ENDFOR
\STATE $result := $ConvertOperation(ex.getOperator(), operands)
\IF{($minimize$)}
        \STATE minimize the periodical representation of $result$
\ENDIF
\STATE \textbf{return} result;
\end{algorithmic}
\end{algorithm}

\subsection{Experimental Results}
\label{sub:ExperimentalResults}
Our experiments address two main issues: first, we evaluate how the conversion formulas impact on the practical applicability of the conversion procedure and, second, we evaluate how useful is the minimization step.

\vspace{2mm}

For the first issue, we execute the conversion procedure with two different sets of conversion formulas and compare the results. The first set is laid out in Section~\ref{ch:CalAlg2PSet}.
The other, that is less optimized, is taken from the preliminary version of this paper~\cite{BMW:time04}.

\begin{table}
\caption{Impact of the conversion formulas on the performance of the
  conversion and minimization procedures (time in milliseconds).}
\label{table}
\label{tab:optimizedFormulas}
\begin{center}
\begin{tabular}{||l|l||r|r|r||r|r|r||} \hline
\multicolumn{2}{||c||}{Calendar} &
\multicolumn{3}{c||}{Section~\ref{ch:CalAlg2PSet} formulas} &
\multicolumn{3}{c||}{Less optimized formulas}
\\ \hline
Period & Bot &  Conv. & Min. & Tot. & Conv. & Min. & Tot.  \\ \hline
1 year & day & 4 & 2 & 6 & 62 & 32 & 94 \\ \hline
4 years & day & 7 & 2 & 9 & 76 & 55 & 131 \\ \hline
1 year & hour & 9 & 2 & 11 & 2,244 & 126,904 & 129,148 \\ \hline
4 years & hour & 16 & 4 & 20 & 4,362 & 908,504 & 912,866 \\ \hline
100 years & day & 127 & 9 & 136 & 3,764 & 1,434,524 & 1,438,288 \\ \hline
\end{tabular}
\end{center}
\end{table}

Table~\ref{table} shows that when converting calendars having
granularities with small minimal period length (first two
rows),
using the formulas in Section~\ref{ch:CalAlg2PSet} improves the performance by one order of magnitude; However, conversions and minimizations are almost instantaneous with both approaches.
On the contrary, when the minimal period length is higher, (last three
rows)
%
the time required to minimize the periodical representation is up to five orders of magnitude larger if the formulas proposed by Bettini et al. \citeyear{BMW:time04} are used; as a consequence, the entire conversion may require several minutes while, using the formulas presented in Section~\ref{ch:CalAlg2PSet}, it still requires only a fraction of a second.
If the period length is even larger, the conversion procedure is impractical if the formulas presented by Bettini et al. \citeyear{BMW:time04} are used, and indeed in our experiments we did not obtain a result in less than thirteen hours.

\vspace{2mm}

For the second issue, we perform a set of three experiments.
In the first one we compare the performance of the conversion procedure with the performance of the minimization step.
In the experiment we consider the case in which the conversion procedure produces minimal representations. In this case the minimization step is always an overhead since it cannot improve the performance of the conversion procedure.

Figure~\ref{fig:perf1} shows the result of the experiment. Four calendars are considered, each one containing a set of granularities of the Gregorian calendar.
The four calendars differs in the values of two parameters: the bottom granularity (it is \texttt{second} for cal-1 and cal-3 while it is \texttt{minute} for cal-2 and cal-4) and the period in which leap years and leap years exceptions are represented (it is $1$, $4$, $100$ and $400$ years for cal-1, cal-3, cal-2 and cal-4 respectively);
As a consequence, the minimal period length of the granularities \texttt{month} and \texttt{year} is about $3 \cdot 10^7$ for cal-1, $5 \cdot 10^7$ for cal-2, $10 ^ 8$ for cal-3 and $2 \cdot 10^8$ for cal-4. 

\begin{figure}[htbp]
  \centering
\includegraphics[width=.7\textwidth]{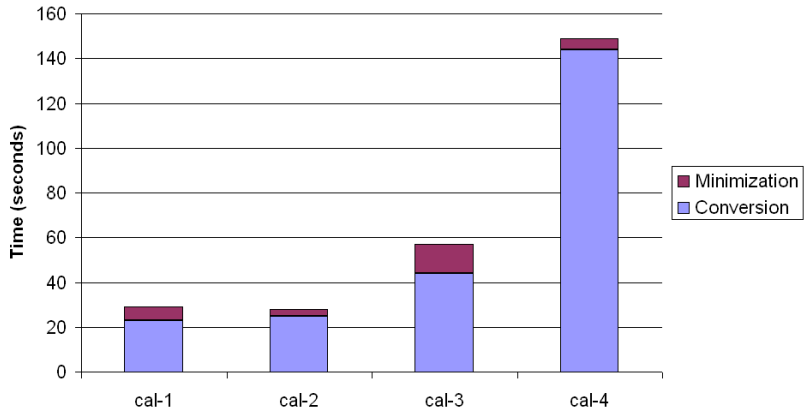}
\caption{Impact of minimization over conversion; minimal conversions case.}
  \label{fig:perf1}
\end{figure}

As can be observed in Figure~\ref{fig:perf1}, the ratio between the time required to perform the  conversions and the time required for the minimization step varies significantly from a minimum of $3\%$ for cal-4 to a maximum of $23\%$ for cal-3.
The reason is that the complexity of the conversion procedure is mainly affected by the period length of the granularity having the largest period length. On the other hand, the complexity of the minimization step is affected also by other features of the granularities such as their internal structure and the number of integers that can divide at the same time the period label distance, the period length and the number of granules in one period; For more details see \cite{time05}.

\vspace{0.1cm}

In the second experiment we consider the case in which the conversion procedure produces a non-minimal representation for a granularity in the input calendar;
in this case it is possible
to benefit from the minimization step.
For example, suppose that a granularity $G$ is converted and that it is then used as an argument of another Calendar Algebra operation that defines a granularity $H$. The time required to compute the periodical representation of $H$ strongly depends on the period length of $G$;
If the period length of $G$ is reduced by the execution of the minimization step, the conversion of $H$ can be executed faster.

We produced this situation using a technique similar to the one of Example~\ref{ex:non-minimal}; we created Calendar Algebra definitions of the Gregorian calendar in which the granularity \texttt{day} is converted into a granularity having a non-minimal representation.
Figure~\ref{fig:perf2} shows the performance obtained converting the same granularities that were used in Figure~\ref{fig:perf1}. The difference was that in this case the
definition of the granularity \texttt{day} is such that, after the conversion procedure,
its period is twice as large as the minimal one (i.e., $48$ hours or $2880$ minutes or $172800$ seconds depending on the bottom granularity that is used).
It can be easily seen that in this case the use of the minimization step can improve the performance of the entire algorithm. Indeed, when the minimization step is performed, the conversion procedure requires about one half of the time that is required when no minimization is performed.

\begin{figure}[htbp]
  \centering
\includegraphics[width=.7\textwidth]{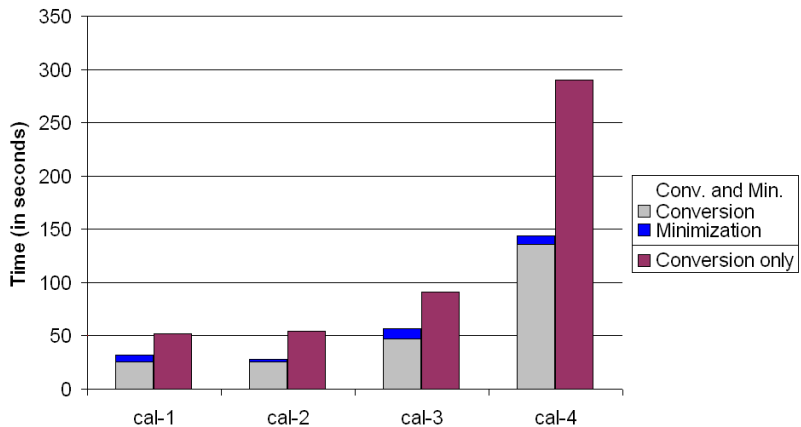}
\caption{Impact of minimization over conversion; non-minimal case.}
  \label{fig:perf2}
\end{figure}

\vspace{0.1cm}

In the third experiment we evaluate the impact of the minimal representation on the performance of applications involving intensive manipulations of granularities.
In the test we use the GSTP solver as such an application;
it computes solutions of temporal constraints with granularities. A description of the architecture of the GSTP system is provided in Section~\ref{sec:gstp}. 

Figure~\ref{fig:perf3} shows our experiments performed on four temporal constraint networks with granularities.
The four networks differs in the number of variables, in the number of constraints and in the granularities used to express the constraints. The networks labeled as ``non-minimal'' use granularities definitions that are obtained with a technique similar to the one used in Example~\ref{ex:non-minimal}, and have a period that is twice as large as the minimal one.

\begin{figure}[htbp]
  \centering
\includegraphics[width=.7\textwidth]{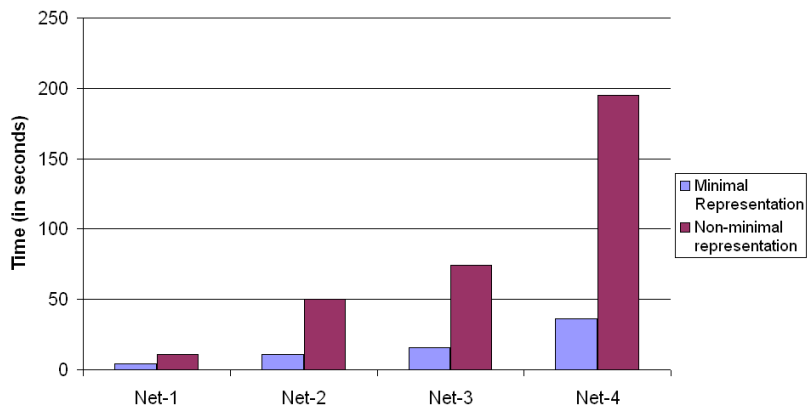}
\caption{Impact of minimal representations on the performance of the GSTP solver.}
  \label{fig:perf3}
\end{figure}

Figure~\ref{fig:perf3} shows that the use of minimal representations 
greatly
improves the performance of the GSTP solver. Indeed in our experiments
the ratio between the time required to solve the network using a
non-minimal representation and a minimal one is between three and
five. Moreover, the 
more
time required to solve the network, the
greater
the improvement obtained using the minimal representation; this means that for very complex temporal networks we expect the improvement to be even higher.

Considering the results of our experiments, we conclude that, in general, it is advisable to perform the minimization step.
In particular, it is very advantageous in the specific case of GSTP, based on the following considerations: 
i) the time required to perform the minimization step is only a fraction of the time required to perform the conversion procedure,
ii) the conversions are performed off-line in most cases, with respect
to granularity processing, and conversion results are cached for
future use,
and
iii) the period length strongly influences the GSTP
processing time that is in most cases much
longer
 than the time needed
for conversion.

\section{Applications}
\label{sec:applications}

In this section we complement the motivations for this work
with a sketch of the applications enabled by the proposed conversion.
Firstly we describe the GSTP system, as an example of 
applications
involving intensive manipulation of time granularities. GSTP is used
to check the consistency and to find solutions of temporal constraint
satisfaction problems with granularities\footnote{For a detailed
  description of the system, see \cite{BMP-LNAI05}.}; It has also
been applied to check the consistency of inter-organizational workflow
models \cite{BettiniWorkflow}.  Then, we discuss the use of Calendar
Algebra to define new granularities that may later be part of the
input of reasoning services, such as GSTP.


\subsection{The GSTP System}
\label{sec:gstp}
The GSTP system has been developed at the University of Milan with the
objective of providing universal access to the implementation of a set
of algorithms for multi-granularity temporal constraint satisfaction
\cite{BWJ:aij02}.
It allows the user to specify binary constraints of the form $Y - X \in [m, n]G$ where $m$ and $n$ are the minimum and maximum values of the distance from $Y$ to $X$ in terms of granularity $G$. Variables take values in the positive integers, and unary constraints can be applied on their domains. 
For example, the constraint: \emph{Event2 should occur 2 to 4 business
  days after the occurrence of Event1} can be modeled by $Occ_{E2} -
Occ_{E1} \in [2, 4]BDay$. This problem 
is
considered
an
extension of STP \cite{DMP-AI91} to multiple and arbitrary granularities.
To our knowledge, GSTP is the only available system to solve this
class of
temporal constraint satisfaction problems.

Figure~\ref{fig:arch} shows the general architecture of the GSTP
system. There are three main modules: the constraint solver; the web
service, which enables external access to the solver; 
and
a user
interface that can be used locally or remotely to design and analyze
constraint networks.

\begin{figure}[htbp]
  \centering
\includegraphics[width=0.7\textwidth]{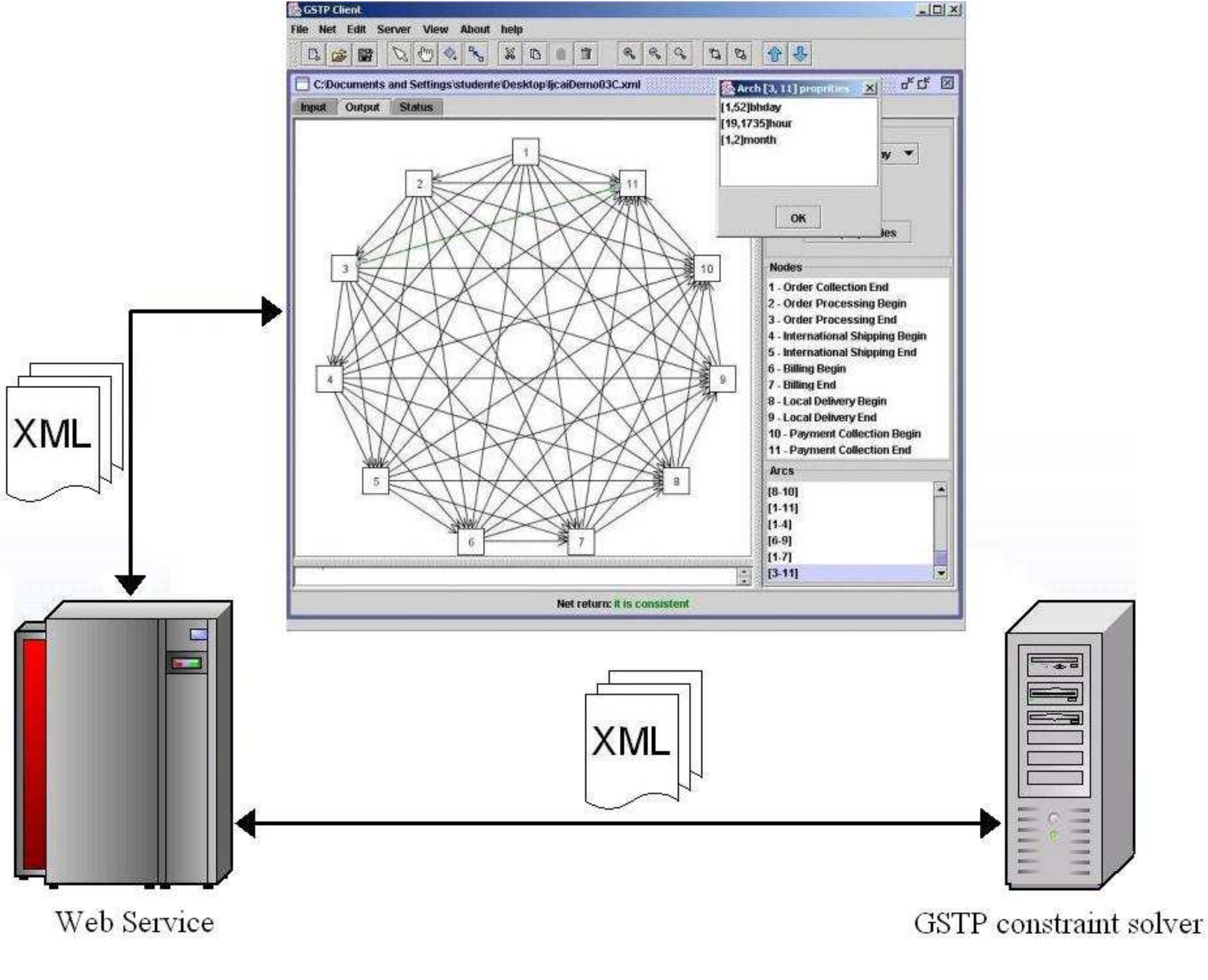}
\caption{The GSTP Architecture}
  \label{fig:arch}
\end{figure}

The constraint solver is the C implementation of the ACG algorithm
which has been proposed by Bettini et al. \citeyear{BWJ:aij02}, and it runs on a server
machine.  Following the approach of Bettini et al. \citeyear{BWJ:aij02}, the solver uses
the representation of granularities based on periodical sets. This
representation makes it possible to efficiently compute the core
operations on granularities that are required to solve the constraint
satisfaction problem. These operations involve, for example, the union
and the intersection of periodical sets.  While we cannot exclude that
these operations may be computed in terms of alternative low level
representations, it seems much harder to obtain similar results if a
high level representation, such as Calendar Algebra, is used.


The second module of the system is the Web Service that defines,
through a WSDL specification, the parameters that can be passed to the
constraint solver, including the XML schema for the constraint network
specification.
%

\begin{figure}[htbp]
  \centering
\includegraphics[width=.7\textwidth]{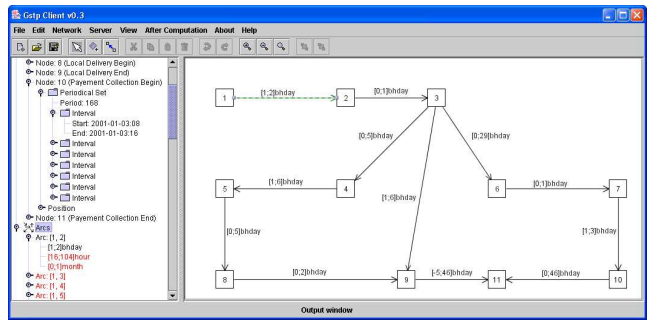}
 \caption{The GSTP User Interface}
  \label{fig:client}
\end{figure}

The third module is a remote Java-based user interface, which allows the user to easily edit 
constraint networks, to submit them to the constraint solver, and to
analyze results. In particular, it is possible to have views in terms
of specific granularities, to visualize implicit constraints, to browse 
descriptions of domains, and to obtain a network solution.
Fig.~\ref{fig:client} shows a screenshot from the interface.


\subsection{Defining New Granularities}
\label{subsec:CalDef}

While the GSTP solver can handle arbitrary granularities, new
granularities must be added by editing their explicit periodical
representation.  This is true in general for any 
multi-granularity reasoning service based on a low-level representation of granularities, and it
is a painful task when the granularities have a large period.
%
%
%
For example, in the experimental results illustrated in Figure~\ref{fig:perf3},
we used a representation of the granularity \texttt{month} that considers leap years and leap years exceptions in a period of 400 years.
In this case, the users have to specify the representation of
4800 granules i.e., the number of months in 400 years.

Because the period length of real world granularities is generally
high, a graphical interface does not help if it only supports the user
to individually select the explicit granules.  An effective solution
requires the use of implicit or explicit operations on granules.
Among the various proposals, Calendar Algebra provides the richest set
of such operators.
%
A question arises: is the definition of granularities in terms of
Calendar Algebra really simpler than the specification of the
periodical representation?  Calendar Algebra does not seem to be user
friendly: the exact semantics of each operator may not be immediate
for an
inexperienced
user
and some time is required in order to learn how to use each operator.
%

In practice, we do not think that it is reasonable to ask an unexperienced user to define granularities by writing Calendar Algebra expressions.
Nevertheless, we do think that Calendar Algebra can be used by specialized user interfaces to guide the user when specifying granularities.
In this sense, we believe that Calendar Algebra plays the same role that SQL does in the definition of databases queries.
Similarly to Calendar Algebra, SQL is an abstraction tool that can be directly exploited in all its expressive power by an advanced user, but can also be used by a less experienced user through a graphical user interface, possibly with a reduced expressiveness.

As mentioned above, in the case of periodical representations, graphical user interfaces are not sufficient for making the specification of new granularities practical.
On the contrary, in the case of Calendar Algebra, user interfaces can strongly enhance the usability of Calendar Algebra, making its practical use possible also for the definition of involved granularities. There are at least two reasons for this difference.
Firstly, the main difficulty of Calendar Algebra is the understanding of the semantics of the operators and the choice of the most appropriate one for a given task.
An effective user interface can hide the existence of the algebraic operators to the user 
showing only how the operators modify existing granularities (i.e., the semantics of the operators).
Secondarily, Calendar Algebra allows the compact definition of granularities.
This is due to the fact that the Calendar Algebra operations are specifically designed to reflect the intuitive ways in which users define new granularities.

Example~\ref{ex:ui} shows how a graphical user interface can be effectively used to define a new granularity in terms of Calendar Algebra expression.

\begin{figure}
                \subfigure[][\label{fig:step1} Step 1.]{\includegraphics[scale=0.30]{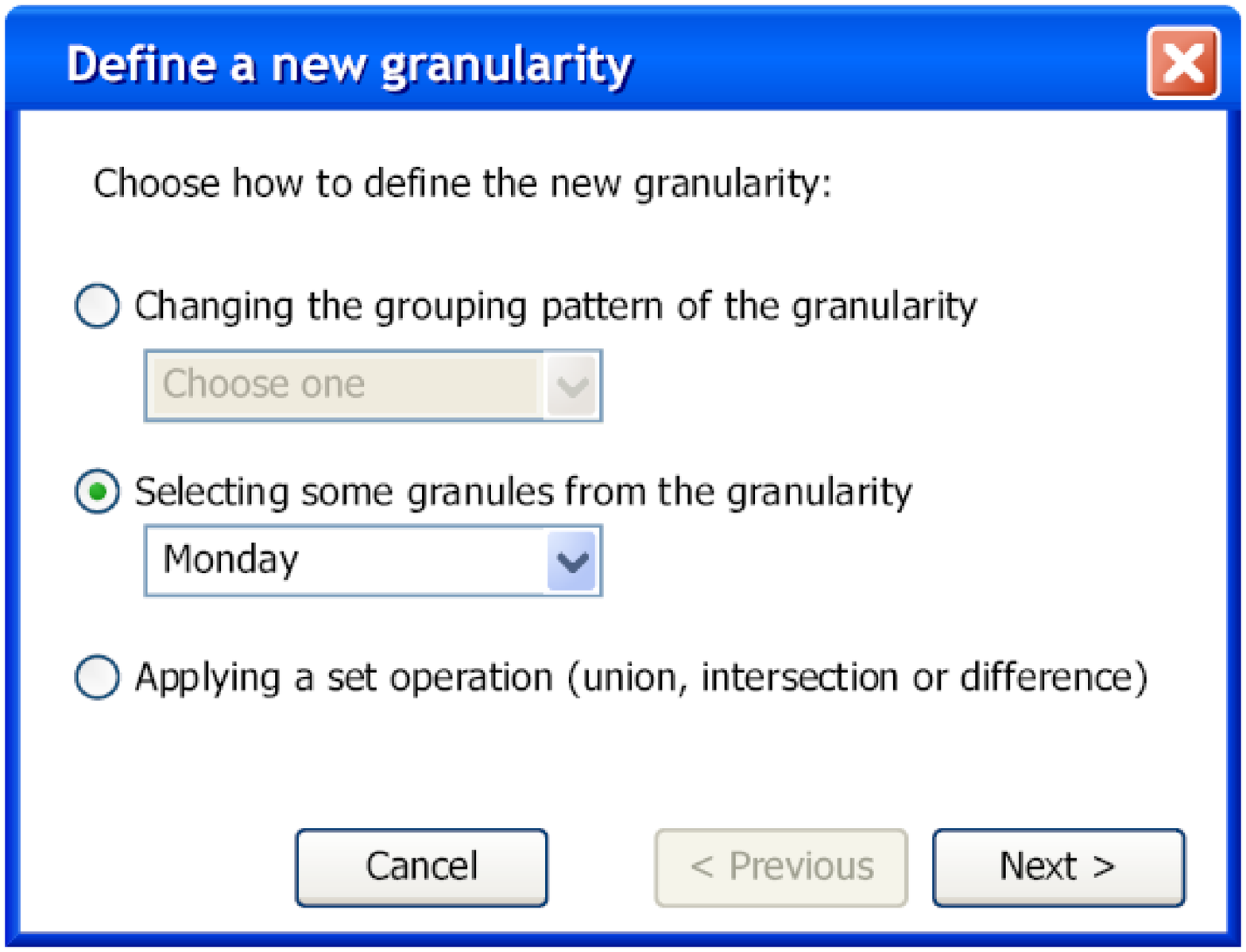}}
                \subfigure[][\label{fig:step2} Step 2.]{\includegraphics[scale=0.30]{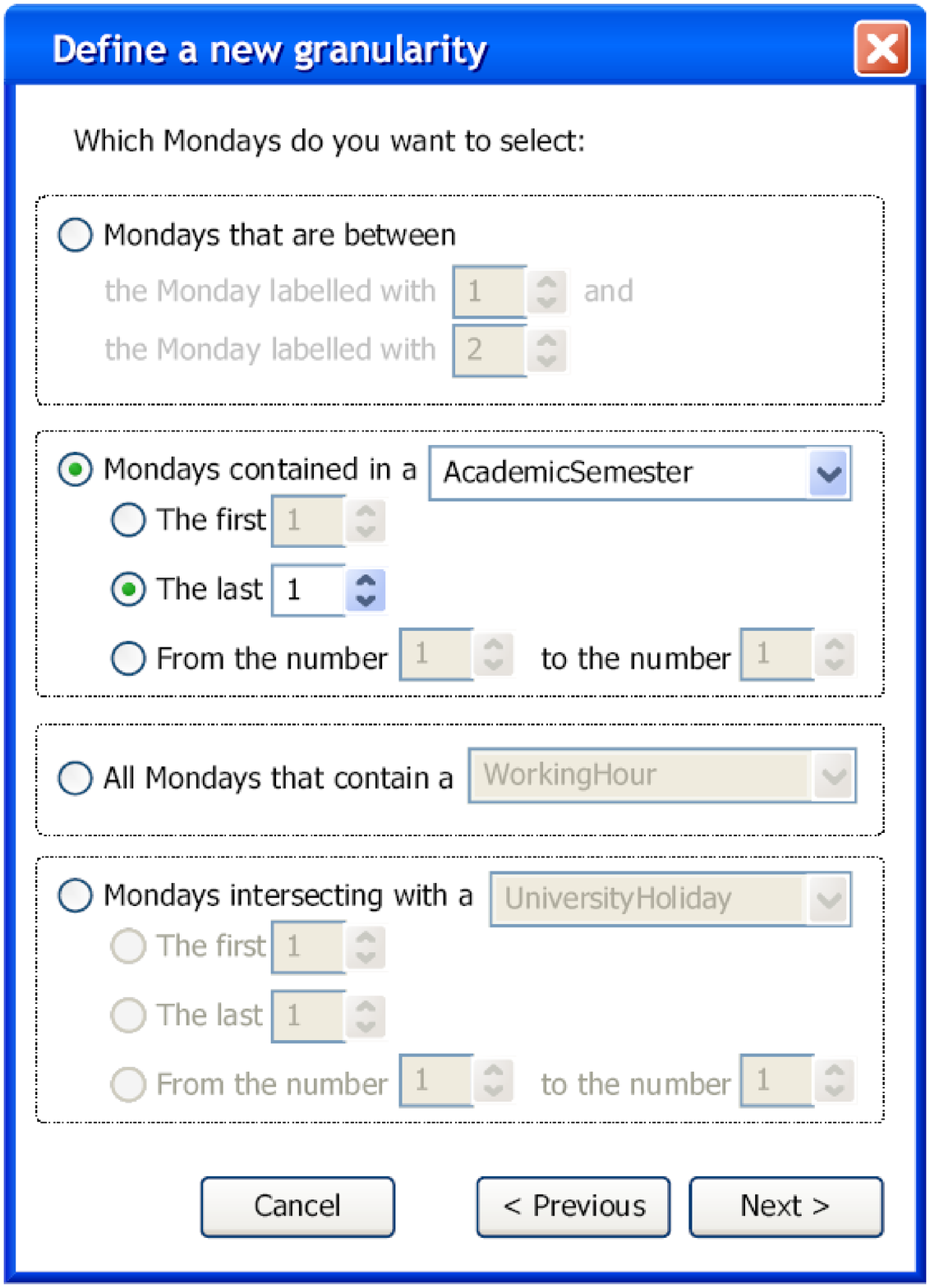}}
                \subfigure[][\label{fig:step3} Step 3.]{\includegraphics[scale=0.30]{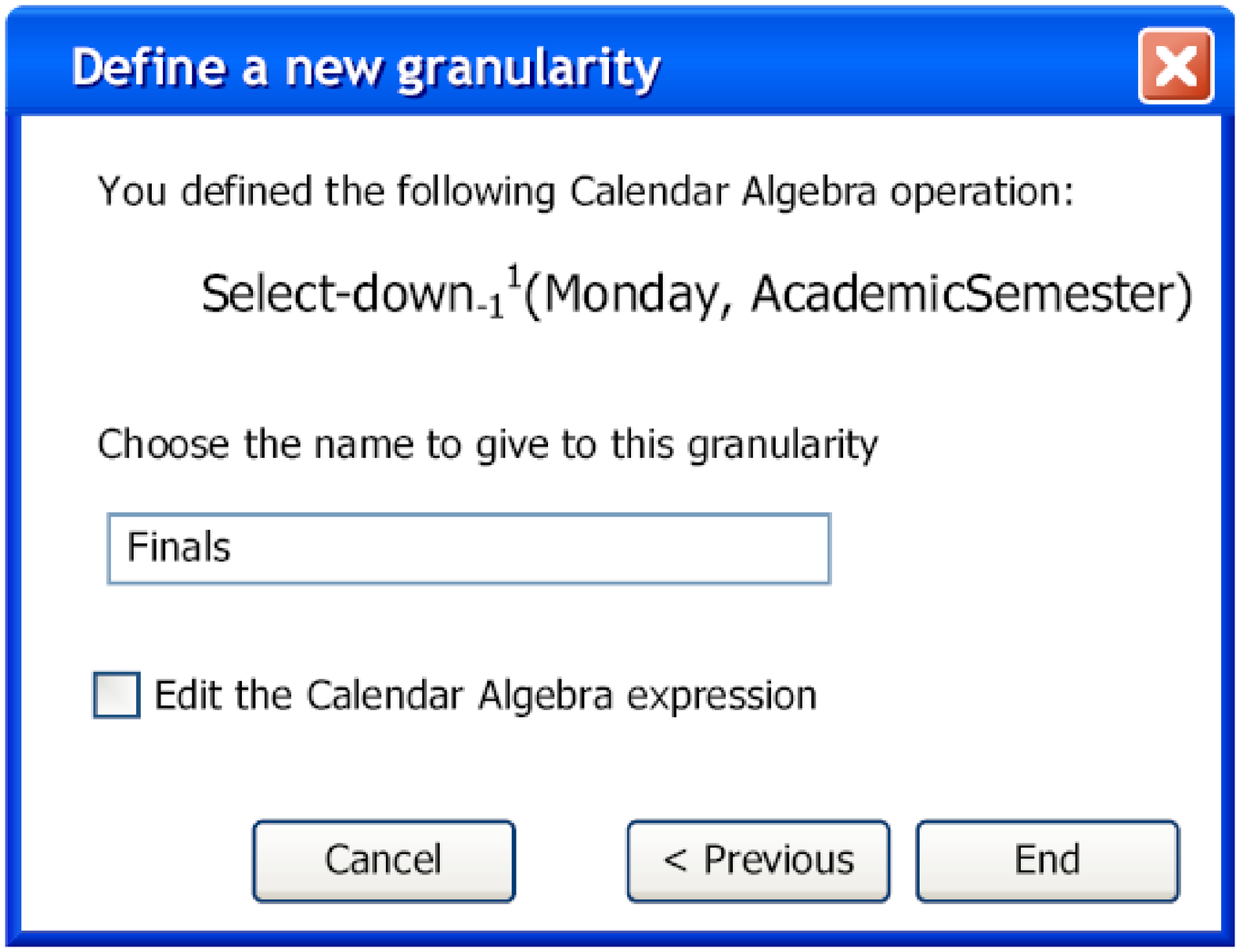}}
                \caption{A 3-steps wizard for visually defining a granularity using Calendar Algebra}
\end{figure}

\begin{example}
\label{ex:ui}
This example shows how a graphical user interface can be used to support the user in the definition of the granularity \texttt{final} as the set of days, each one corresponding to the last Monday of every academic semester.
We assume that the granularities \texttt{Monday} and \texttt{academicSemester} have already been defined.
The graphical user interface that we use in this example is a wizard that guides the user step by step.
In the first step (Figure~\ref{fig:step1}) the user chooses the kind of operation he wants to perform.
In the second step (Figure~\ref{fig:step2}) the user can provide more details about how he wants to modify the operand granularity (\texttt{Monday}, in the example). The results of this choice is a Calendar Algebra expression that is shown in the third step (Figure~\ref{fig:step3}); in this last window the user can also give a name to the granularity that has been defined.

\end{example}



\subsection{The Global Architecture}
\label{subsec:arch}

\begin{figure}[htbp]
  \centering
\includegraphics[width=0.5\textwidth]{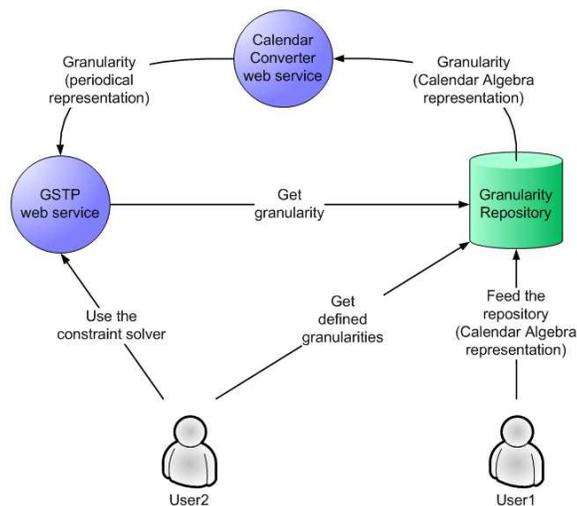}
 \caption{Integration of GSTP and CalendarConverter web services}
  \label{fig:composed-arch}
\end{figure}

Figure~\ref{fig:composed-arch} shows a possible architecture for the
integration of GSTP, the interface for new granularity definitions and
the CalendarConverter web service.  A granularity repository collects
the Calendar Algebra definitions. Upon request by the GSTP system definitions 
are converted in low-level representation by  the CalendarConverter web
service to be efficiently processed. Clearly, caching techniques can be used to optimize the process.

\section{Related Work}
\label{sec:rel}
Several formalisms have been proposed for symbolic
representation of granularities and periodicity.
Periodicity and its application in the AI and DB area have been
extensively investigated \cite{TC:IS03,MSK:CI96,KSW:PODS90,Ladkin:AAAI86}.
Regarding symbolic representation, it is well known
the formalism proposed by Leban et al. \citeyear{Leban-et-al:86}, that is based on the
notion of {\em collection}, and it is intended to represent temporal
expressions occurring in natural language.  A \emph{collection} is a
structured set of time intervals where the order of the collection
gives a measure of the structure depth: an order 1 collection is an
ordered list of intervals, and an order $n$ ($n>1$) collection is an
ordered list of collections having order $n-1$. Two operators, called
{\em slicing} and {\em dicing} are used to operate on collections by
selecting specific intervals or sub-collections, and by further
dividing an interval into a collection, respectively.  For example,
\texttt{Weeks:during:January2006} divides the interval corresponding
to \texttt{January2006} into the intervals corresponding to the weeks
that are fully contained in that month.  This formalism has been
adopted with some extensions by many researchers in the AI
\cite{Koomen91,cukierman:98} and Database area
\cite{Chandra-et-al:94,Ter:TKDE03}.  In particular,
the control statements \texttt{if-then-else} and
\texttt{while} have been introduced by Chandra et al.
\citeyear{Chandra-et-al:94} to facilitate the representation of
certain sets of intervals. For example, it is possible to specify:
\emph{the fourth Saturday of April if not an holiday, and the previous
  business day otherwise}.
%



{}As for the deductive database community, a second influential proposal 
is the \emph{slice} formalism introduced by Niezette et al. \citeyear{Niezette:92}. 
A slice denotes a (finite or infinite) set of
not necessarily consecutive time intervals.
For example, the slice
  \texttt{all.Years + \{2,4\}.Months +
    \{1\}.Days}~$\triangleright$~\texttt{2.Days} denotes a set of
  intervals corresponding to the first 2 days of February and April of
  each year.

A totally different approach is the {\em calendar algebra} described by Ning et al.
\citeyear{NWJ:amai02}, and considered in this paper. The representation is
based on a rich set of algebraic operators on periodic sets
as opposed to \textit{slicing} and \textit{dicing} over nonconvex intervals.

None of the above cited papers provide a mapping to identify how
each operator changes the mathematical characterization of the
periodicity of the argument expressions.  The problem of finding these
mappings is not trivial for some operators.

In \cite{BD:amai00} the expressive power of the algebras proposed by Leban et al.
\citeyear{Leban-et-al:86} and Niezette et al. \citeyear{Niezette:92} is compared and an
extension to the first is proposed in order to capture a larger set of
granularities. Since the periodical representation is used to compare
expressiveness, a mapping from calendar expressions in those
formalisms to periodical
representations can be found 
in the proofs of that
paper. However, since minimality is not an issue for the purpose of
comparing expressiveness, in many cases
the mapping returns non-minimal representations. 

Regarding alternative approaches for low-level representation, we
already mentioned that the ones based on strings
\cite{Wijsen:AAAI2000} and automata \cite{LagoMP03,BresolinMP04} may
be considered as an alternative for the target of our conversion.
As a matter of fact, an example of the conversion of a Calendar
Algebra expression into a string based representation can be found in
\cite{LagoM01}. A complete conversion procedure appeared during the
revision process of this paper in the PhD Dissertation by Puppis
\citeyear{PuppisThesis}.  The aim of the conversion is to prove that the
\emph{granspecs} formalism, used to represent granularities in terms
of automata, has at least the same expressiveness as the Calendar
Algebra.  Hence, obtaining minimal representations was not the goal.
Moreover, in their case minimization is not in terms of the period
length, but in terms of the automaton size and automaton complexity.
%
About the complexity of reasoning, given an automaton $M$, the worst
case time complexity of the operations analogous to our \emph{up}
and \emph{down} depends linearly on $||M||$, a value computed from $M$
itself and called \textit{complexity of $M$}.  In this sense $||M||$
has the same role of our period length ($P$), even if a precise
relationship between the two values is hard to obtain.  In our
approach we compute  \emph{up} in logarithmic time with respect to $P$ and
\emph{down} in linear time with respect to the dimension of the result
(that is bounded by $P$).  Other operations, like checking for
equivalence, seem to be more complex using automata
\cite{BresolinMP04}.
Techniques for minimization in terms of automaton complexity are
presented by Dal Lago et al. \citeyear{LagoMP03}, and the time complexity is proved to be
polynomial, even if the exact bound is not explicitly given.  In our
approach, the worst case time complexity for the minimization is
$O(P^{\frac{3}{2}})$ \cite{time05}.
Overall, the automata approach is very elegant and well-founded, but,
on one side it still misses an implementation in order to have some
experimental data to compare with, and on the other side only basic
operations have been currently defined; it would be interesting to
investigate the definition on that formalism of more complex
operations like the ones required by GSTP.


\section{Conclusion and Future Work}
\label{sec:conc}

We have presented an hybrid algorithm that
interleaves the conversion of Calendar Algebra subexpressions into periodical
sets with the minimization of the period length. We have proved that the algorithm returns
set-based granularity representations having minimal period length, which is extremely important for the efficiency of operations on granularities. 
Based on the technical contribution of this paper, a software system
is being developed allowing users to access multi-granularity reasoning
services by defining arbitrary time granularities with a high-level
formalism. 
Our current efforts are mainly devoted to completing and refining the
development of the different modules of the architecture shown in
Section~\ref{subsec:arch}.

As a future work, we intend to develop effective graphical user interfaces to
support the definition of Calendar Algebra expressions in a user friendly way.
Example~\ref{ex:ui} described one of the possible interfaces.
Another open issue is how to convert a periodical representation of a
granularity into a ``user friendly'' Calendar Algebra expression.
This conversion could be useful, for example, to present the
result of a computation performed using the periodical representation.
However, a naive conversion may not be effective since the resulting
calendar algebra expression could be as involved as the periodical
representation from which it is derived.
 For example, a conversion
procedure is presented by Bettini et al. \citeyear{BJW:book} to prove that the Calendar
Algebra is at least as expressive as the periodical representation;
however, the resulting Calendar Algebra expression is composed by a
number of Calendar Algebra operations that is linear in the number of
granules that are in one period of the original granularity.
On the contrary, an effective conversion should generate Calendar Algebra
expressions that are compact and easily readable by the user. This problem is somehow related
to the discovery of calendar-based association rules \cite{Ning:time01}.
Finally, we intend to investigate the usage of the automaton-based
representation as a low-level granularity formalism. It would be interesting to
know whether, using this representation, it is possible to
compute the same operations that can be computed with the periodical
representation and if any performance gain could be achieved.

\section*{Acknowledgments}
We thank the anonymous referees for their useful comments and suggestions.
The work of Bettini and Mascetti was partially supported by Italian MIUR
InterLink project N.II04C0EC1D. The work of Wang was partially supported by
the US NSF grant IIS-0415023.

\newpage
\appendix
\section{Proofs}
\label{ch:proofs}

\subsection{Transitivity of the \emph{Periodically Groups Into} Relationship}
In order to prove the correctness of the conversions of algebraic
expressions into periodical sets, it is useful to have a formal result
about the transitivity of the \emph{periodically groups into}
relation. In addition to transitivity of $\gpinto$, Theorem~\ref{theo:trans}
also says something about period length values.
%

\begin{theo}
\label{theo:trans}
Let $G$ and $H$ be two unbounded granularities such that $G$ is periodic in
terms of the bottom granularity (i.e., $\bot \gpinto G$) and $H$ is
periodic in terms of $G$ (i.e., $G \gpinto H$). Let $P_H^G$ and $N_H^G$ be the
period length and the period label distance of $H$ in terms of granules of $G$, and $N_G$ the period label distance of $G$ in terms of $\bot$.
Then, if $P_H^G=\alpha N_G$ for some positive integer
$\alpha$, then $H$ is periodic in terms of the bottom granularity
(i.e., $\bot \gpinto H$) and $P_H=\alpha P_G$.
\end{theo}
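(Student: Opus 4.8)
The plan is to unfold both periodicity hypotheses explicitly and then verify the three conditions of Definition~\ref{def:pgroup} for the pair $(\alpha P_G, \alpha N_G)$ with respect to $\bot \gpinto H$. Since $\bot \gpinto G$ holds, every granule of $G$ is a union of granules of $\bot$, and shifting a label of $G$ by $N_G$ shifts the underlying $\bot$-composition by $P_G$. Since $G \gpinto H$ holds with period $(P_H^G, N_H^G)$, a granule $H(i)$ that equals $\bigcup_r G(j_r)$ has $H(i+N_H^G) = \bigcup_r G(j_r + P_H^G)$. The key observation is that, \emph{because} $P_H^G = \alpha N_G$, shifting the $G$-labels in the composition of $H(i)$ by $P_H^G = \alpha N_G$ is, at the level of $\bot$, exactly a shift by $\alpha P_G$. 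So if I set $N := N_H^G$ and $P := \alpha P_G$, I should be able to chase the composition of $H(i)$ all the way down to $\bot$ and show that $H(i+N)$ is the same $\bot$-pattern shifted by $P$.

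**Main steps.** First I would fix an arbitrary label $i$ of $H$ and write $H(i) = \bigcup_{r} G(j_r)$ using $G \groups H$. Second, using $\bot \gpinto G$, I would write each $G(j_r) = \bigcup_{s} \bot(c_{r,s})$ for suitable integers $c_{r,s}$, so that $H(i) = \bigcup_{r,s}\bot(c_{r,s})$ — this establishes $\bot \groups H$. Third, apply $G \gpinto H$ to get $H(i+N_H^G) = \bigcup_r G(j_r + P_H^G)$ whenever this granule is non-empty. Fourth — the crux — I need $G(j_r + P_H^G) = \bigcup_s \bot(c_{r,s} + \alpha P_G)$; this follows by iterating condition~(2) of $\bot \gpinto G$ exactly $\alpha$ times, since $P_H^G = \alpha N_G$ means $j_r + P_H^G = j_r + \alpha N_G$, i.e.\ $\alpha$ successive shifts of the $G$-label by $N_G$, each of which shifts the $\bot$-composition by $P_G$. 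Assembling these gives $H(i+N_H^G) = \bigcup_{r,s}\bot(c_{r,s}+\alpha P_G)$, which is condition~(2) for $\bot \gpinto H$ with period $(\alpha P_G, N_H^G)$. Condition~(1) (existence of label $i+N_H^G$ in $H$) transfers directly from the corresponding condition for $G \gpinto H$, and condition~(3) (at least one repetition) likewise transfers, using unboundedness of $H$ so there is no "greatest label" issue. Finally, reading off the period length from the constructed representation gives $P_H = \alpha P_G$, and $N_H = N_H^G$ as a byproduct.

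**Anticipated obstacle.** The delicate point is step four: justifying that $\alpha$-fold iteration of condition~(2) of $\bot \gpinto G$ is legitimate and that the intermediate $G$-labels $j_r + N_G, j_r + 2N_G, \dots, j_r + (\alpha-1)N_G$ all lie in $\Lset{G}$ with non-empty granules, so that the hypothesis of condition~(2) is actually met at each stage. Here I would lean on the unboundedness of $G$ (so $\Lset{G} = \mathbb{Z}$ and every granule is non-empty, or at least the relevant ones are) together with condition~(1) of $\bot \gpinto G$, which guarantees the labels exist. A secondary subtlety is bookkeeping: condition~(2) of Definition~\ref{def:pgroup} is stated for a \emph{single} period label distance $N$, so I must be careful that the $\alpha$ iterations use $N_G$ (the period label distance of $G$ over $\bot$) and not $N_H^G$; keeping the two label-distances cleanly separated in the notation is where most of the care is needed. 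Everything else — $\bot \groups H$, conditions~(1) and~(3) — should be routine transfer arguments given the unboundedness assumptions.
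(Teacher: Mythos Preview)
Your proposal is correct and follows essentially the same route as the paper: decompose $H(i)$ into $G$-granules, decompose those into $\bot$-granules, then use $G\gpinto H$ to shift the $G$-indices by $P_H^G=\alpha N_G$ and $\bot\gpinto G$ (iterated $\alpha$ times) to translate that into a $\bot$-shift of $\alpha P_G$, finally checking conditions~(1) and~(3) via unboundedness. Your anticipated obstacle about the legitimacy of the $\alpha$-fold iteration is something the paper waves through in one line (``This can be clearly extended using $\alpha N_G$ instead of $N_G$''), so if anything you are being more careful than the original.
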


\begin{proof}
   Since by hypothesis $G \gpinto H$ and $P_H^G=\alpha N_G$, $\forall
  i$ if $H(i)=\bigcup_{r=0}^{n_i} G(i_r)$, then $H(i+N_H^G) = \bigcup_{r=0}^{n_i}G(i_r +
  \alpha N_G)$. This can be also written as follows:
  \\
  if
\begin{eqnarray}\label{HunionG}
        H(i)=G(i_0) \cup ... \cup G(i_{n_i})
\end{eqnarray}
then $\exists \beta \in \mathbb{N} s.t.$:
\begin{eqnarray}\label{H+bunionG}
        H(i+N_H^G)=G(i_0 + \alpha N_G) \cup ... \cup G(i_{n_i} + \alpha N_G)
\end{eqnarray}

Since $\bot \gpinto G$, if
\begin{eqnarray}\label{GunionB}
G(i_{j})=\bigcup_{k=0}^{\tau_{i_{j}}} \bot(i_{j,k})
\end{eqnarray}
then

\begin{eqnarray}\label{G+NunionG}
G(i_{j} + N_G) = \bigcup_{k=0}^{\tau_{i_{j}}} \bot(i_{j,k}+ P_G)
\end{eqnarray}

This can be clearly extended using $\alpha N_G$ instead of $N_G$.
\begin{eqnarray}\label{G+aNunionG}
G(i_{j} + \alpha N_G) = \bigcup_{k=0}^{\tau_{i_{j}}} \bot(i_{j,k}+ \alpha P_G)
\end{eqnarray}

Rewriting (\ref{HunionG}) substituting $G(i_j)$ according to (\ref{GunionB}) and rewriting (\ref{H+bunionG}) substituting $G(i_j+\alpha N_G)$ according to (\ref{G+aNunionG}), we obtain:\\
if
\( 
H(i)=\underbrace{\bot(i_{0,0}) \cup \ldots \cup \bot(i_{0,\tau_{i_{0}}})}_{G(i_{0})} \cup 
\ldots \cup \underbrace{\bot(i_{n_{i}, 0}) \cup ... \cup \bot(i_{n_{i}, \tau_{i_{n_{i}}}})}_{G(i_{n_{i}})}
\)
\\
then
\(
H(i+N_H^G)=\underbrace{\bot(i_{0,0}+\alpha P_G) \cup \ldots \cup \bot(i_{0,\tau_{i_{0}}}+\alpha P_G)}_{G(i_{0} + \alpha N_G)} \cup \ldots 
\\
\quad\quad \cup \underbrace{ \bot(i_{n_{i}, 0}+\alpha P_G) \cup \ldots \cup \bot(i_{n_{i}, \tau_{i_{n_{i}}}}+\alpha P_G)}_{G(i_{n_{i}}+\alpha N_G)}
\)
\\
Hence the second condition of Definition~\ref{def:pgroup} is satisfied. 
The third one is always satisfied for unbounded granularities.
The first one is satisfied too; in fact since $G \gpinto H$ with a period label distance of $N_H^G$, then for each label $i$ of $H$,  $i + N_H^G$ is a label of $H$.
Hence, by definition of periodically-groups-into $\bot \gpinto H$ with $P_H = \alpha P_G$ and $N_H = N_H^G$.
\end{proof}

\subsection{Proof of Proposition~\ref{prop:GroupOperation}}
\subsubsection{Part 1}
{}From the definition of the \textsl{Group} operation, for all $i \in \mathbb{N}$:
\[
G'(i)=\bigcup_{j=(i-1)m+1}^{im} G(j) = G(im-m+1) \cup \ldots \cup G(im)=G(\lambda) \cup \ldots \cup G(\lambda + m -1)
\]
with $\lambda = im - m + 1$. Furthermore, $\forall k \in \mathbb{N}$:
\[
G'(i+k)=\bigcup_{j=(i+k-1)m+1}^{(i+k)m} G(j) = G(im+km-m+1) \cup \ldots \cup G(im+km)=
\]
\[
= G(\lambda+km) \cup \ldots \cup G(\lambda + km+ m -1)
\]

\noindent
Hence,
\begin{eqnarray}\label{group}
\mbox{If } \;G'(i')=\bigcup_{r=0}^{m-1} G(\lambda + r)\; \mbox{ then } 
\; G'(i' + k) = \bigcup_{r=0}^{m-1} G(\lambda + r + km).
\end{eqnarray}
This holds for each $k$. If we use $k=\frac{N_G}{GCM(m, N_G)}$ (note that $k \in \mathbb{N}$), then all the hypotheses of Theorem~\ref{theo:trans} are satisfied:
(i) $\bot \gpinto G$ (by hypothesis);
(ii) $G \gpinto G'$ (since $G \groups G'$, $\Lset{G'}=\mathbb{Z}$, and (\ref{group}) holds);
(iii) $P_{G'}^G = \frac{m \cdot N_G}{GCM(m, N_G)}$ (since we use $k=\frac{N_G}{GCM(m, N_G)}$ and, from (\ref{group}) we know that $P_{G'}^G = km$).
Therefore, by Theorem~\ref{theo:trans}, $\bot \gpinto G'$ with $P_{G'} = \frac{mP_G}{GCM(m, N_G)}$ and $N_{G'} = \frac{N_G}{GCM(m \cdot N_G)}$.

\subsubsection{Part 2}
By definition of $l$, we need to show that $G'\left(\left\lfloor \frac{l_G - 1}{m} \right\rfloor + 1\right) = \bigcup_{j=b}^t G(j)$ with $b \leq l_G \leq t$.

{}From the definition of the \textsl{Group} operation, $G'(i)=\bigcup_{j=(i-1)\cdot m+1}^{i \cdot m} G(i)$ ; hence:
\[
G'\left(\left\lfloor \frac{l_{G}-1}{m}\right\rfloor + 1 \right)  = \bigcup_{j=\left\lfloor \frac{l_{G}-1}{m}\right\rfloor \cdot m +1} ^ {\left( \left\lfloor \frac{l_{G}-1}{m}\right\rfloor + 1\right) \cdot m} G(j)
\]

We prove the thesis showing that (1) $\left\lfloor \frac{l_{G}-1}{m}\right\rfloor \cdot m +1 \leq l_{G}$ and that (2) $\left( \left\lfloor \frac{l_{G}-1}{m}\right\rfloor + 1\right) \cdot m \geq l_{G}$.

(1) Since $\left\lfloor \frac{l_G-1}{m} \right\rfloor \leq \frac{l_G-1}{m}$, hence $\left\lfloor \frac{l_G-1}{m} \right\rfloor \cdot m + 1 \leq l_G$

(2) First we prove that $\left\lfloor \frac{l_G-1}{m}\right\rfloor \geq \frac{l_G}{m} - 1$. Since $\left\lfloor \frac{l_G-1}{m}\right\rfloor = \frac{l_G-1-\left[(l_G-1) mod \; m\right]}{m}$ we have to prove that $\frac{l_G-1-\left[(l_G-1) mod \; m\right]}{m} \geq \frac{l_G}{m} - 1$; it is equivalent to the inequality $-\left[(l_G-1) mod \; m\right] \geq - m + 1$ that is true since $(l_G-1) mod \; m \leq m -1$. Since $\left\lfloor \frac{l_G-1}{m}\right\rfloor \geq \frac{l_G}{m} - 1$ it is trivial that $\left( \left\lfloor \frac{l_{G}-1}{m}\right\rfloor + 1\right) \cdot m \geq l_{G}$.

\subsection{Proof of Proposition~\ref{pr:alterOperation}}
\subsubsection{Part 1}
\textbf{Proof sketch}\\
We show that $G_2 \gpinto G'$ with $P_{G'}^{G_2}= \alpha N_{G_2}$ and
then we apply Theorem~\ref{theo:trans} to obtain the thesis. In
particular we use
\[\Delta = lcm \left( N_{G_1}, m, \frac{P_{G_2} \cdot N_{G_1}}{GCD(P_{G_2} \cdot N_{G_1}, P_{G_1})}, \frac{N_{G_2} \cdot m}{GCD(N_{G_2} \cdot m, |k|)} \right)\]
\noindent and
\[\alpha = \left( \frac{\Delta \cdot P_{G_1} \cdot N_{G_2}}{N_{G_1} \cdot P_{G_2}} + \frac{\Delta \cdot k}{m}\right) \cdot \frac{P_{G_2}}{N_{G_2}}\]
such that, for each $i$, if $\exists j,k : G'(i) = \bigcup_{r=0}^k G_2(j + r)$,
then $G'(i+\Delta) = \bigcup_{r=0}^k G_2(j+r+\alpha N_{G_2})$.

Given an arbitrary granule $G'(i)$, we show that $G'(i+\Delta)$ is the
union of granules that can be obtained by adding $\alpha N_{G_2}$ to the
index of each granule of $G_2$ contained in $G'(i)$. 
Note that $i+\Delta\in\Lset{G'}$ since $G'$ is full-integer labeled.
 In order to show
that this is correct we consider the way granules of $G'$ are
constructed by definition of altering-tick.  More precisely, we
compute the difference between the label $b'_{i+\Delta}$ of the first granule of $G_2$
included in $G'(i+\Delta)$ and the label $b'_i$ of the first granule of $G_2$
included in $G'(i)$; we show that this difference is equal to the
difference between the label $t'_{i+\Delta}$ of the last granule of $G_2$ included in
$G'(i+\Delta)$ and the label $t'_i$ of the last granule of $G_2$ included in
$G'(i)$. This fact together with the consideration that $G_2$ is a
full-integer labeled granularity, leads to the conclusion that $G'(i)$
and $G'(i+\Delta)$ have the same number of granules. It is then clear that 
 the above computed label differences are also equal to the difference 
between the label of an arbitrary n-th granule of $G_2$ included in
$G'(i+\Delta)$ and the label of the n-th granule of $G_2$ included in
$G'(i)$. If this difference is $b'_{i+\Delta} - b'_{i}$, then we have:
if $\exists j,k : G'(i) = \bigcup_{r=0}^k G_2(j + r)$, then 
$G'(i+\Delta) = \bigcup_{r=0}^k G_2\left(j+r+ \left(b'_{i+\Delta} - b'_{i}\right)\right)$.
By showing that $b'_{i+\Delta} - b'_i$ is a multiple of $N_{G_2}$ the thesis follows. 

\textbf{Proof details}

Assume $G_1(i)=\bigcup_{j=b_i}^{t_i} G_2(j)$ and
$G_1(i+\Delta)=\bigcup_{j=b_{i+\Delta}}^{t_{i+\Delta}} G_2(j)$.  
We need to compute $b'_{i+\Delta} - b'_{i}$.  {}From the definition of the the altering-tick operation:
\begin{eqnarray}
b'_{i}=\left \{
                        \begin{array}{lll}
        b_{i}+\left(\left\lfloor \frac{i-l}{m}\right\rfloor\right)k & \mbox{if } i=\left(\left\lfloor            \frac{i-l}{m}\right\rfloor\right)m+l,\\
        \\
              b_{i}+\left(\left\lfloor \frac{i-l}{m}\right\rfloor+1\right)k     &\mbox{otherwise}.
      \end{array}
      \right.
\end{eqnarray}
and
\begin{eqnarray}
b'_{i+\Delta}=\left \{
                        \begin{array}{lll}
        b_{i+\Delta}+\left(\left\lfloor \frac{i+\Delta-l}{m}\right\rfloor\right)k & 
       \mbox{if } i+\Delta=\left(\left\lfloor \frac{i+\Delta-l}{m}\right\rfloor\right)m+l,\\
        \\
              b_{i+\Delta}+\left(\left\lfloor \frac{i+\Delta-l}{m}\right\rfloor+1\right)k     &\mbox{otherwise}.
      \end{array}
      \right.
\end{eqnarray}

Note that if $i=\left(\left\lfloor \frac{i-l}{m}\right\rfloor\right)m+l$,
then $i+\Delta=\left(\left\lfloor \frac{i+\Delta-l}{m}\right\rfloor\right)m+l$.
Indeed, $\left(\left\lfloor \frac{i+\Delta-l}{m}\right\rfloor\right)m+l = 
         \left(\left\lfloor \frac{i-l}{m} + \frac{\Delta}{m}\right\rfloor\right)m+l$ and,
since $\Delta$ is a multiple of $m$, then $\left(\left\lfloor \frac{i-l}{m} + 
              \frac{\Delta}{m}\right\rfloor\right)m+l = \left( \frac{\Delta}{m} + 
             \left\lfloor \frac{i-l}{m} \right\rfloor\right)m+l = 
              \Delta + \left(\left\lfloor \frac{i-l}{m}\right\rfloor\right)m+l$.

Hence, to compute $b'_{i+\Delta} - b'_{i}$ we should consider two cases:
\begin{eqnarray}
b'_{i+\Delta} - b'_{i}=\left \{
                        \begin{array}{lll}
        b_{i+\Delta}+\left(\left\lfloor \frac{i+\Delta-l}{m}\right\rfloor\right)k - b_{i}-\left(\left\lfloor \frac{i-l}{m}\right\rfloor\right)k 
        \;\; \mbox{if } i=\left(\left\lfloor \frac{i-l}{m}\right\rfloor\right)m+l\\
        \\
              b_{i+\Delta}+\left(\left\lfloor \frac{i+\Delta-l}{m}\right\rfloor+1\right)k - b_{i}-\left(\left\lfloor \frac{i-l}{m}\right\rfloor+1\right)k \;\; \mbox{otherwise}.
      \end{array}
      \right.
\end{eqnarray}

In both cases (again considering the fact that $\Delta$ is a multiple of $m$):
\begin{eqnarray}
\label{16}
b'_{i+\Delta} - b'_{i} = (b_{i+\Delta} - b_{i}) + \frac{\Delta \cdot k}{m}
\end{eqnarray}
We are left to compute $b_{i+\Delta} - b_{i}$, i.e., the distance in
terms of granules of $G_2$, between $G_2(b_{i})$ and
$G_2(b_{i+\Delta})$. Since, by hypothesis, $G_1(i) =
\bigcup_{j=b_{i}}^{t_{i}} G_2(j)$ and $G_1(i+\Delta) =
\bigcup_{j=b_{i+\Delta}}^{t_{i+\Delta}} G_2(j)$, then the first
granule of $\bot$ making $G_2(b_i)$ and the first granule of $\bot$
making $G_1(i)$ is the same granule. The same can be observed for the
first granule of $\bot$ making $G_2(b_{i+\Delta})$ and the first
granule of $\bot$ making $G_1(i+\Delta)$. More formally:
\[
min \left\lfloor b_i \right\rfloor ^{G_2} = min \left\lfloor i \right\rfloor ^{G_1}
\]
and
\[
min \left\lfloor b_{i+\Delta} \right\rfloor ^{G_2} = min \left\lfloor i+\Delta \right\rfloor ^{G_1}
\]
Hence, we have:
\begin{eqnarray}
min \left\lfloor b_{i+\Delta} \right\rfloor ^{G_2} - min \left\lfloor b_{i} \right\rfloor ^{G_2} = 
min \left\lfloor i+\Delta \right\rfloor ^{G_1} - min \left\lfloor i \right\rfloor ^{G_1}
\end{eqnarray}

We have shown that the difference between the index of the first
granule of $\bot$ making $G_2(b_{i+\Delta})$ and the index of the
first granule of $\bot$ making $G_2(b_i)$ is equal to the difference
between the index of the first granule of $\bot$ making
$G_1(i+\Delta)$ and the index of the first granule of $\bot$ making
$G_1(i)$.  Then, we need to compute the difference between the
index of the first granule of $\bot$ making $G_1(i+\Delta)$ and the
index of the first granule of $\bot$ making $G_1(i)$.  Since $\bot
\gpinto G_1$ and $\Delta$ is a multiple of $N_{G_1}$,
for each $i$, if $\exists j, \tau : G_1(i) =
\bigcup_{r=0}^{\tau} \bot(j+r)$, then $G_1(i+\Delta) = \bigcup_{r=0}^{\tau} \bot(j+\frac{\Delta \cdot P_{G_1}}{N_{G_1}})$. Hence, this difference has value 
$\frac{\Delta \cdot P_{G_1}}{N_{G_1}}$, and for what shown above this is also the value of
the difference between the index of the first
granule of $\bot$ making $G_2(b_{i+\Delta})$ and the index of the
first granule of $\bot$ making $G_2(b_i)$.
Then, since $\bot \gpinto G_2$ with period length $P_{G_2}$ and since $\frac{\Delta \cdot P_{G_1}}{N_{G_1}}$ is a multiple of $P_{G_2}$, we have that, if:
\[
\bot (j) \subseteq G_2(i)
\]
then:
\[
\bot(j+\frac{\Delta \cdot P_{G_1}}{N_{G_1}}) \subseteq G_2(i+\frac{\Delta \cdot P_{G_1} \cdot N_{G_2}}{N_{G_1} \cdot P_{G_2}})
\]

Thus, $b_{i+\Delta} - b_{i} = \frac{\Delta \cdot P_{G_1} \cdot N_{G_2}}{N_{G_1} \cdot P_{G_2}}$.

Reconsidering \ref{16}:
\[
b'_{i+\Delta} - b'_{i} = \frac{\Delta \cdot P_{G_1} \cdot N_{G_2}}{N_{G_1} \cdot P_{G_2}} + \frac{\Delta \cdot k}{m}.
\]

Analogously we can compute 
$t'_{i+\Delta} - t'_{i} = \frac{\Delta \cdot P_{G_1} \cdot N_{G_2}}{N_{G_1} \cdot P_{G_2}} + \frac{\Delta \cdot k}{m}$.

Thus, $b'_{i+\Delta} - b'_{i} = t'_{i+\Delta} - t'_{i}$; hence
$t_{i+\Delta} - b_{i+\Delta} = t_i - b_i$. Since $G_2$ is a full
integer labeled granularity, then $G'(i)$ and $G'(i+\Delta)$ are
formed by the same number of granules.

Since we now know $G'(i+\Delta) = \bigcup_{j=b'_{i+\Delta}}^{t'_{i+\Delta}} G_2(j)
=\bigcup_{j=b'_{i}}^{t'_{i}} G_2(j+(b'_{i+\Delta} - b'_i))$
and $(b'_{i+\Delta} - b'_i)$ is a multiple of $N_{G_2}$, we 
have $G_2 \gpinto G'$,  $P_{G'}^{G_2}= \frac{\Delta \cdot P_{G_1} \cdot N_{G_2}}{N_{G_1} \cdot P_{G_2}}$ and $\bot \gpinto G_2$. Hence, all the
hypothesis of Theorem~\ref{theo:trans} hold, and its application leads
the thesis of this proposition.

\subsubsection{Part 2}
Since $G_2$ partitions $G'$ (see table 2.2 of \cite{BJW:book}), then
(1) $\up{l_{G_2}}^{G'}_{G_2}$ is always defined and
(2) $min(\{n \in \mathbb{N^{+}} | \exists i \in \Lset{G_2} s.t.\; \bot(n) \subseteq G_2(i)\}) = min(\{m \in \mathbb{N^{+}} | \exists j \in \Lset{G'} s.t.\; \bot(m) \subseteq G'(j)\})$.
Therefore $l_{G'}$ is the label of the granule of $G'$ that covers the granule of $G_2$ labeled with $l_{G_2}$; by definition of $\up{\cdot}$ operation, $l_{G'}=\up{l_{G_2}}^{G'}_{G_2}$.

\subsection{Proof of Proposition~\ref{pr:shiftOperation}}
\subsubsection{Part 2}
By definition of the \textsl{Shift} operation, $G'(i) = G(i-m)$. Hence $G'(l_G + m) = G(l_G + m - m) = G(l_G)$.

\subsection{Proof of Proposition~\ref{pr:combineOperation}}
\subsubsection{Part 1}
The thesis will follow from the application of
Theorem~\ref{theo:trans}. Indeed, we know that $\bot \gpinto G_2$ and we show that $G_2 \gpinto G'$ with
$P_{G'}^{G_2}$ multiple of $N_{G_2}$. For this we need to identify
$\Delta$ and $\alpha$ s.t., for each $i$, if there exists $s(i)$ s.t.
$G'(i) = \bigcup_{j\in s(i)} G_2(j)$, then $G'(i+\Delta) = \bigcup_{j
  \in s(i)} G_2(j+\alpha N_{G_2})$.

Consider an arbitrary $i\in\mathbb{N}$ and $\Delta= \frac{lcm(P_{G_1},
  P_{G_2})N_{G_1}}{P_{G_1}}$.  By definition of the combining
operation, we have $G'(i)=\bigcup_{j\in s(i)} G_2(j)$ and
$G'(i+\Delta)=\bigcup_{j\in s(i+\Delta)}G_2(j)$ with
\[s(i)=\left\{ j \in {\cal L}_{G_2} | \emptyset \neq G_2(j) \subseteq G_1(i)
\right\}\] and
\[s(i+\Delta)=\left\{ j \in {\cal L}_{G_2} | \emptyset
  \neq G_2(j) \subseteq G_1(i+\Delta) \right\}.\]

We now show that $s(i+\Delta)$ is composed by all and only the elements of $s(i)$ when the quantity $\Delta' = \frac{lcm (P_{G_1}, P_{G_2}) N_{G_2}}{P_{G_2}}$ is added. For this purpose we need:

\begin{eqnarray}\label{comb1}
\forall j\in s(i) \ \exists (j+\Delta') \in s(i+\Delta)
\end{eqnarray}
and
\begin{eqnarray}\label{comb2}
\forall \left(j + \Delta' \right) \in s(i+\Delta) \ \exists j\in s(i)
\end{eqnarray}

About \ref{comb1}, note that if $j\in s(i)$, then $G_2(j) \subseteq G_1(i)$. Since $\bot \gpinto G_2$, if
\[
G_2(j)=\bigcup_{r=0}^{k} \bot(j_{r})
\]
then
\begin{eqnarray}\label{comb3}
G_2\left(j+\Delta'\right) = \bigcup_{r=0}^{k} \bot(j_r+lcm (P_{G_1}, P_{G_2}))
\end{eqnarray}
Since $G_1(i) \supseteq G_2(j) = \bigcup_{r=0}^{k} \bot(j_{r})$, and since $\bot \gpinto G_1$, then
\begin{eqnarray}\label{comb4}
G_1(j+\Delta) \supseteq \bigcup_{r=0}^{k} \bot(j_r+lcm (P_{G_1}, P_{G_2}))
\end{eqnarray}

{}From \ref{comb3} and \ref{comb4} we derive $G_1(i+\Delta)\supseteq
G_2(j+\Delta')$, and hence $(j+\Delta') \in s(i+\Delta)$.  Analogously
can be proved the validity of \ref{comb2}; Hence, for each $i$, if
there exists $s(i)$ s.t. $G'(i) = \bigcup_{j\in s(i)} G_2(j)$, then
$G'(i+\Delta) = \bigcup_{j \in s(i)} G_2(j+\Delta')$. Hence,
considering the fact that $G_2 \groups G'$, we can conclude $G_2
\gpinto G'$.  Finally, since $P_{G'}^{G_2}$ is a multiple of
$N_{G_2}$, by Theorem~\ref{theo:trans} we obtain the thesis.

\subsubsection{Part 2}
Let
\[
\Ltil{G'} = \{ i \in \Lcap{G_1}^{P_{G'}} | \widetilde{s}(i) \neq \emptyset\}
\]
where
$\forall i \in \Lcap{G_1}^{P_{G'}} \; \widetilde{s}(i) = \{j \in \Lcap{G_2}^{P_{G'}} | \emptyset \neq G_2(j) \subseteq G_1(i) \}$;

We show that $\Ltil{G'} = \Lcap{G'}$ by proving that:
(1) $\Ltil{G'} \supseteq \Lcap{G'}$ and
(2) $\Ltil{G'} \subseteq \Lcap{G'}$.

(1) Suppose by contradiction that exists $k \in \Lcap{G'} \setminus \Ltil{G'}$. Since $k \in \Lcap{G'}$ and since $G'$ is derived by the \textsl{Combine} operation, then
$\exists q \in \Lset{G_2} | G_2(q) \subseteq G_1(k)$.
By definition of the \textsl{Combine} operation $G'(k) = \bigcup_{j \in s(k)} G_2(j)$; since $q \in s(k)$, then $G_2(q) \subseteq G'(k)$.
Hence (a) $\exists q \in \Lset{G_2} | G_2(q) \subseteq G'(k)$.

Moreover, since $k \not\in \Ltil{G'}$, then $\widetilde{s}(k) = \emptyset$; therefore $\nexists \textbf{j} \in \Lcap{G_2}^{P_{G'}} | G_2(j) \subseteq G_1(k)$.
By definition of the \textsl{Combine} operation it is easily seen that $G' \finer G_1$. Using this and the previous formula, we derive that (b) $\nexists j \in \Lcap{G_2}^{P_{G'}} | G_2(j) \subseteq G'(k)$.

{}From (a) and (b) it follows that $\exists q \in \Lset{G_2} \setminus \Lcap{G_2}^{P_{G'}} | G_2(q) \subseteq G'(k)$. We show that this leads to a contradiction.

Since $q \not\in \Lcap{G_2}^{P_{G'}}$ and labels of $\Lcap{G_2}^{P_{G'}}$ are contiguous (i.e., $\nexists i \in \Lset{G_2} \setminus \Lcap{G_2}^{P_{G'}}$ s.t. $min(\Lcap{G_2}^{P_{G'}}) < i < max(\Lcap{G_2}^{P_{G'}})$), then $q<min(\Lcap{G_2}^{P_{G'}})$ or $q>max(\Lcap{G_2}^{P_{G'}})$. We consider the first case, the proof for the second is analogous.

If $q<min(\Lcap{G_2}^{P_{G'}})$ then $max(\down{q}^{G_2})<1$ (otherwise $q \in \Lcap{G_2}^{P_{G'}}$).

Let be $\alpha = min (\down{min(\Lcap{G'})}^{G'})$. Since $k \in \Lcap{G'}$, then $\alpha \leq \down{k}^{G'}$.

If $\alpha \geq 1$, then $G'(k) \cap G_2(q) = \emptyset$ contradicting $G'(k) \supseteq G_2(q)$.

If $\alpha < 1$, then $G'(l_{G'}) \supseteq \bot(0)$ and we show that $l_{G'} \in \Ltil{G'}$.
Indeed, by definition of \textsl{Combine}, $\exists j \in \Lcap{G_2}^{P_{G'}} | G_2(j) \subseteq G'(L_{G'})$.
Since $G' \finer G_1$ we also have $\exists j \in \Lcap{G_2}^{P_{G'}} | G_2(j) \subseteq G_1(L_{G'})$; hence $j \in \widetilde{s}(l_{G'})$ and then $l_{G'} \in \Ltil{G'}$.

Since $0 \in G'(l_{G'})$ and $max(\down{q}^{G_2}) \leq 0$, then $max(\down{q}^{G_2}) < \alpha$ (otherwise $G_2(q) \subseteq G'(l_{G'})$). Therefore, since $min(\down{k}^{G'}) \geq \alpha$, then $\down{q}^{G_2} \cap \down{l_{G'}}^{G'} = \emptyset$, in contradiction with $G_2(q) \subseteq G'(k)$.

(2) Suppose by contradiction that $\exists k \in \Ltil{G'} \setminus \Lcap{G'}$.
Since $k \in \Ltil{G'}$, by definition of $\Ltil{}$, $k \in \Lcap{G_1}^{P_{G'}}$ and $\widetilde{s}(k) \neq \emptyset$; Therefore, by definition of $\widetilde{s}$, $\exists j \in \Lcap{G_2}^{P_{G'}} | G_2(j) \subseteq G_1(k)$.

Since $j \in \Lcap{G_2}^{P_{G'}}$, by definition of $\Lcap{}$, $\exists h$ with $0<h\leq P_{G'}$ s.t. $\up{h}^{G_2} = j$. Since $G_2(j) \subseteq G_1(k)$, then $\up{h}^{G_1} = k$. By definition of the \textsl{combine} operation, $\up{h}^{G'}=k$. Moreover, since $0<h \leq P_{G'}$, by definition of $\Lcap{}$, $\up{h}^{G'} = k \in \Lcap{G'}$, contradicting the hypothesis.
 
\subsection{Proof of Proposition~\ref{pr:anchorOperation}}

\subsubsection{Part 1}
The thesis will follow from the application of
Theorem~\ref{theo:trans}. Indeed, we show that
$G_1 \gpinto G'$ with $P_{G'}^{G_1}$ multiple of $N_{G_1}$. 
For this we need to identify
$\Delta$ and $\alpha$ s.t., for each $i$, if there exists $s(i)$ s.t.
$G'(i) = \bigcup_{j\in s(i)} G_1(j)$, then $G'(i+\Delta) = \bigcup_{j
  \in s(i)} G_1(j+\alpha N_{G_1})$.
Let $\Delta = \frac{lcm(P_{G_1}, P_{G_2})N_{G_2}}{P_{G_2}}$. 
By definition of anchored
grouping, $G'(i) = \bigcup_{j=i}^{i'-1} G_1(j)$ and
$G'(i+\Delta)=\bigcup _{j=i+\Delta}^{(i+\Delta)'-1}G_1(j)$ where $i'$
is the first label of $G_2$ after $i$ and $(i+\Delta)'$ is the first label of 
$G_2$ after $i+\Delta$.
By periodicity of $G_2$, (and since $\Delta$ is a multiple of $N_{G_2}$)
the difference between the label of the granule following $G_2(i +
\Delta)$ and the label of the granule following $G_2(i)$ is $\Delta$. More formally,
$(i+\Delta)' - i' = \Delta$, hence $(i+\Delta)'=i'+\Delta$.
Then, for each $i$, if $G'(i) = \bigcup_{j=i}^k G_1(j)$, then
$G'(i+\Delta)=\bigcup_{j=i+\Delta}^{i'+\Delta-1}G_1(j)=\bigcup_{j=i}^{i'-1}G_1(j+\Delta)$.
By this result and considering $G_1 \groups G'$, we conclude $G_1\gpinto G'$ with $P_{G'}^{G_1}=\Delta$.
Note that by Proposition~\ref{prop:SetOperations1}, $N_{G_1}=\frac{P_{G_1}
  \cdot N_{G_2}}{P_{G_2}}$, hence $P_{G'}^{G_1}$ is a multiple of
$\Delta$. Then, by Theorem~\ref{theo:trans}, we have the thesis.

\subsubsection{Part 2}
Let
\[\Ltil{G'}=\left \{ \begin{array}{ll}
      \Lcap{G_2}^{P_{G'}}, & \mbox{if } l_{G_2}=l_{G_1},\\
      \ \{l'_{G_2}\} \cup \Lcap{G_2}^{P_{G'}},     &\mbox{otherwise},
      \end{array}
      \right.
\]
        where $l'_{G_2}$ is the greatest among the labels of $\Lset{G_2}$ that are smaller than $l_{G_2}$.
        We show that $\Ltil{G'} = \Lcap{G'}$ by proving that (1) $\Ltil{G'} \subseteq \Lcap{G'}$ and (2) $\Lcap{G'} \subseteq \Ltil{G'}$.
        
        (1) Suppose by contradiction that $\exists k \in \Ltil{G'} \setminus \Lcap{G'}$. Then, since $k \in \Ltil{G'}$, then $k \in \Lcap{G_2}^{P_{G'}}$ or $k=l_{G_2}'$.
        
        If $k \in \Lcap{G_2}^{P_{G'}}$, then, by definition of $\Lcap{G_2}^{P_{G'}}$, $\exists h$ with $0 < h \leq P_{G'}$ s.t. $\up{h}^{G_2} = k$. By definition of \textsl{Anchored-group}, $G'(k)=\bigcup_{j=k}^{k'-1}G_1(j)$ where $k'$ is the first label of $G_2$ after $k$. Therefore $G'(k) \supseteq G_1(k)$. Since $G_2$ is a labeled aligned subgranularity of $G_1$ and since $k \in \Lset{G_2}$, then $k \in \Lset{G_1}$ and $G_1(k)=G_2(k)$. Hence $G'(k) \supseteq G_2(k)$. It follows that $\up{h}^{G'}=k$ and therefore, by definition of $\Lcap{}$, $k \in \Lcap{G'}$ in contrast with the hypothesis.

        If $k = l_{G_2}'$, then, by definition of $\Ltil{G'}$, $l_{G_2} \neq l_{G_1}$. Therefore, since $G_2$ is a labeled aligned subgranularity of $G_1$ $l_{G_2}' < l_{G_1} < l_{G_2}$; then $\exists h$ with $0<h < min(\down{l_{G_2}}^{G_2})$ s.t. $\up{h}^{G_1}=l_{G_1}$. Since, by definition of \textsl{Anchored-group}, $G'(l_{G_2}')=\bigcup_{j=l_{G_2}'}^{l_{G_2}-1}G_1(j)$ and since $l_{G_2}' < l_{G_1} < l_{G_2}$, then $G'(l_{G_2}') \supseteq G_1(l_{G_1})$. Hence $\up{h}^{G'} = l_{G_2}'$ and therefore, by definition of $\Lcap{}$, $l_{G_2}'=k \in \Lcap{G'}$ in contrast with the hypothesis.
        
        (2) Suppose by contradiction that $\exists k \in \Lcap{G'}
        \setminus \Ltil{G'}$.  If $k \in \Lcap{G_2}^{P_{G'}}$ then, by
        definition of $\Ltil{G'}$, $k \in \Ltil{G'}$, in contrast with
        the hypothesis.
        
        If $k \notin \Lcap{G_2}^{P_{G'}}$, since $\nexists q \in
        \Lset{G_2} \setminus \Lcap{G_2}^{P_{G'}}$
        s.t. $min(\Lcap{G_2}^{P_{G'}}) \leq q \leq
        max(\Lcap{G_2}^{P_{G'}})$, then $k > max(\Lcap{G_2}^{P_{G'}})$
        or $k < min(\Lcap{G_2}^{P_{G'}})$.
        
        If $k > max(\Lcap{G_2}^{P_{G'}})$ then, by definition of
        $\Lcap{}$, $min(\down{k}^{G_2}) > P_{G'}$. Since $G_2$ is a
        labeled aligned subgranularity of $G_1$ then $G_2(k)=G_1(k)$
        and hence $min(\down{k}^{G_1})> P_{G'}$. Since $G'(k) =
        \bigcup_{j=k}^{k'-1} G_1(j)$ then $min(\down{k}^{G'})>P_{G'}$
        in contrast with the hypothesis $k \in \Lcap{G'}$.
        
        If $k < min(\Lcap{G_2}^{P_{G'}})$ then, by definition of
        $l_{G_2}'$, $k<l_{G_2}'$ or $k=l_{G_2}'$.
        
        If $k<l_{G_2}'$ then, let $k'$ be the next label of $G_2$
        after $k$. Since $k < l_{G_2}'$ then, by definition
        $l_{G_2}'$, $k' \leq l_{G_2}'$. By definition of $l_{G_2}'$
        then $max(\down{l_{G_2}'}^{G_2}) \leq 0$.  Since $G_2$ is a
        labeled aligned subgranularity of $G_1$ then
        $G_1(l_{G_2}')=G_2(l_{G_2}')$; therefore
        $max(\down{l_{G_2}'}^{G_1}) \leq 0$. Since $G'(k) =
        \bigcup_{j=k}^{k'-1}G_1(j)$ and $k' \leq l'_{G_2}$, follows
        that $max(\down{k}^{G'}) \leq 0$ in contrast with the
        hypothesis $k \in \Lcap{G'}$.
        
        Finally if $k=l_{G_2}'$ then $G'(l_{G_2}') =
        \bigcup_{j=l_{G_2}'}^{l_{G_2}-1} G_1(j)$. Since $k = l_{G_2}'
        \in \Lcap{G'}$ then $\exists h$ with $0 < h \leq P_{G'}$
        s.t. $\up{h}^{G'}=l_{G_2}'$. Since $G'$ is the composition of
        granules of $G_1$, $\up{h}^{G_1}$ is defined. Let $q =
        \up{h}^{G_1}$. By definition of $\Lcap{}$, $q \in
        \Lcap{G_1}^{P_{G'}}$ and therefore $q \geq l_{G_1}$.  Since,
        by definition of \textsl{Anchored-group}, $G'$ is the
        composition of granules of $G_1$ and since
        $\up{h}^{G'}=l_{G_2}'$ and $\up{h}^{G_1}=q$, then $G_1(q)
        \subseteq G'(l_{G_2}')$. Therefore since
        $G'(l_{G_2}')=\bigcup_{j=l_{G_2}'}^{l_{G_2}-1}G_1(j)$ then
        $q<l_{G_2}$.  It follows that $l_{G_1} \leq q < l_{G_2}$ and
        hence $l_{G_1} \neq l_{G_2}$. By definition of $\Ltil{G'}$,
        $l_{G_2}'=k \in \Ltil{G'}$ in contrast with the hypothesis.

\subsection{Selecting operations}
\label{sec:commonSelectingOperation}
The selecting operations have a common part in the proof for the computation of the period length and the period label distance.

Let be $\Gamma=\frac{lcm(P_{G_1}, P_{G_2})N_{G_1}}{P_{G_1}}$.  The
proof is divided into two steps: first we show that for each
select operation if $i \in \Lset{G'}$ then $i + \Gamma
\in \Lset{G'}$ (details for \textsl{Select-down},
\textsl{Select-up} and \textsl{Select-by-intersect} operations can be
found below). The second step is the application of
Theorem~\ref{theo:trans}. Indeed, for each \textsl{Select} operation,
the following holds: $\forall i \in \Lset{G'} \ G'(i)=G_1(i)$; this
implies $G_1 \groups G'$. {}From step 1 follows that $i + \Gamma \in
\Lset{G'}$, hence $G'(i+ \Gamma)=G_1(i+ \Gamma)$. By this result
and considering $G_1 \groups G'$, we conclude that $G_1 \gpinto G'$
with $P_{G'}^{G_1}=\Gamma$ which is a multiple of $N_{G_1}$ by
definition. Then, by Theorem~\ref{theo:trans} we have the thesis.
        
\subsection{Proof of Proposition~\ref{pr:selectDownOperation}}
\subsubsection{Part 1}
See Section~\ref{sec:commonSelectingOperation}.

We prove that if $\lambda \in {\cal L}_{G'}$ then $\lambda' = \lambda
+ \Gamma \in {\cal L}_{G'}$.

By definition of the \texttt{select-down} operation, if $\lambda \in
{\cal L}_{G'}$ then $\exists i \in {\cal L}_{G_2}$ s.t. $\lambda \in
\Delta_k^l\left(S(i)\right)$ where $S(i)$ is an ordered set defined as
follows: $S(i)=\{j \in {\cal L}_{G_1} | \emptyset \neq G_1(j)
\subseteq G_2(i)\}$. In order to prove the thesis we need to show that
$\exists i' \in {\cal L}_{G_2} | \lambda' \in \Delta_k^l(S(i'))$.
Consider $i'=i+\frac{lcm(P_{G_1} P_{G_2})N_{G_2}}{P_{G_2}}$ we will
note that $i' \in {\cal L}_{G_2}$ (this is trivially derived from the
periodicity of $G_2$). To prove that $\lambda' \in \Delta_k^l(S(i'))$
we show that $S(i')$ is obtained from $S(i)$ by adding $\Gamma$ to
each of its elements.

Indeed note that from periodicity of $G_1$, $\forall j \in S(i)$ if:
\begin{eqnarray}\label{selectdown3}
G_1(j)=\bigcup_{r=0}^{\tau_{j}} \bot(j_{r})
\end{eqnarray}

then:
\begin{eqnarray}\label{selectdown4}
G_1\left(j'\right)=\bigcup_{r=0}^{\tau_{j}} \bot(j_{r}+lcm(P_{G_1} P_{G_2}))
\end{eqnarray}

Since $j \in S(i)$, $G_1(j)\subseteq G_2(i)$ then, from (\ref{selectdown3}), $G_2(i) \supseteq \bigcup_{r=0}^{\tau_{j}} \bot(j_{r})$. Moreover, from periodicity of $G_2$:

\begin{eqnarray}\label{selectdown5}
G_2\left(i'\right) \supseteq \bigcup_{r=0}^{\tau_{j}} \bot(j_{r}+lcm(P_{G_1} P_{G_2}))
\end{eqnarray}

Since (\ref{selectdown4}) and (\ref{selectdown5}), $G_2\left(i'\right) \supseteq G_1\left(j'\right)$; hence $\forall j \in S(i) ,  j'=(j + \Gamma) \in S(i')$. Analogously we can prove that $\forall j' \in S(i') , j=(j' - \Gamma) \in S(i)$.

Thus $S(i')$ is obtained from $S(i)$ by adding $\Gamma$ to each of its elements; therefore if $j \in S(i)$ has position $n$ in $S(i)$, so $j' \in S(i')$ has position $n$ in $S(i')$. Hence it is trivial that if $\lambda$ has position between $k$ and $k+l-1$ in $S(i)$, then $\lambda'$ has position between $k$ and $k+l-1$ in $S(i')$. Hence if $\lambda \in {\cal L}_{G'}$, then $\lambda' \in {\cal L}_{G'}$.

\subsubsection{Part 2}
Let
\[
\Ltil{G'}=\bigcup_{i \in \Lcap{G_2}^{P_{G'}}} \left\{a \in A(i) | a \in \Lcap{G_1}^{P_{G'}}\right\};
\]
where $\forall i \in \Lset{G_2}$:
\[
A(i) =  \Delta_{k}^{l} \left( \left\{ j \in \Lset{G_1} | \emptyset \neq G_1(j) \subseteq G_2(i) \right\} \right).
\]

We show that $\Ltil{G'} = \Lcap{G'}$ by proving that 
(1) $\Ltil{G'} \subseteq \Lcap{G'}$ and
(2) $\Ltil{G'} \supseteq \Lcap{G'}$.

(1)Suppose by contradiction that $\exists q \in \Ltil{G'} \setminus \Lcap{G'}$. By definition of $\Ltil{G'}$, $q \in \Lcap{G_1}^{P_{G'}}$; therefore $\exists h$ with $0<h \leq P_{G'}$ s.t. $\up{h}^{G_1}=q$.
Moreover, by definition of $\Ltil{G'}$ and by definition of \textsl{Select-down}, $\Ltil{G'} \subseteq \Lset{G'}$ hence $q \in \Lset{G'}$.
Since, by definition of \textsl{Select-down} $G'(q) = G_1(q)$, then $\up{h}^{G'} = q$; hence, by definition of $\Lcap{}$, $q \in \Lcap{G'}$ in contradiction with hypothesis.

(2)Suppose by contradiction that $\exists q \in \Lcap{G'} \setminus \Ltil{G'}$. Since $q \in \Lcap{G'}$ then, by definition of \textsl{Select-down}
\[
\exists i \in \Lset{G_2} \, s.t. \, q \in \Delta_k^l\left(\left\{ j \in \Lset{G_1} | \emptyset \neq G_1(j) \subseteq G_2(i) \right\}\right)
\]
therefore, by definition of $A(i)$, $q \in A(i)$.

Since $q \in \Lcap{G'}$ then $\exists h$ with $0<h \leq P_{G'}$ s.t. $\up{h}^{G'} = q$. By definition of \textsl{Select-down}, $G'(q) = G_1(q)$, then $\up{h}^{G_1} = q$ and therefore $q \in \Lcap{G_1}^{P_{G'}}$. Moreover, since $G_1(q) \subseteq G_2(i)$, then $\up{h}^{G_2}=i$ and therefore $i \in \Lcap{G_2}^{P_{G'}}$.
Since $q \in A(i)$, $q \in \Lcap{G_1}^{P_{G'}}$ and $i \in \Lcap{G_2}^{P_{G'}}$ then, by definition of $\Ltil{G'}$, $q \in \Ltil{G'}$, in contrast with the hypothesis.

\subsection{Proof of Proposition~\ref{pr:selectUpOperation}}
\subsubsection{Part 1}
See Section~\ref{sec:commonSelectingOperation}.
We prove that if $i \in {\cal L}_{G'}$ then $i + \Gamma \in {\cal
  L}_{G'}$. {}From the periodicity of $G_1$, $i + \Gamma \in {\cal
  L}_{G_1} $ (this is trivially derived from the periodicity of
$G_1$). Hence we only need to show that $\exists j' \in {\cal L}_{G_2}
| \emptyset \neq G_2(j)\subseteq G_1(i + \Gamma)$. Since $i \in {\cal
  L}_{G'}$ then $\exists j \in {\cal L}_{G_2} | \emptyset \neq
G_2(j)\subseteq G_1(i)$.

{}From the periodicity of $G_2$, if: \begin{eqnarray}\label{selectup1} G_2(j)=\bigcup_{r=0}^{\tau_{j}} \bot(j_{r})
\end{eqnarray}

then: 
\begin{eqnarray}\label{selectup2}
G_2\left(j+\frac{lcm(P_{G_1} P_{G_2})N_{G_2}}{P_{G_2}}\right)=\bigcup_{r=0}^{\tau_{j}} \bot(j_{r}+lcm(P_{G_1} P_{G_2}))
\end{eqnarray}

Moreover, from the (\ref{selectup1}) and since $G_1(i) \supseteq G_2(j)$:
\[
G_1(i) \supseteq \bigcup_{r=0}^{\tau_{j}} \bot(j_{r})
\]

{}From the periodicity of $G_1$:

\begin{eqnarray}\label{selectup3}
G_1(i+\Gamma) \supseteq \bigcup_{r=0}^{\tau_{j}} \bot(j_{r}+lcm(P_{G_1} P_{G_2}))
\end{eqnarray}

{}From (\ref{selectup2}) and (\ref{selectup3}) follows that $G_1(i+\Gamma) \supseteq G_2\left(j+\frac{lcm(P_{G_1} P_{G_2})N_{G_2}}{P_{G_2}}\right)$, that is the thesis.

\subsubsection{Part 2}
Let 
\[
\Ltil{G'}=\{i \in \Lcap{G_1}^{P_{G'}} | \exists j \in \Lset{G_2} \, s.t. \, \emptyset \neq G_2(j) \subseteq G_1(i) \};
\]

We show that $\Ltil{G'} = \Lcap{G'}$ by proving that 
(1) $\Ltil{G'} \subseteq \Lcap{G'}$ and
(2) $\Ltil{G'} \supseteq \Lcap{G'}$.

(1) Suppose by contradiction that $\exists k \in \Ltil{G'} \setminus \Lcap{G_2}$.
Since $k \in \Ltil{G'}$, then $k \in \Lcap{G_1}^{P_{G'}}$; therefore $\exists h$ with $0<h \leq P_{G'}$ s. t. $\up{h}^{G_1} = k$.
Moreover, by definition of $\Ltil{G'}$ and by definition of \textsl{Select-down}, $\Ltil{G'} \subseteq \Lset{G'}$ hence $q \in \Lset{G'}$.
Since, by definition of \textsl{Select-up}, $G'(k) = G_1(k)$, then $\up{h}^{G'} = k$. Hence, by definition of $\Lcap{}$, $k \in \Lcap{G'}$, in contrast with the hypothesis.

(2) Suppose by contradiction that $\exists k \in \Lcap{G'} \setminus \Ltil{G'}$.
Since $k \in \Lcap{G'}$, then $\exists h$ with $0<h \leq P_{G'}$ s.t. $\up{h}^{G'} = k$.
Since, by definition of \textsl{Select-up}, $G'(k)=G_1(k)$, then $\up{h}^{G_1} = k$; Therefore, by definition of $\Lcap{}$, $k \in \Lcap{G_1}^{P_{G'}}$.
Moreover, since $k \in \Lcap{G'}$ and $\Lcap{G'} \subseteq \Lset{G'}$, by definition of the \textsl{Select-up} operation, then $\exists j \in \Lset{G_2}$ s.t. $\emptyset \neq G_2(j) \subseteq G_1(k)$.
Hence by definition of $\Ltil{G'}$, $k \in \Ltil{G'}$, in contradiction with hypothesis.

\subsection{Proof of Proposition~\ref{pr:selectByIntersectOperation}}
\subsubsection{Part 1}
See Section~\ref{sec:commonSelectingOperation}.
We prove that if $\lambda \in {\cal L}_{G'}$, then $\lambda'=\lambda + \Gamma \in {\cal L}_{G'}$.

By definition of the \textsl{select-by-intersect} operation, if $\lambda \in {\cal L}_{G'}$,
then $\exists i \in {\cal L}_{G_2} : \lambda \in \Delta_k^l(S(i))$
where $S(i)$ is an ordered set defined as follows:
$S(i)=\{j \in {\cal L}_{G_1} | G_1(j) \cap G_2(i) \neq \emptyset \}$.
In order to prove the thesis we need to show that $\exists i' \in {\cal L}_{G_2} : \lambda' \in \Delta_k^l(S(i'))$. Consider $i'=i+\frac{lcm(P_{G_1} P_{G_2})N_{G_2}}{P_{G_2}}$ note that $i' \in {\cal L}_{G_2}$ (this is trivially derived from the periodicity of $G_2$). To prove that $\lambda' \in \Delta_k^l(S(i'))$ we show that $S(i')$ is obtained from $S(i)$ by adding $\Gamma$ to each of its elements.

Indeed note that $\forall j$ if $j \in S(i)$, then $G_1(j) \cap G_2(i)
\neq \emptyset$. Hence $\exists l \in \mathbb{Z} : \bot(l)\subseteq
G_1(j)$ and $\bot(l)\subseteq G_2(i)$. {}From the periodicity of $G_1$,
$G_1(j+\Gamma) \supseteq \bot(l+lcm(P_{G_1} P_{G_2}))$. {}From the
periodicity of $G_2$, $G_2(i')\supseteq \bot(l+lcm(P_{G_1} P_{G_2}))$.
So $G_1(j+\Gamma) \cap G_2(i') \neq \emptyset$, therefore $\forall j
\in S(i) , (j + \Gamma) \in S(i')$.

Analogously we can prove that $\forall j' \in S(i') , (j' - \Gamma)
\in S(i)$. Hence $S(i')$ is obtained from $S(i)$ by adding $\Gamma$ to
each of its elements. Therefore, if $j \in S(i)$ has position $n$ in
$S(i)$, then $j+\Gamma \in S(i')$ has position $n$ in $S(i')$; hence
if $j$ has position between $k$ and $k+l-1$ in $S(i)$, then also
$j+\Gamma$ has position between $k$ and $k+l-1$ in $S(i')$ and so
$j+\Gamma \in {\cal L}_{G'}$.

\subsubsection{Part 2}
The proof is analogous to the ones of Proposition~\ref{pr:selectDownOperation}.

\subsection{Set Operations}

\subsubsection{Proof of Proposition~\ref{prop:SetOperations1}}
Given the periodical granularities H and G with G label aligned subgranularity of H, we prove that $\frac{N_G}{P_G}=\frac{N_H}{P_H}$. The thesis is proved by considering the common period length of $H$ and $G$ i.e. $P_c = lcm(P_G, P_H)$.

Let $N'_G$ be the difference between the label of the $i^{th}$ granule of one period of $G$ and the label of the $i^{th}$ granule of the next period, considering $P_c$ as the period length of $G$. Analogously $N'_{H}$ is defined. 

By periodicity of $G$, if $G(i) = \bigcup_{r=0}^{k}\bot(i_r)$ then $G(i+N'_G)=\bigcup_{r=0}^{k}\bot(i_r + P_c)$; since $G$ is an aligned subranularity of H, $\forall i \in {\cal L}_{H} \; H(i) = G(i) = \bigcup_{r=0}^{k}\bot(i_j)$ and, since $H$ is periodic, $H(i+N'_H)=\bigcup_{r=0}^{k}\bot(i_j + P_c)$; from which we can easily derive 
that $i+N'_G=i+N'_H$, hence $N'_G=N'_H$. 

{}From the definition of $P_c$, $\exists \alpha, \beta \in \mathbb{N}$
s. t. $\alpha P_H = \beta P_G$. Moreover, since $N'_H = N'_G$, then
$\alpha N_H = \beta N_G$. Therefore $\frac{P_H}{N_H} =
\frac{P_G}{N_G}$.

\subsubsection{Property used in the proofs for set operations}
Let $\Gamma_1$ be $\frac{lcm(P_{G_1}, P_{G_2})N_{G_1}}{P_{G_1}}$ and
$\Gamma_2$ be $\frac{lcm(P_{G_1}, P_{G_2})N_{G_2}}{P_{G_2}}$. Since
$G_1$ and $G_2$ are aligned subgranularity of a certain granularity
$H$, from Proposition~\ref{prop:SetOperations1} we can easily derive
that $\Gamma_1 = \Gamma_2$.

\subsection{Proof of Proposition~\ref{pr:setOperation}}
\subsubsection{Part 1}
\smallskip\noindent \textbf{Union}.
Let $\Gamma_1$ be $\frac{lcm(P_{G_1}, P_{G_2})N_{G_1}}{P_{G_1}}$ and
$\Gamma_2$ be $\frac{lcm(P_{G_1}, P_{G_2})N_{G_2}}{P_{G_2}}$.
The thesis will be proved by showing that $\forall i \in \Lset{G'}$ if,
$G'(i)=\bigcup_{r=0}^{k} \bot(i_r)$, then $G'(i+\Delta)=
\bigcup_{r=0}^{k}\bot(i_r + lcm(P_{G_1}, P_{G_2}))$ with $\Delta =
\Gamma_1 = \Gamma_2$. Since ${\cal L}_{G'} = {\cal L}_{G_1} \cup {\cal
  L}_{G_2}$, two cases will be considered:
\begin{itemize}
\item $\forall i \in {\cal L}_{G_1} \ G'(i)=G_1(i)=\bigcup_{r=0}^{k}
  \bot(i_r)$. {}From the periodicity of $G_1$, $G_1(i+\Gamma_1) =
  \bigcup_{r=0}^{k}\bot(i_r + lcm(P_{G_1}, P_{G_2}))$; hence
  $G'(i+\Gamma_1) = \bigcup_{r=0}^{k}\bot(i_r + lcm(P_{G_1},
  P_{G_2}))$.
\item $\forall i \in {\cal L}_{G_2} - {\cal L}_{G_1} \ 
  G'(i)=G_2(i)=\bigcup_{r=0}^{k} \bot(i_r)$. {}From the periodicity of
  $G_2$, $G_2(i+\Gamma_2) = \bigcup_{r=0}^{k}\bot(i_r + lcm(P_{G_1},
  P_{G_2}))$; hence $G'(i+\Gamma_2) = \bigcup_{r=0}^{k}\bot(i_r +
  lcm(P_{G_1}, P_{G_2}))$.

\end{itemize}

Since $\Gamma_1=\Gamma_2$, then $\forall i \in {\cal L}_{G'}$ if
$G'(i)=\bigcup_{r=0}^{k} \bot(i_r)$, then $G'(i+\Gamma_1) =
G'(i+\Gamma_2) = \bigcup_{r=0}^{k}\bot(i_r + lcm(P_{G_1},
P_{G_2}))$. Hence, by definition of $\gpinto$, we have the thesis.

\smallskip\noindent \textbf{Intersect}.
$\forall i \in {\cal L}_{G'}={\cal L}_{G_1} \cap {\cal L}_{G_2} \;
G'(i)=G_1(i) = \bigcup_{r=0}^{k}\bot(i_r)$. {}From the periodicity of
$G_1$ and $G_2$, $i+\Gamma_1 \in {\cal L}_{G_1}$ e $i+\Gamma_2 \in
{\cal L}_{G_2}$; since $\Gamma_1 = \Gamma_2$, then $i+\Gamma_1 \in
{\cal L}_{G'}$.  Moreover
$G'(i+\Gamma_1)=G_1(i+\Gamma_1)=\bigcup_{r=0}^{k}\bot(i_r+lcm(P_{G_1},
P_{G_2}))$; hence, by the definition of $\gpinto$, we have the thesis.

\smallskip\noindent \textbf{Difference}.
$\forall i \in {\cal L}_{G'}={\cal L}_{G_1}-{\cal L}_{G_2} \;
G'(i)=G_1(i) = \bigcup_{r=0}^{k}\bot(i_r)$. Since $i \in {\cal
  L}_{G_1}$ from the periodicity of $G_1$ $i+\Gamma_1 \in {\cal
  L}_{G_1}$. Since $i \notin {\cal L}_{G_2}$, from the periodicity of
$G_2$, $i+\Gamma_2 \notin {\cal L}_{G_2}$ (if it would exists
$i+\Gamma_2 \in {\cal L}_{G_2}$, from periodicity of $G_2$ would
exists $i \in {\cal L}_{G_2}$ that is not possible for hypothesis).
Hence $i+\Gamma_1 \in {\cal L}_{G'}$.  Moreover
$G'(i+\Gamma_1)=G_1(i+\Gamma_1)=\bigcup_{r=0}^{k}\bot(i_r+lcm(P_{G_1},
P_{G_2}))$; hence, by the definition of $\gpinto$, we have the thesis.

\subsubsection{Part 2}
Let $\Ltil{G'}= \Lcap{G_1}^{P_{G'}} \cup \Lcap{G_2}^{P_{G'}}$.

We show that $\Ltil{G'} = \Lcap{G'}$ by proving that 
(1) $\Ltil{G'} \subseteq \Lcap{G'}$ and
(2) $\Ltil{G'} \supseteq \Lcap{G'}$.

(1) Suppose by contradiction that $\exists k \in \Ltil{G'} \setminus \Lcap{G'}$. Since $k \in \Ltil{G'}$ then $k \in \Lcap{G_1}^{P_{G'}}$ or $k \in \Lcap{G_2}^{P_{G'}}$. Suppose that $k \in \Lcap{G_1}^{P_{G'}}$ (the proof is analogous if $k \in \Lcap{G_2}^{P_{G'}}$). Since $k \in \Lcap{G_1}^{P_{G'}}$, then $\exists \; 0<h<P_{G'}$ s.t. $\up{h}^{G'} = k$. Since, by definition of the \textsl{Union} operation $G'(k) = G_1(k)$, then $\up{h}^{G'} = k$. Hence, by definition of $\Lcap{}$, $k \in \Lcap{G'}$ in contrast with the hypothesis.

(2) Suppose by contradiction that $\exists k \in \Lcap{G'} \setminus \Ltil{G'}$. Since $k \in \Lcap{G'}$, then, by definition of $\Lcap{}$, $\exists \; 0<h<P_{G'}$ s.t. $\up{h}^{G'}=k$. Moreover, by definition of the \textsl{Union} operation, $k \in \Lset{G_1}$ or $k \in \Lset{G_2}$. Suppose that $k \in \Lcap{G_1}^{P_{G'}}$ (the proof is analogous if $k \in \Lcap{G_2}^{P_{G'}}$). By definition of the \textsl{Union} operation, $G'(k) = G_1(k)$ therefore $\up{h}^{G_1} = k$ and so, by definition of $\Lcap{}$, $k \in \Lcap{G_1}^{P_{G'}}$. Hence, by definition of $\Ltil{}$, $k \in \Ltil{G'}$ in contradiction with the hypothesis.

\bibliography{bib}
\bibliographystyle{theapa}

\end{document}